\documentclass[dvipsnames]{article}

\usepackage[margin=1.0in]{geometry}
\usepackage{amsmath,amsthm,amssymb, mathtools, thmtools}
\usepackage{graphicx}
\usepackage{xcolor}
\usepackage{hyperref}

\hypersetup{
    colorlinks=true,
    linkcolor=Cerulean,
    filecolor=magenta,      
    urlcolor=Cyan,
    citecolor=Cerulean
    }

\usepackage{enumitem}
\PassOptionsToPackage{sort&compress}{natbib}
\usepackage[round, sort&compress]{natbib}
\usepackage{tikz}
\usepackage{thm-restate}
\usepackage{subfig}
\usepackage{algorithm}
\usepackage{algorithmic}
\usepackage{booktabs}
\usepackage{enumitem}
\theoremstyle{definition}
\newtheorem{theorem}{Theorem}
\newtheorem{lemma}{Lemma}
\newtheorem{proposition}{Proposition}
\newtheorem{assumption}{Assumption}
\newtheorem{definition}{Definition}
\newtheorem{corollary}{Corollary}
\newtheorem{remark}{Remark}

\usepackage{amsmath,amsfonts,bm}

\def\Secref#1{Section~\ref{#1}}

\def\eqref#1{Equation~\ref{#1}} %

\def\1{\bm{1}}

\def\vr{{\bm{r}}}

\def\vx{{\bm{x}}}

\def\mA{{\bm{A}}}

\DeclareMathAlphabet{\mathsfit}{\encodingdefault}{\sfdefault}{m}{sl}
\SetMathAlphabet{\mathsfit}{bold}{\encodingdefault}{\sfdefault}{bx}{n}

\def\gB{{\mathcal{B}}}

\def\gD{{\mathcal{D}}}

\def\gX{{\mathcal{X}}}
\def\gY{{\mathcal{Y}}}

\newcommand{\E}{\mathbb{E}}
\newcommand{\Ls}{\mathcal{L}}
\newcommand{\R}{\mathbb{R}}

\newcommand{\sigmoid}{\sigma}

\newcommand{\KL}{D_{\mathrm{KL}}}
\newcommand{\Var}{\mathrm{Var}}
\newcommand{\Bias}{\mathrm{Bias}}

\newcommand{\loss}{\ell}                        %
\newcommand{\Lf}{\mathcal{L}^f}                 %

\newcommand{\glab}{g^{n}}                       %
\newcommand{\gnf}{g^{n,f}}                      %
\newcommand{\gtNf}{g^{N,f}}            %
\newcommand{\dt}{d_t}                           %
\newcommand{\glam}{\hat{g}_t}                   %

\newcommand{\Bt}{B_t}                           %

\newcommand{\norm}[1]{\left\| #1 \right\|}
\newcommand{\inner}[2]{\left\langle #1,\, #2 \right\rangle}

\renewcommand{\eqref}[1]{Eq.~(\ref{#1})}

\date{} 

\let\cite\citep

\title{ABC-Align: Prediction-Powered Alignment with\\ Adaptive Bias Control}

\author{
Eric Frankel\thanks{Corresponding author. Email: {\tt ericsf@cs.washington.edu}.} \quad
Banghua Zhu \quad
Sewoong Oh$^{\dagger}$ \quad
Lillian J.~Ratliff$^{\dagger}$
\\[6pt]
University of Washington, Seattle
}

\newcommand{\dagfootnote}{%
  \renewcommand{\thefootnote}{$\dagger$}%
  \footnotetext{Equal advising.}%
  \renewcommand{\thefootnote}{\arabic{footnote}}%
}

\begin{document}

\maketitle
\dagfootnote
\begin{abstract}
\noindent
Language model post-training is often bottlenecked by the need for human-collected preference data, which is expensive and difficult to scale.
Reinforcement learning from AI feedback (RLAIF) style approaches that leverage pseudo labels offer an abundant alternative but introduce systematic biases that degrade downstream alignment.
Recent general-purpose semi-supervised methods correct for teacher bias using a small set of human-labeled examples, but suffer from high variance especially when human annotations are scarce.
To this end, we propose ABC-Align, leveraging abundant pseudo label signal to minimize variance and applying a lightweight, adaptive correction grounded in the human-labeled subset.
The correction strength is tuned automatically during training using plug-in estimates of the relevant bias--variance quantities.
On LLM alignment with RLHF, DPO, and GRPO where human feedback is scarce, we empirically demonstrate that ABC-Align achieves superior performance over prior semi-supervised baselines in a series of experiments
on an increasing scale.
\end{abstract}

\section{Introduction}
\label{sec:intro}

Recent advances in large language model (LLM) post-training have relied heavily on substantial amounts of human labels.
In most preference-based approaches like reinforcement learnign from human feedback  (RLHF)~\citep{Christiano2017DeepRL,Ziegler2019FineTuningLM,Ouyang2022TrainingLM}, direct preference optimization (DPO)~\citep{Rafailov2023DirectPO}, 
and their many variants~\citep{Ethayarajh2024KTOMA}, these labels take the form of pairwise comparisons.
Beyond pairwise feedback, outcome-based reinforcement learning (RL) recipes such as group relative preference optimization (GRPO)~\citep{Shao2024DeepSeekMathPT} have become a standard post-training tool for reasoning models, optimizing the policy against scalar reward signals---verifiable outcomes or a learned reward model trained from human-provided ground truths---rather than explicit preference pairs.
However, this dependence on human labels 
is a shared bottleneck: human annotation is expensive, slow, and marked by substantial inter-annotator
disagreement~\citep{Casper2023OpenPA,Zhang2024DivergingPW}.

A natural response is to leverage a language model---sometimes, the training model itself---as a source of labels on unlabeled data, potentially increasing 
the volume of training signal at low cost. This idea has been explored in various forms across the LLM post-training 
literature, including using a strong LLM as a preference judge~\citep{Lee2023RLAIFVR,Bai2022ConstitutionalAH}, training 
the policy to score its own outputs and iterating~\citep{Yuan2024SelfRewardingLM}, having the model discriminate its 
current generations from human demonstrations~\citep{Chen2024SelfPlayFC}, or distilling preferences from a teacher 
model directly into the policy~\citep{Tunstall2023ZephyrDD}. However, each suffers from various failure modes that limit their
practical utility on top of human-labeled baselines. For example, LLM judges often display systematic biases towards
verbosity and sycophancy~\citep{Zheng2023JudgingLW,Dubois2024LengthControlledAA,Wei2023SimpleSD,Wang2023LargeLM}, while self-play methods plateau or degrade after entropy collapse 
and reward hacking ~\citep{Chen2024SelfPlayFC,Dong2024RLHFWF}.
Collectively, these challenges motivate the following question:

\begin{center}
\textit{When ground-truth labels  are scarce, how can we best leverage abundant but noisy pseudo labels from additional sources to improve LLM post-training?}
\end{center}

Although this problem 
has been extensively studied in  classical semi-supervised learning~\citep{Lee2013PseudoLabelT,Arazo2019PseudoLabelingAC,Sohn2020FixMatchSS,Zhang2021FlexMatchBS}, typical approaches adhere to a common design principle: the gradient estimate must remain unbiased. 
Within this family of unbiased estimators, a series of approaches have been proposed to minimize the variance.
For example, a recent line of work in  prediction-powered inference (PPI)~\citep{Angelopoulos2023PredictionpoweredI,Angelopoulos2023PPIEP} offers a general-purpose framework
for debiasing pseudo labels using a small set of trusted labels, providing provable guarantees on bias and variance. 
PPI first collects pseudo labels on the \emph{labeled} examples where ground truth is available, which are used to estimate the pseudo labeler's bias, and de-biases the pseudo labeler's predictions.  
For statistical estimation,
this yields an estimator that is provably unbiased and has lower variance than using labeled data alone.
Recently, PPI has been extended from statistical estimation to model training: PP-SSL~\citep{Shoham2025PredictionPoweredSL} constructs an
unbiased gradient estimator that tunes a variance-reduction parameter online, and PPI-SVRG~\citep{Ao2026PPISVRGUP} unifies the
PPI correction with stochastic variance reduction.
We survey these threads---PPI and its training-time extensions, classical semi-supervised learning, and pseudo-label-based LLM post-training---more comprehensively in \Secref{sec:related_work}.

We observe theoretically in Corollary~\ref{cor:variance_floor} and empirically in \Secref{sec:exp_synthetic} that, in the covariance decomposition of the prediction-powered gradient, enforcing unbiasedness creates a structural consequence:
except in a special perfect-correlation regime, the estimator inherits a variance floor of $\Theta(1/n)$ when there are $n$ human labeled examples and thus  cannot be reduced by adding more pseudo labels. 
This statement concerns the exact variance of the gradient estimator itself, rather than the looser upper bounds typically used in PP-SSL analyses.
To this end, we propose ABC-Align: a structural change that allows one to gracefully trade-off bias and variance.  
ABC-Align %
minimizes a %
bias--variance surrogate by allowing controlled bias in exchange for substantial variance reduction.
Crucially, unlike PP-SSL, our method can move much closer to pseudo-label-dominated $\Theta(1/N)$ scaling when the teacher gradient bias is small, at the cost of an explicit bias term ($N$ is the number of pseudo-labeled examples). In the ideal perfect-teacher case this removes the labeled-data variance contribution entirely; more generally it yields a controlled bias--variance trade-off. We formalize this estimator and derive its properties in \Secref{sec:abc_ssl}. Our main contributions are as follows. %
\begin{enumerate}[leftmargin=1cm, itemsep=0pt]
    \item \textbf{A biased but lower-variance gradient estimator.} For general Semi-Supervised Learning (SSL), we introduce ABC-SSL  (Algorithm~\ref{alg:abcssl}) with two parameters that can flexibly exchange bias and variance, equipped with an adaptive bias controller (ABC) that automatically tunes the two parameters in an online manner based on each mini-batch statistics  (\Secref{sec:abc_ssl}).

    \item \textbf{Oracle characterization and convergence guarantees.} 
    The controller is informed by  our characterization of the oracle per-step bias--variance trade-off of the gradient estimator. Further, we analyze convergence of ABC-SSL for smooth non-convex objectives %
    (Theorem~\ref{thm:main}). In the regime where the teacher gradient bias is small, this convergence is dominated by the abundant pseudo-labeled set rather than the scarce human-labeled set, improving upon the
unbiased methods up to a small controlled bias term %
(Corollary~\ref{cor:good_teacher}). %

    \item \textbf{Empirical validation.} We propose ABC-Align, a special instantiation of ABC-SSL under scarce human feedback tailored for each LLM alignment task, including reward model training, DPO-based policy alignment, and GRPO. To the best of our knowledge, we are the first to scale up a prediction-powered approach to LLM post-training and demonstrate significant improvement %
    over prior semi-supervised methods in the regimes we study. The largest experiment we run is on full parameter fine-tuning Llama-3.2-3B with DPO on UltraFeedback
and with GRPO on GSM8K-style.
\end{enumerate}

Our code is available at \url{https://github.com/SewoongLab/abc-align}.

\section{Background and Problem Formulation}
\label{sec:semi_supervised}

We formalize {\em prediction-powered alignment} by grounding our method in two related  areas of literature: semi-supervised learning and prediction-powered methods.
Throughout, $\norm{\cdot}$ denotes the $\ell_2$ norm and $\E_t[\cdot]$ denotes the conditional expectation given the filtration $\mathcal{F}_{t-1}$ (all randomness before iteration $t$, formally defined in \Secref{sec:estimator}), and $\prod_D$ to denote projection onto a set $D$.

\paragraph{Setup.}
Let $\loss_{\theta}(\cdot, \cdot): \gX \times \gY \to \R$ denote a loss function parameterized by $\theta\in\R^d$, where $x \in \gX$ is an input and $y \in \gY$ is a label.
We aim to find a parameter $\theta^*\in\Theta$ that minimizes the population risk
\begin{equation}
    \theta^* \;=\; \arg\min_{\theta \in \Theta} \;\Ls(\theta)\;,\; \quad \Ls(\theta) \;:=\; \E_{(x,y) \sim P}\!\bigl[\loss_{\theta}(x, y)\bigr],
    \label{eq:pop_risk}
\end{equation}
under the data-generating distribution $P:=P_X\times P_{Y|X}$.
We assume that a learner has access to a {\em labeled} dataset $\gD_{\mathrm{lab}} = \{(x_i, y_i)\}_{i=1}^n$, drawn i.i.d.\ from $P$, an {\em unlabeled} dataset $\gD_{\mathrm{unl}} = \{x_i\}_{i=n+1}^{n+N}$ of inputs drawn i.i.d.\ from the marginal $P_X$, and access to a pseudo-labeling function $f : \gX \to \gY$ that can be evaluated on both the labeled and unlabeled inputs.
We focus on the regime where labeled data is scarce ($n \ll N$), which frequently occurs in LLM post-training domains where the cost of obtaining human-labeled data is high.
In alignment settings, $y$ may represent a human preference label on a pair of examples $x=(x_1,x_2)$ and $f(x)$ may represent a preference ordering from an LLM; we defer concrete instantiations to \Secref{sec:abc_align}.

\paragraph{Semi-Supervised Learning.}
The classical approach to this problem, {\em supervised learning}, ignores the unlabeled data and optimizes the empirical risk on the labeled dataset:
\begin{equation}
    \Ls^{\mathrm{SL}}(\theta) \;:=\; L_n(\theta) \;:=\; \frac{1}{n} \sum_{i=1}^{n} \loss_{\theta}(x_i, y_i).
    \label{eq:tl_loss}
\end{equation}
While unbiased for $\Ls(\theta)$, supervised learning fails to leverage the abundant unlabeled dataset and the information contained in the pseudo-labels.
In contrast, {\em semi-supervised learning} treats pseudo-labels as ground truth and minimizes the combined loss, equally weighting all samples:
\begin{equation}
    \Ls^{\mathrm{SSL}}(\theta) \; := \; \frac{n}{n+N}L_n(\theta) + \frac{N}{n+N}L_{N}^f(\theta)\;,\quad \text{ where } \quad L_{N}^f(\theta) = \frac{1}{N} \sum_{i=n+1}^{n+N} \loss_{\theta}(x_i, f(x_i)) \;.
    \label{eq:sl_loss}
\end{equation}
This loss is biased for $\Ls(\theta)$ unless  the pseudo-labels are perfect (i.e., $f(x) = y$ a.s. for all $x\in \gX$); inaccurate pseudo-labels introduce bias that degrades the learned $\theta$, a failure mode documented across classical pseudo-labeling methods~\citep{Lee2013PseudoLabelT,Arazo2019PseudoLabelingAC} and more recent LLM alignment approaches~\citep{Bai2022ConstitutionalAH, Lee2023RLAIFVR}.
Thus, the central tension in semi-supervised learning has been how to exploit the large unlabeled dataset and its pseudo-labels without introducing bias from the pseudo-labels.

\paragraph{Prediction-Powered Inference.}
The prediction-powered inference (PPI) framework of \citet{Angelopoulos2023PredictionpoweredI,Angelopoulos2023PPIEP} resolves this tension via a zero-mean {\em control variate}. 
Because $f$ can be evaluated on the labeled data, the difference
\begin{equation*}
    L_N^f(\theta) - L_n^f(\theta)\;,\;\;\;\text{ where } L_n^f(\theta) \;=\; \frac{1}{n} \sum_{i=1}^{n} \loss_{\theta}(x_i, f(x_i)),
\end{equation*}
is the difference of two empirical averages from i.i.d.~samples, making it zero mean over $P$ for all $\theta$; therefore, $L_N^f(\theta) - L_n^f(\theta)$ can be added to the supervised loss $L_n(\theta)$ with any weight $\lambda$ without changing its expectation.
This yields a family of $\lambda$-parametrized {\em prediction-powered} losses,
\begin{equation}
    \Ls(\theta; \lambda) := L_n(\theta) + \lambda(L_N^f(\theta) - L_n^f(\theta)), \qquad \lambda \in \R,
    \label{eq:unified_lambda}
\end{equation}
that subsumes the prediction-powered estimators most relevant to our setting: $\lambda = 0$ recovers supervised learning, $\lambda = 1$ recovers vanilla PPI~\citep{Angelopoulos2023PredictionpoweredI}, and $\lambda = N/(N+n)$ recovers the doubly robust loss of \citet{Zhu2023DoublyRS}.
\citet{Angelopoulos2023PPIEP} select $\lambda\in[0,1]$ offline to minimize the asymptotic variance of their estimator $\hat\theta$ for $\theta^*$.

\paragraph{Prediction-Powered Training.} 
In the statistical settings of \citet{Angelopoulos2023PredictionpoweredI} and \citet{Zhu2023DoublyRS}, the minimizer $\theta^*$ often admits a closed-form expression, and therefore \eqref{eq:unified_lambda} can be minimized in one shot over the full datasets $\gD_{\mathrm{lab}}$ and $\gD_{\mathrm{unl}}$ via a plugin estimate.
In contrast, in the context of {\em model training}, $\theta^*$ is a predictive model that cannot be solved for in closed form and must instead be learned iteratively.
To that end, the estimator of interest is the stochastic gradient of the loss, which is computed on a mini-batch of data from two sources: labeled and unlabelled.
PP-SSL~\citep{Shoham2025PredictionPoweredSL} makes this approach explicit: at each iteration $t$, given a batch of $n$ labeled samples $\gB_t^{\mathrm{lab}}:= \{x_i, y_i\}_{i=1}^n$ and $N$ unlabeled samples $\gB_t^{\mathrm{unl}} := \{x_i\}_{i=n+1}^{n+N}$, it forms the following three batch gradients
\begin{equation*}
    \glab_t := \frac{1}{n}\sum_{i=1}^n \nabla \loss_{\theta_t}(x_i, y_i),\; \;\; \gnf_t := \frac{1}{n}\sum_{i=1}^n \nabla \loss_{\theta_t}(x_i, f(x_i)), \; \; \;\gtNf_t := \frac{1}{N}\sum_{i=n+1}^{n+N} \nabla \loss_{\theta_t}(x_i, f(x_i)),
\end{equation*}
and the \emph{prediction-powered gradient}
\begin{equation}
\label{eq:ppssl_grad}
    g^{\lambda}_t \;:=\; \glab_t \;+\; \lambda_t\bigl(\gtNf_t - \gnf_t\bigr), \qquad \lambda_t \in [0,1].
\end{equation}
This is the gradient analogue of \eqref{eq:unified_lambda}: the correction $\gtNf_t - \gnf_t$ has zero conditional mean, so $g_t^{\lambda}$ is an unbiased estimator of the true population gradient $\nabla \Ls(\theta_t)$ for all predictable $\lambda_t$.
PP-SSL tunes $\lambda_t$ online using AdaGrad-Norm to minimize the variance of $g_t^\lambda$ at each step and subtracts the (scaled) prediction-powered gradient from $\theta_t$; when $\lambda_t = 0$ for all $t$, it recovers labeled-only SGD, and $\lambda_t = 1$ recovers the vanilla PPI in its gradient form.
Note that this family is unbiased by construction for all $\lambda_t$, which we relax in \Secref{sec:abc_ssl} to design a new family of (potentially biased) gradient estimators and achieve an improved bias--variance trade-off.

\section{ABC-SSL: Adaptive Bias Control for Semi-Supervised Learning}
\label{sec:abc_ssl}
Every member of the prediction-powered family $\{g_t^\lambda\}$ in~\eqref{eq:ppssl_grad} places unit weight on the labeled gradient $\glab_t$ in order to achieve unbiasedness, which limits how much variance can be reduced: the only free parameter, $\lambda_t$, controls the contribution of the zero-mean correction $\gtNf_t - \gnf_t$ but cannot trade ground-truth signal for pseudo-label signal.
We ask whether \emph{relaxing this constraint}---admitting estimators that can downweight $\glab_t$ in exchange for a controlled bias---yields a strictly better bias--variance trade-off.
We answer affirmatively by introducing ABC-SSL, built on a two-parameter family of gradient estimators that contains PP-SSL and the doubly robust gradient as special cases (\Secref{sec:estimator}).
We characterize its oracle-optimal parameter choices and quantify when the adaptively chosen bias strictly beats the unbiased family's variance (\Secref{sec:oracle}), including a $\Theta(1/n)$ variance floor for the unbiased slice as Corollary~\ref{cor:variance_floor}. In a practical setting where we do not have access to an oracle, 
we give an online controller that learns the two parameters jointly from samples and achieves $O(\sqrt{T})$ regret against the best fixed pair. Combining it with a non-convex SGD convergence guarantee  gives an end-to-end rate that approaches the oracle (\Secref{sec:plugin}).
Notation carries over from \Secref{sec:semi_supervised}, and proofs are deferred to Appendix~\ref{app:abc_ssl_proofs}.

\begin{figure}[h!]
    \centering
    \includegraphics[width=\textwidth]{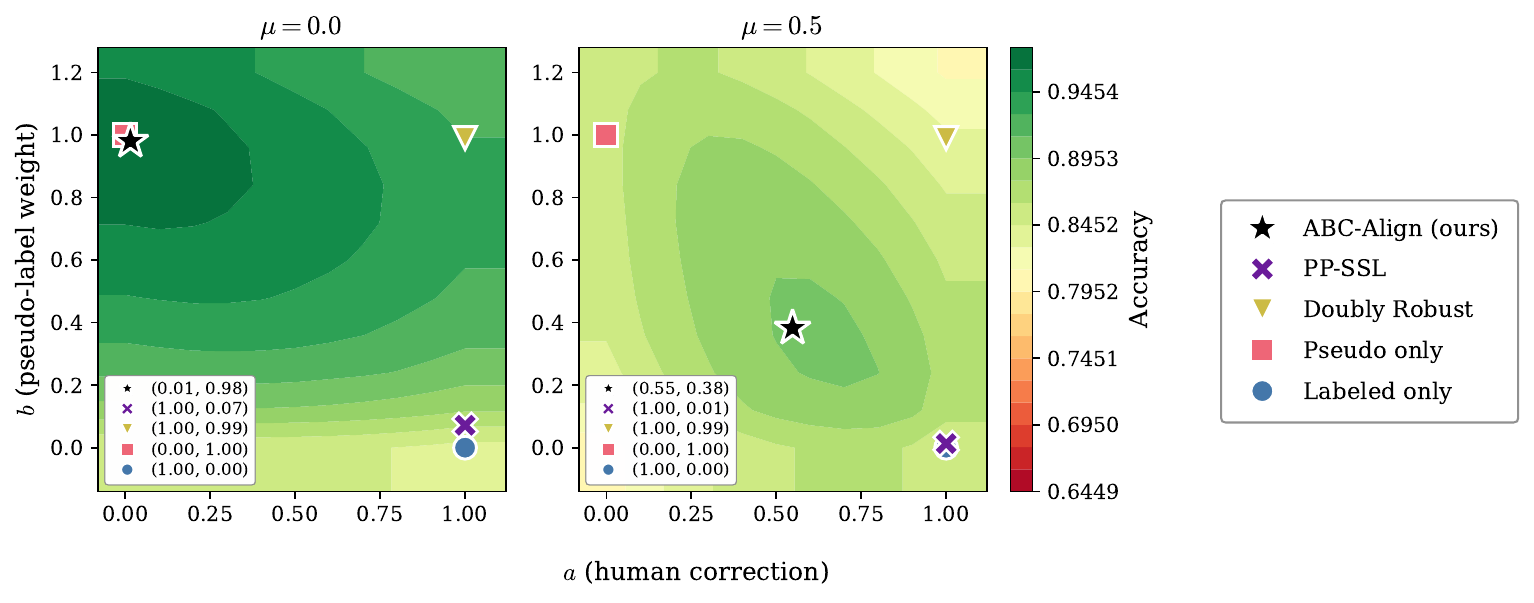}
    \caption{ Each point on the $(a, b)$ parameter plane corresponds to a model trained with a fixed pair of $(a,b)$, and the color indicated the preference accuracy achieved by the model at two teacher-bias levels: $\mu = 0$ (left) and $\mu = 0.5$ (right). 
    Each baseline corresponds to a fixed point in the plane, e.g., 
    Pseudo-only at $(0, 1)$. All the PPI-style estimators fix $a=1.0$ to eliminate the bias, resulting in suboptimal accuracy for any choices of $b$, including  
    Labeled-only at $(1, 0)$, 
    Doubly Robust at $(1, N/(N{+}n))$, and PP-SSL at $(1, \lambda_T)$---none of which coincide with the empirical maximizer in either panel. 
    Instead, ABC-Align ($\star$ shown at the final pair $(a_T,b_T)$) adaptively selects both $(a_t, b_t)$ during training and converges to the neighborhood of this maximizer in both regimes: the selected configuration approaches Pseudo-only when the teacher is accurate (left) and an interior bias--variance trade-off when the teacher is biased (right).}
    \label{fig:teaser}
\end{figure}

\subsection{An Adaptive Bias Control (ABC) Gradient Estimator}
\label{sec:estimator}

At iteration $t$, the learner draws labeled and unlabeled batches $\mathcal{B}_t^{\mathrm{lab}} \sim P^n$ and $\mathcal{B}_t^{\mathrm{unl}} \sim P_X^N$ independently of all prior randomness.
Let
\[
    \mathcal{F}_{t-1} \;:=\; \sigma\bigl(\theta_1,\,(a_1, b_1),\, \{\mathcal{B}_s^{\mathrm{lab}}, \mathcal{B}_s^{\mathrm{unl}}\}_{s < t}\bigr)
\]
denote the $\sigma$-algebra of all randomness through round $t-1$.
By construction of the update rule, $\theta_t$ and $(a_t, b_t)$ are $\mathcal{F}_{t-1}$-measurable; we call any such sequence $\{(a_t, b_t)\}_{t \ge 1}$ \emph{predictable}~\citep{Rakhlin2012OnlineLW}.
Write $\E_t[\cdot] := \E[\cdot \mid \mathcal{F}_{t-1}]$; under this convention, $\E_t$ averages over the fresh round-$t$ batches alone, holding $\theta_t$ and $(a_t, b_t)$ fixed.
Define the conditional bias $\Bias_t(\hat g) := \E_t[\hat g] - \nabla \Ls(\theta_t)$.

\begin{definition}[ABC estimator]
\label{def:abc_estimator}
Define the correction signals
\begin{equation}
    \dt := \glab_t - \gnf_t, \qquad c_t := \gtNf_t - \gnf_t,
    \label{eq:dc_def}
\end{equation}
and, for any predictable sequence $\{(a_t, b_t)\}$, the $(a_t, b_t)$-parameterized \emph{ABC estimator}, 
\begin{equation}
    \glam \;:=\; \gnf_t + a_t\,\dt + b_t\,c_t \;=\; (1 - a_t - b_t)\,\gnf_t + a_t\,\glab_t + b_t\,\gtNf_t\;.
    \label{eq:abc_grad}
\end{equation}
\end{definition}

Each baseline of \Secref{sec:semi_supervised} corresponds to a fixed slice of this family as shown in Figure~\ref{fig:teaser}: PP-SSL~\citep{Shoham2025PredictionPoweredSL} fixes $a_t = 1$ and tunes $b_t = \lambda_t$; the doubly robust gradient~\citep{Zhu2023DoublyRS} fixes $(a_t, b_t) = (1, N/(N+n))$; and naive pseudo-only training is $(a_t,b_t)=(0, 1)$.
The novelty of~\eqref{eq:abc_grad} is the adaptive $a_t$, which enables a controlled bias in exchange for additional variance reduction.
The following lemma makes the role of each parameter precise. See Appendix~\ref{app:lem_bias} for a proof.

\begin{restatable}[Bias decoupling]{lemma}{lembias}
\label{lem:bias}
Let the pseudolabel gradient bias be $B(\theta) := \nabla \Lf(\theta) - \nabla \Ls(\theta)$ with $\Lf(\theta) := \E_{x \sim P_X}[\loss(\theta; x, f(x))]$, and write $\Bt := \norm{B(\theta_t)}$.
The ABC estimator~\eqref{eq:abc_grad} satisfies $\Bias_t(\glam) = (1 - a_t)\,B(\theta_t)$.
\end{restatable}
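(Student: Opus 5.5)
The plan is to compute $\E_t[\glam]$ directly from the second form of \eqref{eq:abc_grad}, i.e.\ $\glam = (1-a_t-b_t)\gnf_t + a_t\glab_t + b_t\gtNf_t$. Two facts from the setup do the work. First, $\theta_t$ and $(a_t,b_t)$ are $\mathcal{F}_{t-1}$-measurable, so the coefficients can be pulled out of $\E_t$. Second, the round-$t$ batches are fresh and independent of $\mathcal{F}_{t-1}$. Together these give
\[
\E_t[\glam] = (1-a_t-b_t)\,\E_t[\gnf_t] + a_t\,\E_t[\glab_t] + b_t\,\E_t[\gtNf_t].
\]

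Next I would identify each conditional mean. The labeled samples $(x_i,y_i)$ are i.i.d.\ from $P$ given $\mathcal{F}_{t-1}$, so $\E_t[\glab_t] = \E_{(x,y)\sim P}[\nabla \loss_{\theta_t}(x,y)] = \nabla\Ls(\theta_t)$; this uses the exchange of gradient and expectation, which is implicitly assumed throughout (standard regularity, e.g.\ dominated convergence). Only the marginal of $x_i$ enters $\gnf_t$, and it is $P_X$, so $\E_t[\gnf_t] = \nabla\Lf(\theta_t)$. The unlabeled inputs are drawn from $P_X$, so $\E_t[\gtNf_t] = \nabla\Lf(\theta_t)$ as well. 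The fact that $\gnf_t$ and $\gtNf_t$ have the same mean is exactly the zero-mean property of the control variate $c_t$.

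Substituting, the $b_t$ terms cancel:
\[
\E_t[\glam] = (1-a_t)\nabla\Lf(\theta_t) + a_t\nabla\Ls(\theta_t).
\]
Subtracting $\nabla\Ls(\theta_t)$ then gives $\Bias_t(\glam) = (1-a_t)\bigl(\nabla\Lf(\theta_t)-\nabla\Ls(\theta_t)\bigr) = (1-a_t)B(\theta_t)$. Equivalently, in the first form of \eqref{eq:abc_grad}, $\E_t[c_t]=0$ and $\E_t[\dt] = -B(\theta_t)$, so $\E_t[\glam] = \nabla\Lf(\theta_t) - a_t B(\theta_t)$, which is the same result.

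There is no real obstacle. The only points needing care are measure-theoretic: justifying that the predictable coefficients factor out of the conditional expectation, and that conditioning on $\mathcal{F}_{t-1}$ reduces to an unconditional expectation over the fresh batches evaluated at the fixed $\theta_t$, which is a freezing/independence lemma. I would state integrability of $\nabla\loss_\theta$ as the standing assumption that licenses both steps.
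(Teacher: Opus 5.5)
Your proposal is correct and essentially matches the paper's proof: both pull the predictable coefficients out of $\E_t$, use $\E_t[\glab_t]=\nabla\Ls(\theta_t)$ and $\E_t[\gnf_t]=\E_t[\gtNf_t]=\nabla\Lf(\theta_t)$, and read off the bias. The paper works in the first form $\gnf_t + a_t\dt + b_t c_t$ with $\E_t[\dt]=-B(\theta_t)$ and $\E_t[c_t]=0$, which is the alternative you also note. Your remark on exchanging gradient and expectation is a harmless regularity point that the paper leaves implicit.
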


The conditional bias of $\glam$ depends only on $a_t$, while $b_t$ scales the zero-mean correction $c_t$ and contributes only to the variance.
PP-SSL and the doubly robust estimator both enforce $a_t \equiv 1$, preserving unbiasedness but restricting variance reduction to the single parameter $b_t$; the ABC estimator instead treats $(a_t, b_t)$ jointly, admitting the controlled bias $(1 - a_t)B(\theta_t)$ in exchange for additional variance reduction.
To formalize this trade-off, we introduce the second-moment statistics, 
\begin{equation}
    \sigma^2 \;:=\; \E\!\norm{\nabla \loss(\theta; x, y) - \nabla \Ls(\theta)}^2,
    \quad
    \sigma_f^2 \;:=\; \E\!\norm{\nabla \loss(\theta; x, f(x)) - \nabla \Lf(\theta)}^2\;, \text{ and}
    \label{eq:sigma_sq}
\end{equation}
\begin{equation}
    C \;:=\; \E\!\inner{\nabla \loss(\theta; x, y) - \nabla \Ls(\theta)}{\nabla \loss(\theta; x, f(x)) - \nabla \Lf(\theta)},
    \label{eq:C_def}
\end{equation}
with $C^2 \le \sigma^2 \sigma_f^2$ by Cauchy--Schwarz; we suppress $\theta$ in these notations since it is  clear from context.
Here $\sigma^2$ and $\sigma_f^2$ are the per-sample variances of the labeled and pseudolabeled gradients, and $C$ is their cross-covariance on the \emph{same} input $x$.
Define the per-step MSE
\begin{equation}
    V_t(a, b) \;:=\; \E_t\!\norm{\glam(a, b) - \nabla \Ls(\theta_t)}^2.
    \label{eq:mse_def}
\end{equation}

\begin{restatable}[Bias--variance decomposition]{proposition}{propmse}
\label{prop:mse}
For the ABC estimator in~\eqref{eq:abc_grad}, the conditional mean-squared error decomposes as
\begin{equation}
    V_t(a_t, b_t) \;=\; (1 - a_t)^2 \Bt^2 \;+\; \frac{(1\!-\!a_t\!-\!b_t)^2 \sigma_f^2 + a_t^2 \sigma^2 + 2a_t(1\!-\!a_t\!-\!b_t)\,C}{n} \;+\; \frac{b_t^2\,\sigma_f^2}{N},
    \label{eq:mse}
\end{equation}
where the first term is $\norm{\Bias_t(\glam)}^2$ (Lemma~\ref{lem:bias}) and the remainder is $\Var_t(\glam)$.
\end{restatable}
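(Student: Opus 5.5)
The plan is the standard bias--variance split followed by a direct variance computation that exploits the independence of the labeled and unlabeled batches. Since $(a_t,b_t)$ and $\theta_t$ are $\mathcal{F}_{t-1}$-measurable, they are constants under $\E_t$. First I write
\[
V_t(a_t,b_t) \;=\; \norm{\E_t[\glam]-\nabla\Ls(\theta_t)}^2 + \E_t\norm{\glam-\E_t[\glam]}^2 .
\]
The cross term vanishes because $\glam-\E_t[\glam]$ has zero conditional mean. By Lemma~\ref{lem:bias}, the first term is $(1-a_t)^2\Bt^2$. It remains to show that $\Var_t(\glam)$ equals the last two terms of~\eqref{eq:mse}.

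For the variance, I use the second form in~\eqref{eq:abc_grad} and group the terms by batch:
\[
\glam = \underbrace{(1-a_t-b_t)\gnf_t + a_t\glab_t}_{=:U_t} + \underbrace{b_t\gtNf_t}_{=:W_t}.
\]
The statistic $U_t$ depends only on $\gB_t^{\mathrm{lab}}$, and $W_t$ depends only on $\gB_t^{\mathrm{unl}}$. These batches are drawn independently given $\mathcal{F}_{t-1}$, so the centered versions are uncorrelated and $\Var_t(\glam)=\Var_t(U_t)+\Var_t(W_t)$. For $W_t$, the $N$ summands are i.i.d.\ with per-sample centered second moment $\sigma_f^2$, because the mean of $\nabla\loss(\theta_t;x,f(x))$ over $x\sim P_X$ is $\nabla\Lf(\theta_t)$. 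Hence $\Var_t(W_t)=b_t^2\sigma_f^2/N$.

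For $U_t$, I write it as $\frac1n\sum_{i=1}^n Z_i$, where
\[
Z_i=(1-a_t-b_t)\,\nabla\loss(\theta_t;x_i,f(x_i))+a_t\,\nabla\loss(\theta_t;x_i,y_i).
\]
The $Z_i$ are i.i.d., so $\Var_t(U_t)=\frac1n\E\norm{Z_i-\E Z_i}^2$. Centering each gradient at $\nabla\Lf$ and $\nabla\Ls$ respectively and expanding the square gives
\[
\E\norm{Z_i-\E Z_i}^2=(1-a_t-b_t)^2\sigma_f^2 + a_t^2\sigma^2 + 2a_t(1-a_t-b_t)\,C,
\]
using the definitions~\eqref{eq:sigma_sq} and~\eqref{eq:C_def} evaluated at $\theta_t$. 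The cross term is exactly $C$, because both gradients are evaluated on the same input $x_i$. Summing the pieces gives~\eqref{eq:mse}.

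The only delicate point is bookkeeping rather than a real obstacle. One must not expand $\glam$ in the form $\gnf_t+a_t\dt+b_tc_t$ naively, since $\dt$ and $c_t$ share $\gnf_t$ and are therefore correlated. Grouping by batch, as above, avoids this. The other care point is that $\sigma^2,\sigma_f^2,C$ are understood at $\theta=\theta_t$, which is legitimate under $\E_t$. If needed, I would also note that second moments are assumed finite so that all expectations exist.
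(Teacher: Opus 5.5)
Your proposal is correct and follows essentially the same route as the paper's proof. It uses the bias--variance split with Lemma~\ref{lem:bias} supplying $(1-a_t)^2\Bt^2$, then groups $\glam$ by batch so that independence isolates $b_t^2\sigma_f^2/N$, and finally expands the i.i.d.\ per-sample covariance of $(1-a_t-b_t)\nabla\loss(\theta_t;x_i,f(x_i)) + a_t\nabla\loss(\theta_t;x_i,y_i)$ to obtain the $1/n$ term with cross-covariance $C$.
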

See Appendix~\ref{app:prop_mse} for a proof. The labeled batch contributes the $1/n$ term, in which the cross covariance $C$ appears because $\glab_t$ and $\gnf_t$ are evaluated on the same $n$ inputs; the fresh unlabeled batch contributes $b_t^2 \sigma_f^2/N$ through $\gtNf_t$ alone.
If we are given an  oracle access to $(\Bt, \sigma^2, \sigma_f^2, C)$, we can find the parameters that optimally trade off bias and variance, which we do next.

\subsection{Oracle-optimal parameters}
\label{sec:oracle}

The per-step mean-square error (MSE) in~\eqref{eq:mse} is a convex quadratic in $(a, b) \in \R^2$ (Lemma~\ref{lem:convex} in Appendix~\ref{app:lem_convex}), so its unconstrained minimizer $(a^\star,b^\star)$ is the unique stationary point.
Since the unlabeled-batch term $b_t^2 \sigma_f^2/N = O(1/N)$ is lower order in the $N \gg n$ regime of interest, we state the asymptotic minimizer in the main text; the exact finite-$N$ minimizer differs by $O(1/N)$ and is presented in Appendix~\ref{app:finite_N}.

\begin{restatable}[Oracle-optimal $(a^\star, b^\star)$]{proposition}{proporacle}
\label{prop:oracle}
Let $V_\infty(a, b) := \lim_{N \to \infty} V_t(a, b)$ be the pointwise limit of~\eqref{eq:mse} and $S := \sigma^2 - C^2/\sigma_f^2$.
Whenever $\Bt^2 + S/n > 0$, the minimizer $(a^\star,b^\star)$ of $V_\infty(a,b) $ satisfy 
\begin{equation}
    a^\star = \frac{\Bt^2}{\Bt^2 + S/n} \;\in\; [0, 1],
    \qquad
    b^\star = (1 - a^\star) + a^\star \cdot \frac{C}{\sigma_f^2},
    \qquad
    V_\infty^\star = a^\star \cdot \frac{S}{n}.
    \label{eq:oracle_ab}
\end{equation}
The range $a^\star \in [0, 1]$ is an output of unconstrained minimization, not an imposed constraint: $\Bt^2 \ge 0$ trivially and $S \ge 0$ by Cauchy--Schwarz, so the classical shrinkage ratio lies in $[0, 1]$ automatically.
\end{restatable}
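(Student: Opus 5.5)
The plan is to minimize $V_\infty$ in two stages. First I minimize over $b$ with $a$ fixed, then over $a$. The change of variables $u := 1 - a - b$ makes the $b$-problem a one-dimensional quadratic that does not involve $a$ except through a linear coefficient.

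\textbf{Step 1: the limiting objective.} Letting $N \to \infty$ in Proposition~\ref{prop:mse} removes the term $b^2\sigma_f^2/N$, so
\[
V_\infty(a,b) \;=\; (1-a)^2 \Bt^2 + \frac{u^2 \sigma_f^2 + a^2\sigma^2 + 2a\,u\,C}{n}, \qquad u = 1-a-b .
\]
For fixed $a$, the map $b \mapsto u$ is an affine bijection of $\R$. So minimizing over $b$ is the same as minimizing $\sigma_f^2 u^2 + 2aCu$ over $u \in \R$.

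\textbf{Step 2: minimize over $u$.} Assume $\sigma_f^2 > 0$. Completing the square gives the minimizer $u = -aC/\sigma_f^2$. The minimum value is $-a^2C^2/\sigma_f^2$. Substituting this back, the profiled objective is
\[
(1-a)^2\Bt^2 + a^2 \frac{S}{n}, \qquad S = \sigma^2 - \frac{C^2}{\sigma_f^2}.
\]
Inverting the change of variables gives $b = 1 - a - u = (1-a) + aC/\sigma_f^2$, which is the stated formula for $b^\star$ once $a = a^\star$.

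\textbf{Step 3: minimize over $a$.} The profiled objective is a scalar quadratic in $a$ with leading coefficient $\Bt^2 + S/n > 0$ by hypothesis. Setting its derivative to zero gives $a^\star = \Bt^2/(\Bt^2 + S/n)$. Plugging in yields the minimum
\[
\frac{\Bt^2 \, (S/n)}{\Bt^2 + S/n} \;=\; a^\star \cdot \frac{S}{n}.
\]
The range claim is immediate: $S \ge 0$ by the Cauchy--Schwarz bound $C^2 \le \sigma^2\sigma_f^2$ stated before Proposition~\ref{prop:mse}, and $\Bt^2 \ge 0$, so $a^\star \in [0,1]$.

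\textbf{Step 4: justify the two-stage minimization and uniqueness.} Minimizing a function jointly equals minimizing the inner minimum, so the sequential procedure is legitimate. The joint minimizer is unique because each stage has a unique minimizer: strict convexity in $u$ since $\sigma_f^2 > 0$, and strict convexity in $a$ since $\Bt^2 + S/n > 0$. This is consistent with Lemma~\ref{lem:convex}.

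\textbf{Main obstacle: degenerate cases.} The only subtle point is $\sigma_f^2 = 0$, where $S$ is undefined as written.
\begin{itemize}
\item In that case Cauchy--Schwarz forces $C = 0$, so the $u$-terms vanish entirely and $b$ is not identified.
\item I would handle this by adopting the convention $C/\sigma_f^2 := 0$ and $S := \sigma^2$. The profiled objective in $a$ is then unchanged, so $a^\star$ and $V_\infty^\star$ still hold.
\item The formula for $b^\star$ remains a valid, though no longer unique, minimizer.
\end{itemize}
I would state this convention explicitly in the proof.
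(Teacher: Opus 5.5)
Your proof is correct and follows the paper's route. Like the paper, you profile out $b$ for fixed $a$: you use the substitution $u = 1-a-b$ and complete the square, where the paper sets $\partial V_\infty/\partial b = 0$. Both arrive at the reduced objective $(1-a)^2 B_t^2 + a^2 S/n$ and then minimize that scalar quadratic in $a$.

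The remaining differences are minor. You justify global optimality by the nested-minimization argument rather than the convexity lemma. You also treat the degenerate case $\sigma_f^2 = 0$, which the paper's proof does not address. The paper instead comments on the case $B_t^2 = S = 0$, which lies outside the stated hypothesis anyway.
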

See Appendix~\ref{app:prop_oracle} for a proof.

\begin{corollary}[Teacher-quality regimes]
\label{cor:regimes}
\label{cor:good_teacher} The oracle MSE $V_\infty^\star = \Bt^2 S/(n\Bt^2 + S)$ smoothly interpolates between two regimes that clarify what ABC-SSL gains beyond unbiased methods.

\medskip\noindent
\textbf{Good-teacher regime ($\Bt \le \epsilon$ with $n\epsilon^2 \ll S$).}
The denominator $n\Bt^2 + S$ is dominated by $S$, so $a^\star \le n\epsilon^2/S \to 0$ and $b^\star \to 1$: the oracle places essentially full weight on the unlabeled pseudo-gradients.
Restoring the dropped $O(1/N)$ term gives $V_N^\star \;\lesssim\; \epsilon^2 + \sigma_f^2/(N+n)$, strictly below the unbiased-slice infimum $S/n$ from Corollary~\ref{cor:variance_floor} whenever $S$ is not itself $O(n/N)$, and substantially below the labeled-only MSE $\sigma^2/n$ once $N \gg n$.
The perfect-teacher case $f(x) = y$ a.s.\ is the limit $\epsilon \downarrow 0$, which yields $V_N^\star = \sigma_f^2/(N+n)$.
Figure~\ref{fig:synthetic} illustrates how ABC-SSL adaptively matches the Pseudo-only performance in this regime. %

\medskip
\noindent\textbf{Bad-teacher regime ($\Bt \to \infty$).}
The bias term dominates: $a^\star \to 1$ and $b^\star \to C/\sigma_f^2$.
The unconstrained ABC limit $\glab_t + (C/\sigma_f^2)\,c_t$ coincides with the unconstrained PP-SSL gradient~\eqref{eq:ppssl_grad} at $\lambda = C/\sigma_f^2$, and $V_\infty^\star \to S/n$.
Adaptive $a_t$ is therefore a strict generalization of PP-SSL: ABC-SSL gracefully recovers the unbiased solution when the teacher is uninformative.
\end{corollary}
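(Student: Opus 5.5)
The plan is to read everything off Proposition~\ref{prop:oracle} and the exact decomposition of Proposition~\ref{prop:mse}, treating the two regimes as limits of one closed form. First, substitute $a^\star = \Bt^2/(\Bt^2+S/n)$ into $V_\infty^\star = a^\star S/n$. Multiplying numerator and denominator by $n$ gives $V_\infty^\star = \Bt^2 S/(n\Bt^2+S)$, which is the interpolation formula in the statement. This expression is increasing in $\Bt^2$. It behaves like $\Bt^2$ when $n\Bt^2 \ll S$ and tends to $S/n$ as $\Bt\to\infty$, so everything else follows from monotonicity and limits.

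\emph{Good-teacher regime.} From $\Bt\le\epsilon$ and monotonicity of $x\mapsto x/(x+S/n)$, we get $a^\star \le \epsilon^2/(\epsilon^2+S/n)\le n\epsilon^2/S$, which tends to $0$. Then $b^\star = 1 - a^\star(1 - C/\sigma_f^2)$ tends to $1$, because $|C|/\sigma_f^2 \le \sigma/\sigma_f$ is a bounded constant. To restore the $O(1/N)$ term, I would not use the oracle pair. Instead I would bound $V_N^\star \le V_t(a,b)$ at a convenient feasible pair, namely $a = n/(N+n)$ and $b = N/(N+n)$, so that $1-a-b=0$. Then \eqref{eq:mse} gives
\[
V_t = \Bigl(\tfrac{N}{N+n}\Bigr)^2\Bt^2 + \frac{n\sigma^2}{(N+n)^2} + \frac{N\sigma_f^2}{(N+n)^2}
\le \epsilon^2 + \frac{n\sigma^2 + N\sigma_f^2}{(N+n)^2}.
\]
The last term is $\lesssim \sigma_f^2/(N+n)$ when $\sigma^2$ and $\sigma_f^2$ are comparable. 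In the perfect-teacher case $f(x)=y$ a.s., we have $\sigma^2=\sigma_f^2$, so it equals $\sigma_f^2/(N+n)$ exactly.

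\emph{Perfect-teacher claims.} For the exact equality in that case, note that $f=y$ gives $\glab_t=\gnf_t$, so $d_t=0$ and $\glam$ depends only on the combined weights on the $n+N$ pseudo-labeled samples. The pooled average is the minimum-variance such combination among those with total weight one. The comparison with the unbiased slice uses Corollary~\ref{cor:variance_floor}: its infimum $S/n$ exceeds $\epsilon^2+\sigma_f^2/(N+n)$ whenever $\epsilon^2\ll S/n$ and $S\gg n\sigma_f^2/(N+n)$, i.e., unless $S=O(n/N)$. The comparison with labeled-only uses $\sigma_f^2/(N+n)\ll\sigma^2/n$ once $N\gg n$.

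\emph{Bad-teacher regime.} As $\Bt\to\infty$, $a^\star\to1$ directly, $b^\star\to C/\sigma_f^2$, and $V_\infty^\star\to S/n$. Plugging $a=1$ into \eqref{eq:abc_grad} gives $\glab_t + b\,c_t$, which is exactly \eqref{eq:ppssl_grad} with $\lambda=b$. Moreover, minimizing the $N\to\infty$ variance $\bigl((1-b)^2\sigma_f^2... \bigr)$, more precisely $\bigl(\sigma^2 - 2bC + b^2\sigma_f^2\bigr)/n$ after simplifying the $a=1$ slice, yields $\lambda = C/\sigma_f^2$ with value $S/n$.

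The main obstacle is making the good-teacher bound with restored $O(1/N)$ terms clean. The finite-$N$ minimizer of Appendix~\ref{app:finite_N} is messy, which is why I plan to use a feasible-point upper bound rather than computing it exactly. I also need to state explicitly the mild assumption ($\sigma^2\lesssim\sigma_f^2$, or $n\sigma^2\ll N\sigma_f^2$) under which the $\lesssim$ holds.
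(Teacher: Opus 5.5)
Your overall route matches the paper's inline derivation: read everything off the closed form of Proposition~\ref{prop:oracle}, use monotonicity of $x\mapsto xS/(nx+S)$, and take limits. Your bad-teacher and perfect-teacher arguments are fine.

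The step that does not prove the statement as written is the restored-$O(1/N)$ bound. Your feasible pair $(a,b)=(n/(N+n),N/(N+n))$ is the naive pooled estimator. It uses the \emph{human} gradient on the labeled batch, so it pays a term $n\sigma^2/(N+n)^2$. To absorb that term you then add a hypothesis ($\sigma^2\lesssim\sigma_f^2$, or $n\sigma^2\lesssim N\sigma_f^2$) that the corollary does not assume. Nothing in the good-teacher hypotheses controls $\sigma^2/\sigma_f^2$. If $\epsilon^2$ and $\sigma_f^2$ are small while $\sigma^2$ is not, your bound stays of order $n\sigma^2/(N+n)^2$, while the target $\epsilon^2+\sigma_f^2/(N+n)$ can be arbitrarily smaller. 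So your argument proves a weaker, conditional statement.

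The fix is to evaluate at the pseudo-only pooled point $(a,b)=(0,N/(N+n))$ instead. There $1-a-b=n/(N+n)$, the $a^2\sigma^2$ and cross terms vanish, and the exact decomposition gives
\[
V_t\bigl(0,\tfrac{N}{N+n}\bigr) \;=\; \Bt^2 + \frac{n\,\sigma_f^2}{(N+n)^2} + \frac{N\,\sigma_f^2}{(N+n)^2} \;=\; \Bt^2 + \frac{\sigma_f^2}{N+n}.
\]
Hence $V_N^\star\le\epsilon^2+\sigma_f^2/(N+n)$, with a genuine $\le$ and no condition on $\sigma^2$. This is the same as evaluating the reduced finite-$N$ objective $V_N(a)$ of Appendix~\ref{app:finite_N} at $a=0$, which gives $V_N(0)=\Bt^2+H$. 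The messiness of $a_N^\star$ is irrelevant, since only an upper bound is needed.

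The same point gives the upper half of the perfect-teacher identity directly (take $\Bt=0$), and your minimum-variance argument supplies the matching lower bound. The downstream comparisons with $S/n$ (Corollary~\ref{cor:variance_floor}) and with $\sigma^2/n$ then go through without your extra assumption.
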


The bad-teacher limit $V_\infty^\star \to S/n$ is not just a worst case for ABC-SSL---it is the \emph{best} variance the unbiased family can achieve, regardless of the teacher's quality.

\begin{restatable}[Variance floor of the unbiased family]{corollary}{corvariancefloor}
\label{cor:variance_floor}
Assume $\sigma_f^2(\theta_t) > 0$. Restricting the ABC estimator~\eqref{eq:abc_grad} to the unbiased slice $a_t = 1$ and minimizing~\eqref{eq:mse} over $b_t \in \R$ yields, for any finite $N$,
\begin{equation}
    \inf_{b_t}\, \Var_t(\glam)\big|_{a_t = 1} \;=\; \frac{S}{n} \;+\; \frac{C^2}{(N+n)\,\sigma_f^2} \;\ge\; \frac{S}{n}\;,
    \qquad S \;:=\; \sigma^2 - \frac{C^2}{\sigma_f^2} \;\ge\; 0\;,
    \label{eq:variance_floor}
\end{equation}
attained at $b_t = (C\,N/\sigma_f^2(N+n))$. The residual $S$ vanishes only when the centered per-sample labeled and pseudolabel gradients are collinear in $L^2$---a far stronger condition than samplewise label agreement. Consequently, the variance of every member of the unbiased family $\{g_t^\lambda\}$, including PP-SSL and the doubly robust gradient, is bounded below by $S/n$; whenever $S = \Theta(1)$, this is an irreducible $\Theta(1/n)$ variance floor set by the labeled batch alone.
Comparing with~\eqref{eq:oracle_ab}, the optimal ABC estimator strictly improves on this floor whenever $S > 0$ and $\Bt^2 < \infty$, and the gap is significant whenever $\Bt^2 < S/n$, which by Corollary~\ref{cor:regimes} captures the entire good-teacher regime.
\end{restatable}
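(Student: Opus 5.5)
The plan is to specialize Proposition~\ref{prop:mse} to the slice $a_t = 1$ and minimize the resulting one-dimensional quadratic in $b_t$ exactly. At $a_t=1$ the bias term $(1-a_t)^2\Bt^2$ vanishes by Lemma~\ref{lem:bias}, so $V_t = \Var_t(\glam)$. The coefficients simplify as $(1-a_t-b_t)^2 = b_t^2$, $a_t^2\sigma^2 = \sigma^2$ and $2a_t(1-a_t-b_t)C = -2b_tC$. This gives
\[
\Var_t(\glam)\big|_{a_t=1} \;=\; \frac{\sigma^2}{n} \;-\; \frac{2b_tC}{n} \;+\; b_t^2\,\sigma_f^2\Bigl(\frac1n+\frac1N\Bigr).
\]
Because $\sigma_f^2>0$, this is a strictly convex quadratic in $b_t$. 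Setting the derivative to zero gives $b_t = \frac{C/n}{\sigma_f^2(N+n)/(nN)} = \frac{CN}{\sigma_f^2(N+n)}$, as claimed.

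Substituting back, the minimum value is $\sigma^2/n - \frac{C^2N}{n\sigma_f^2(N+n)}$. The one algebraic step worth writing out is the partial-fraction identity $\frac{N}{n(N+n)} = \frac1n - \frac1{N+n}$. Using it, the minimum becomes $\frac{1}{n}\bigl(\sigma^2 - C^2/\sigma_f^2\bigr) + \frac{C^2}{(N+n)\sigma_f^2} = \frac{S}{n} + \frac{C^2}{(N+n)\sigma_f^2}$. The second term is nonnegative, which gives the bound $\ge S/n$.

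Nonnegativity of $S$ is the Cauchy--Schwarz inequality $C^2\le\sigma^2\sigma_f^2$ already noted after \eqref{eq:C_def}. Equality in Cauchy--Schwarz in $L^2$ holds exactly when the two centered per-sample gradients are collinear, which characterizes when $S=0$. Every member $g_t^\lambda$ of the unbiased family (PP-SSL, doubly robust) is the point $(1,\lambda_t)$ on this slice. Its variance is therefore at least the infimum, hence at least $S/n$.

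For the comparison with the ABC oracle, I will use Proposition~\ref{prop:oracle}: $V_\infty^\star = a^\star S/n$ with $a^\star = \Bt^2/(\Bt^2+S/n)$. When $S>0$ and $\Bt<\infty$ we have $a^\star<1$, so $V_\infty^\star < S/n$, which is at most the finite-$N$ floor. The gap is
\[
(1-a^\star)\frac{S}{n} \;=\; \frac{(S/n)^2}{\Bt^2+S/n},
\]
which is at least $S/(2n)$ whenever $\Bt^2\le S/n$. That is a constant fraction of the floor, and this condition includes the good-teacher regime of Corollary~\ref{cor:regimes}. I do not expect a real obstacle; the only care needed is in the partial-fraction simplification and in noting that the comparison mixes the $N\to\infty$ oracle with the finite-$N$ floor, which is harmless because the floor is itself at least $S/n$.
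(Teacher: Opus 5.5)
Your proposal is correct and follows the paper's proof essentially step for step. You specialize \eqref{eq:mse} to $a_t=1$, minimize the resulting quadratic in $b_t$, rewrite the minimum via $\sigma^2 = S + C^2/\sigma_f^2$, and compare with $V_\infty^\star = a^\star S/n$ from Proposition~\ref{prop:oracle}; your explicit gap $(S/n)^2/(\Bt^2+S/n)\ge S/(2n)$ is a nice quantitative version of the paper's ``significant'' claim.

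One caution on your closing remark, which the paper's proof shares: the chain $V_\infty^\star < S/n \le \inf_{b_t}\Var_t(\glam)\big|_{a_t=1}$ certifies strict improvement only in the $N\to\infty$ sense. The reason is that the finite-$N$ ABC optimum is bounded \emph{below} by $V_\infty^\star$, not above. Moreover, by \eqref{eq:reduced_finite_N} it equals the finite-$N$ floor whenever $S/n + \frac{\sigma_f^2}{N+n}\,\rho(\rho-1) = 0$, which is possible with $S>0$. So the fact that the floor is at least $S/n$ is not what makes mixing the two regimes harmless.
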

See Appendix~\ref{app:cor_variance_floor} for a proof.

\subsection{Online controller and end-to-end rate}
\label{sec:plugin}

The oracle pair in~\eqref{eq:oracle_ab} depends on unobservable population statistics.
One option is to plug sample estimates of $(\Bt, \sigma^2, \sigma_f^2, C)$ into the closed form of~\eqref{eq:oracle_ab}; this finite-sample plug-in is the route we use for the on-policy prompt-group GRPO experiment of \Secref{sec:exp_grpo}, where the required per-group statistics are estimated directly from each rollout batch.
A second, more broadly applicable, option is to learn $(a_t, b_t)$ online by running a projected first-order rule directly on a surrogate of the per-step MSE $V_t(a, b)$ from~\eqref{eq:mse_def}; this online controller is the route we use for the offline reward-model and DPO experiments (\Secref{sec:exp_rm},~\Secref{sec:exp_dpo}).
The online route gives a regret bound against the best fixed $(a^\dagger, b^\dagger) \in \mathcal K$ over $T$ rounds and sidesteps the estimation of per-example gradient covariances such as $\sigma^2$ and $\sigma_f^2$ entirely; the obstacle, made explicit by the $b$-only argument of~\citet{Shoham2025PredictionPoweredSL}, is that the natural surrogate $\norm{\glam(a, b)}^2$ of $V_t(a, b)$ is faithful in $b$ but \emph{not} in $a$.
We resolve this with a single scalar correction.

\paragraph{Corrected-norm surrogate.}
Expanding $\norm{\glam(a, b)}^2$ via the bias--variance decomposition yields
\begin{equation}
    \E_t \norm{\glam(a, b)}^2
    \;=\; \norm{\nabla\Ls(\theta_t)}^2 \;+\; 2(1 - a)\,H_t \;+\; (1 - a)^2 \Bt^2 \;+\; \Var_t(\glam(a, b)),
    \qquad H_t := \inner{\nabla\Ls(\theta_t)}{B(\theta_t)}.
    \label{eq:norm_decomp}
\end{equation}
Comparing to the per-step MSE $V_t(a, b) = (1 - a)^2 \Bt^2 + \Var_t(\glam(a, b))$, the squared norm differs from $V_t$ by the constant $\norm{\nabla\Ls(\theta_t)}^2$ and the asymmetric cross term $2(1 - a) H_t$.
Over choices of $b$ alone the latter is constant, which is why the $b$-only surrogate of~\citet{Shoham2025PredictionPoweredSL} is faithful at fixed $a$; over choices of $a$, however, it is not, and any surrogate that omits a $-2(1 - a) H_t$ correction trades MSE faithfulness for a smaller-norm but larger-variance estimator.
Subtracting an unbiased estimate of the cross term defines the \emph{corrected-norm surrogate}
\begin{equation}
    \ell_t(a, b) \;:=\; \norm{\glam(a, b)}^2 \;-\; 2(1 - a)\,\widehat{H}_t,
    \qquad \E_t[\widehat H_t] = H_t.
    \label{eq:ell_def}
\end{equation}

\begin{restatable}[Faithful joint surrogate]{lemma}{lemfaithful}
\label{lem:faithful}
For any conditionally unbiased $\widehat H_t$, $\E_t[\ell_t(a, b)] = V_t(a, b) + \norm{\nabla\Ls(\theta_t)}^2$, so the additive term is independent of $(a, b)$ and minimizing $\E_t[\ell_t]$ over $(a, b)$ is equivalent to minimizing the per-step MSE.
\end{restatable}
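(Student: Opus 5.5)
The plan is to take conditional expectations term by term in the definition \eqref{eq:ell_def} of $\ell_t(a,b)$, using linearity of $\E_t$ and the fact that $(a,b)$ is treated as fixed. Lemma~\ref{lem:faithful} is stated for arbitrary fixed $(a,b)$, or more generally for predictable $(a_t,b_t)$, which are $\mathcal{F}_{t-1}$-measurable and so pass through $\E_t$.

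\textbf{Surrogate term.} Since $(1-a)$ is deterministic given $\mathcal{F}_{t-1}$, the assumption $\E_t[\widehat H_t]=H_t$ gives
\[
\E_t\bigl[2(1-a)\widehat H_t\bigr]=2(1-a)H_t .
\]

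\textbf{Squared-norm term.} I would invoke the expansion \eqref{eq:norm_decomp}, and verify it briefly. Write
\[
\glam-\nabla\Ls(\theta_t) \;=\; \bigl(\glam-\E_t\glam\bigr) + \Bias_t(\glam).
\]
By Lemma~\ref{lem:bias}, $\Bias_t(\glam)=(1-a)B(\theta_t)$. Expanding $\norm{\glam}^2$ around $\nabla\Ls(\theta_t)$, the cross term between the zero-mean fluctuation and any $\mathcal{F}_{t-1}$-measurable vector vanishes under $\E_t$. This leaves
\[
\E_t\norm{\glam}^2 \;=\; \norm{\nabla\Ls(\theta_t)}^2 + 2(1-a)\inner{\nabla\Ls(\theta_t)}{B(\theta_t)} + (1-a)^2\Bt^2 + \Var_t(\glam).
\]
By Proposition~\ref{prop:mse}, $(1-a)^2\Bt^2+\Var_t(\glam)=V_t(a,b)$.

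\textbf{Combining.} Subtracting the two pieces, the $2(1-a)H_t$ terms cancel exactly, so
\[
\E_t[\ell_t(a,b)] \;=\; V_t(a,b)+\norm{\nabla\Ls(\theta_t)}^2 .
\]
The remaining quantity $\norm{\nabla\Ls(\theta_t)}^2$ depends only on $\theta_t$, not on $(a,b)$. The two objectives therefore differ by a constant, and their argmins over any set $\mathcal K$ coincide.

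\textbf{Main subtlety.} The one point needing care is the measurability argument. We need $\theta_t$, and hence $\nabla\Ls(\theta_t)$ and $B(\theta_t)$, to be $\mathcal{F}_{t-1}$-measurable, and the batches to be fresh, so that the fluctuation cross term has zero conditional mean. Both facts are built into the filtration defined in \Secref{sec:estimator}. I also need to note that no independence between $\widehat H_t$ and $\glam$ is required, because each term's expectation is taken separately by linearity.
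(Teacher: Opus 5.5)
Your proposal is correct and matches the paper's proof: take $\E_t$ of \eqref{eq:ell_def}, substitute \eqref{eq:norm_decomp}, and use $\E_t[\widehat H_t]=H_t$ so the $\pm 2(1-a)H_t$ terms cancel; the remainder is then identified with $V_t(a,b)$ via Proposition~\ref{prop:mse}. The only difference is that you also derive \eqref{eq:norm_decomp} from Lemma~\ref{lem:bias} instead of citing it, which is a harmless and correct addition.
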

See Appendix~\ref{app:lem_faithful} for a proof.

A conditionally unbiased $\widehat H_t$ can be built from \emph{aggregate} half-batch gradients via a split-batch construction (Lemma~\ref{lem:split_H} in Appendix~\ref{app:lem_split_H}): partition $\mathcal B_t^{\mathrm{lab}}$ into halves $A, B$ via any deterministic function of the batch indices, so exchangeability of the i.i.d.\ labeled samples makes the two halves independent draws from $P^{n/2}$, and $\widehat H_t := -\tfrac{1}{2}(\inner{g_A}{d_B} + \inner{g_B}{d_A})$ with half-batch gradients $g_A, g_A^f, g_B, g_B^f$ and $d_A := g_A - g_A^f$ satisfies $\E_t[\widehat H_t] = H_t$.
No per-example gradient covariances are required.
Crucially, $\ell_t$ is a convex quadratic in $(a, b)$ whose every coefficient is a scalar dot product of aggregate component gradients, computable in a single pass after the SGD step; in a distributed setting (DDP/FSDP/ZeRO) each rank reduces its local shard inner products with a single scalar all-reduce.

\paragraph{Online learning on $\ell_t$.}
We propose running the projected AdaGrad~\citep{Ward2018AdaGradSS,Hazan2016OCO} on the sequence $\{\ell_t\}$ from \eqref{eq:ell_def} over the feasible set $\mathcal K = [0, 1] \times [0, b_{\max}]$ (with $b_{\max} \ge 1$).
The resulting iterates $\{(a_t, b_t)\}$ are predictable by construction, and the surrogate regret propagates to a regret bound on the per-step MSE $V_t(a_t,b_t)$ via Lemma~\ref{lem:faithful}.

\begin{restatable}[Joint regret in $(a, b)$]{proposition}{propjointregret}
\label{prop:joint_regret}
Assume $\norm{\dt}, \norm{c_t}, \norm{\gnf_t}, |\widehat H_t| \le G < \infty$.
Projected AdaGrad on $\{\ell_t\}$ over $\mathcal K$ satisfies
\begin{equation}
    \sum_{t=1}^T \E[V_t(a_t, b_t)] \;-\; \min_{(a^\dagger, b^\dagger) \in \mathcal K} \sum_{t=1}^T \E[V_t(a^\dagger, b^\dagger)]
    \;\le\; D_{\mathcal K}\,G_\ell\,\sqrt{2T} \;=\; O(\sqrt T),
    \label{eq:joint_regret_V}
\end{equation}
where $D_{\mathcal K} = \sqrt{1 + b_{\max}^2}$ is the diameter of $\mathcal K$ and $G_\ell$ bounds $\norm{\nabla \ell_t}$ on $\mathcal K$.
\end{restatable}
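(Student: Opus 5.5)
The plan is to reduce the statement to the standard regret guarantee for projected AdaGrad in online convex optimization, and then transfer it from the surrogate $\ell_t$ to the per-step MSE $V_t$ using Lemma~\ref{lem:faithful}.

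\textbf{Step 1: $\ell_t$ is a valid OCO loss.} By \eqref{eq:abc_grad}, $\glam(a,b) = \gnf_t + a\,\dt + b\,c_t$ is affine in $(a,b)$. Hence $\norm{\glam(a,b)}^2$ is a convex quadratic, and subtracting the affine term $2(1-a)\widehat H_t$ keeps $\ell_t$ convex. The coefficients are built from $\gnf_t$, $\dt$, $c_t$ and $\widehat H_t$. In particular $\widehat H_t$ uses only round-$t$ batches, since the split is a deterministic function of indices. So $\ell_t$ is revealed after $(a_t,b_t)$ is chosen, and the iterates are $\mathcal F_{t-1}$-measurable. The gradient is
\[
\nabla \ell_t(a,b) = \bigl(2\inner{\glam}{\dt} + 2\widehat H_t,\; 2\inner{\glam}{c_t}\bigr).
\]
On $\mathcal K$ we have $\norm{\glam} \le G(1+1+b_{\max})$, so $\norm{\nabla\ell_t} \le G_\ell$ with an explicit $G_\ell = O(G^2(2+b_{\max}) + G)$.

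\textbf{Step 2: pathwise regret.} I will invoke the standard projected AdaGrad (AdaGrad-Norm) bound on a convex set with diameter $D_{\mathcal K}$, with step $\eta_t = D_{\mathcal K}/\sqrt{2\sum_{s\le t}\norm{\nabla\ell_s(a_s,b_s)}^2}$. Write $\nabla_s$ for the gradient at round $s$. Convexity gives $\ell_t(a_t,b_t) - \ell_t(u) \le \inner{\nabla\ell_t(a_t,b_t)}{(a_t,b_t)-u}$, and the usual telescoping argument then gives, for every fixed $u \in \mathcal K$ and every sample path,
\[
\sum_t \ell_t(a_t,b_t) - \ell_t(u) \le D_{\mathcal K}\sqrt{2\textstyle\sum_t \norm{\nabla_t}^2} \le D_{\mathcal K} G_\ell \sqrt{2T}.
\]
The diameter of $[0,1]\times[0,b_{\max}]$ is $\sqrt{1+b_{\max}^2}$, which matches the stated $D_{\mathcal K}$.

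\textbf{Step 3: transfer to $V_t$.} Take total expectations of the pathwise bound. Because $(a_t,b_t)$ is predictable, the tower property and Lemma~\ref{lem:faithful} give
\[
\E[\ell_t(a_t,b_t)] = \E\bigl[\E_t[\ell_t](a_t,b_t)\bigr] = \E[V_t(a_t,b_t)] + \E\norm{\nabla\Ls(\theta_t)}^2.
\]
Here $\E_t[\ell_t](a,b)$ is evaluated at the frozen $\mathcal F_{t-1}$-measurable point; this is valid because $\ell_t$ is a polynomial in $(a,b)$ with integrable coefficients. For a deterministic comparator $u$ the same identity holds. The additive $\E\norm{\nabla\Ls(\theta_t)}^2$ terms then cancel. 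This gives $\sum_t \E[V_t(a_t,b_t)] - \sum_t \E[V_t(u)] \le D_{\mathcal K}G_\ell\sqrt{2T}$ for each fixed $u$. Taking the minimizing $u = (a^\dagger, b^\dagger)$ yields \eqref{eq:joint_regret_V}; the minimum exists because the map is continuous on compact $\mathcal K$.

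\textbf{Main obstacle.} The delicate step is Step 3's handling of the comparator. The pathwise bound holds for all $u$ simultaneously, so $\min_u$ of the expected sum is bounded correctly: we bound against each fixed $u$ and then take the minimum of the resulting right-hand side. We must not swap the expectation with a data-dependent minimizer, which is exactly why the statement is phrased with $\min$ outside the expectation.

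A second subtlety is that $\theta_t$ itself depends on the past $(a_s,b_s)$. So $V_t(u)$ is the MSE along the realized trajectory, not a trajectory run with $u$. I will note that this is precisely the quantity in the statement. It is handled by conditioning alone, because Lemma~\ref{lem:faithful} holds for any $\mathcal F_{t-1}$-measurable $\theta_t$.
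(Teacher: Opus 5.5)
Your proposal is correct and follows the paper's own argument: $\ell_t$ is a convex quadratic on $\mathcal K$ with uniformly bounded gradient, so the standard diameter-times-gradient regret bound for projected AdaGrad applies, and taking expectations and invoking Lemma~\ref{lem:faithful} makes the $\norm{\nabla\Ls(\theta_t)}^2$ terms cancel. Your version fills in details the paper leaves implicit. These are the explicit gradient and $G_\ell$, the AdaGrad-Norm step size that yields exactly the constant $D_{\mathcal K}\sqrt{2\sum_t\norm{\nabla_t}^2}$, the tower-property justification at the predictable iterate, and bounding against each fixed comparator before taking the minimum.
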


The novelty of $\ell_t$ over the existing control-variate literature~\citep{Shoham2025PredictionPoweredSL,Zhu2023DoublyRS,Angelopoulos2023PPIEP} is the $-2(1 - a)\widehat H_t$ correction: prior surrogates live on the unbiased slice $a = 1$, where the cross term is constant and no correction is needed; the biased family $a \in [0, 1)$ requires the explicit subtraction for the controller surrogate to track the per-step MSE.

\paragraph{Convergence and end-to-end rate.}
To prove  convergence on the gradient norm $\|\nabla{\cal L}(\theta_t)\|$, we adopt the standard non-convex SGD assumptions.

\begin{assumption}[Smoothness]\label{asm:smooth}
The prediction-powered loss $\Ls$ is $\beta$-smooth---i.e.,  $\norm{\nabla \Ls(\theta_1) - \nabla \Ls(\theta_2)} \le \beta \norm{\theta_1 - \theta_2}$ for all $\theta_1, \theta_2$.
\end{assumption}
\begin{assumption}[Bounded suboptimality]\label{asm:bounded} 
The initial iterate has finite suboptimality with respect to the infimum of
the prediction-powered loss---i.e., 
$\Delta_0 := \Ls(\theta_0) - \inf_\theta \Ls(\theta) < \infty$.
\end{assumption}

The next theorem applies to \emph{any} predictable parameter sequence $\{(a_t, b_t)\}$, including the joint corrected-norm controller above, any other online-learning scheme, or a fixed $(a, b)$; the controller enters only through the realized time-averaged ABC MSE, e.g., \eqref{eq:joint_regret_V}.

\begin{restatable}[Convergence of ABC-SSL with SGD]{theorem}{thmmain}
\label{thm:main}
Under Assumptions~\ref{asm:smooth}--\ref{asm:bounded}, let $\{(a_t, b_t)\}$ be any predictable sequence in $\R^2$ and let $\theta_{t+1} = \theta_t - \eta\,\glam$ with $\eta = 1/\beta$.
Then
\begin{equation}
    \frac{1}{T} \sum_{t=1}^T \E\norm{\nabla\Ls(\theta_t)}^2 \;\le\; \frac{2\beta\Delta_0}{T} \;+\; \bar V,
    \qquad \bar V := \frac{1}{T} \sum_{t=1}^T \E[V_t(a_t, b_t)].
    \label{eq:main_bound}
\end{equation}
A variance-tuned step size yielding the same bound with $\bar V$ replaced by an a posteriori variance term is given in Appendix~\ref{app:thm_main}.
\end{restatable}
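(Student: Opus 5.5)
The argument is the standard descent-lemma analysis of smooth non-convex SGD. The only change is that we keep the cross term between the true gradient and the estimator error intact rather than splitting it into bias and variance. With $\eta = 1/\beta$, Assumption~\ref{asm:smooth} gives
\[
\Ls(\theta_{t+1}) \le \Ls(\theta_t) - \tfrac{1}{\beta}\inner{\nabla\Ls(\theta_t)}{\glam} + \tfrac{1}{2\beta}\norm{\glam}^2 .
\]
We write $e_t := \glam - \nabla\Ls(\theta_t)$, which need not have zero conditional mean. The algebraic identity $-\inner{u}{u+e} + \tfrac12\norm{u+e}^2 = -\tfrac12\norm{u}^2 + \tfrac12\norm{e}^2$, applied with $u = \nabla\Ls(\theta_t)$, collapses the right-hand side to
\[
\Ls(\theta_{t+1}) \le \Ls(\theta_t) - \tfrac{1}{2\beta}\norm{\nabla\Ls(\theta_t)}^2 + \tfrac{1}{2\beta}\norm{e_t}^2 .
\]
The step size $1/\beta$ is exactly what makes the cross terms cancel. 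As a result, no Young's inequality is needed to control the bias: the cross term $\inner{\nabla\Ls}{\Bias_t}$ disappears identically.

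Next we take the conditional expectation $\E_t$. Since $\theta_t$ and $(a_t,b_t)$ are $\mathcal F_{t-1}$-measurable by predictability, $\E_t\norm{e_t}^2 = V_t(a_t,b_t)$ by definition~\eqref{eq:mse_def}. This holds for whatever values the pair takes, and Proposition~\ref{prop:mse} gives its explicit form, although we do not need that form here. Taking total expectation and using the tower property gives
\[
\tfrac{1}{2\beta}\E\norm{\nabla\Ls(\theta_t)}^2 \le \E\Ls(\theta_t) - \E\Ls(\theta_{t+1}) + \tfrac{1}{2\beta}\E[V_t(a_t,b_t)] .
\]

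Summing over $t=1,\dots,T$ telescopes the loss differences to $\E\Ls(\theta_1) - \E\Ls(\theta_{T+1})$. We bound this by $\Delta_0 = \Ls(\theta_1) - \inf\Ls$ using Assumption~\ref{asm:bounded}, identifying the initial iterate $\theta_0$ there with $\theta_1$, or treating $\theta_1$ as deterministic. Multiplying by $2\beta/T$ then yields~\eqref{eq:main_bound}.

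The main point needing care is measurability. We must verify that $V_t(a_t,b_t)$, evaluated at random but predictable $(a_t,b_t)$, really equals $\E_t\norm{e_t}^2$. This holds because $\E_t$ averages only over the fresh round-$t$ batches, which are independent of $\mathcal F_{t-1}$, so we may freeze $(\theta_t,a_t,b_t)$ inside the conditional expectation. We also need integrability so that the expectations are finite, but if $\bar V=\infty$ the bound is trivial.

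Finally, the variance-tuned step size mentioned at the end of the statement would follow from the same descent inequality with a general $\eta \le 1/\beta$. In that case the cross term no longer cancels, leaving $-\eta(1-\beta\eta/2)\norm{\nabla\Ls}^2$ plus bias and variance terms, and one optimizes $\eta$ against the realized variance. We defer this to the appendix and do not pursue it in this plan.
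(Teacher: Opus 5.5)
Your proposal is correct and is essentially the paper's argument: the descent lemma, a polarization identity, and telescoping, with the same implicit identification of $\theta_0$ with $\theta_1$ in $\Delta_0$. The one difference is where the polarization happens. The paper first conditions, splits $\E_t\norm{\glam}^2$ into $\norm{\E_t[\glam]}^2 + \Var_t(\glam)$, and polarizes using $\E_t[\glam] - \nabla\Ls(\theta_t) = (1-a_t)B(\theta_t)$ from Lemma~\ref{lem:bias}. That route gives a bound for every $\eta \le 1/\beta$, with the variance term scaled by $\eta\beta$, which is what the variance-tuned step size uses. Your pathwise cancellation works only at $\eta = 1/\beta$, but it lets $V_t$ enter directly through its definition without invoking Lemma~\ref{lem:bias}.
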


Theorem~\ref{thm:main} matches the non-convex SGD template of~\citet{Ghadimi2013StochasticFA}, with the labeled-only variance $\sigma^2/n$ replaced by the time-averaged ABC MSE $\bar V$, controlled by $(a_t, b_t)$.
Combining it with the regret bound of Proposition~\ref{prop:joint_regret} yields the headline rate.

\begin{restatable}[End-to-end rate; strict improvement over statically tuned PP-SSL]{corollary}{corendtoend}
\label{cor:end_to_end}
Under Assumptions~\ref{asm:smooth}--\ref{asm:bounded} and the hypotheses of Proposition~\ref{prop:joint_regret}, run Algorithm~\ref{alg:abcssl} with $\eta = 1/\beta$, and assume $\sigma_{f,t}^2 > 0$ for all $t$.
Let $\bar V_T^\dagger := \min_{(a, b) \in \mathcal K} \tfrac{1}{T} \sum_t \E[V_t(a, b)]$ be the in-class oracle and let
$\bar V_T^{\textrm{PP-SSL}} := \min_{b \in [0, b_{\max}]} \tfrac{1}{T} \sum_t \E[V_t(1, b)]$
be the value of the best fixed parameter on the unbiased slice $a_t = 1$, i.e., the best \emph{static} PP-SSL tuning.
Then
\begin{equation}
    \frac{1}{T} \sum_{t=1}^T \E\norm{\nabla\Ls(\theta_t)}^2
    \;\le\; \frac{2\beta\Delta_0}{T} \,+\, \bar V_T^\dagger \,+\, O(T^{-1/2})
    \;\le\; \frac{2\beta\Delta_0}{T} \,+\, \bar V_T^{\textrm{PP-SSL}} \,+\, O(T^{-1/2}),
    \label{eq:end_to_end}
\end{equation}
with strict inequality $\bar V_T^\dagger < \bar V_T^{\textrm{PP-SSL}}$ whenever
\begin{equation}
    \frac{1}{T}\sum_t \E[\Bt^2] \;<\; \infty
    \qquad\text{and}\qquad
    \frac{1}{T}\sum_t \E[S_t] \;>\; \frac{n}{4N}\cdot\frac{1}{T}\sum_t \E[\sigma_{f,t}^2],
    \qquad S_t := \sigma_t^2 - \frac{C_t^2}{\sigma_{f,t}^2}.
    \label{eq:end_to_end_strict}
\end{equation}
In the asymptotic-$N$ regime the second condition reduces to $\tfrac{1}{T}\sum_t \E[S_t]$ being positive, i.e., the centered labeled and pseudolabel per-sample gradients not being $L^2$-collinear on average.
\end{restatable}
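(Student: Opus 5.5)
The first chain of inequalities in \eqref{eq:end_to_end} is bookkeeping. Theorem~\ref{thm:main} holds for any predictable sequence, and the AdaGrad iterates of Algorithm~\ref{alg:abcssl} are predictable. So it gives $\tfrac1T\sum_t\E\norm{\nabla\Ls(\theta_t)}^2\le 2\beta\Delta_0/T+\bar V$.

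Dividing \eqref{eq:joint_regret_V} by $T$ gives
\[
\bar V\le \bar V_T^\dagger + D_{\mathcal K}G_\ell\sqrt{2/T}=\bar V_T^\dagger+O(T^{-1/2}).
\]
This yields the first inequality. For the second, note that the slice $\{1\}\times[0,b_{\max}]$ is contained in $\mathcal K=[0,1]\times[0,b_{\max}]$. A minimum over a superset is no larger, so $\bar V_T^\dagger\le\bar V_T^{\textrm{PP-SSL}}$.

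For strictness I work with the averaged objective $\bar V(a,b):=\tfrac1T\sum_t\E[V_t(a,b)]$. By Proposition~\ref{prop:mse}, and because $(a,b)$ is fixed (deterministic), $\bar V$ is the convex quadratic \eqref{eq:mse} with each coefficient replaced by its time average, for example $\bar B^2:=\tfrac1T\sum_t\E[\Bt^2]$ and $\bar\sigma_f^2:=\tfrac1T\sum_t\E[\sigma_{f,t}^2]$. Finiteness of $\bar B^2$, which is the first condition in \eqref{eq:end_to_end_strict}, is exactly what makes $\bar V$ a finite quadratic.

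I then take the slice minimizer $(1,b_1)$, with $b_1\in[0,b_{\max}]$, and show it is not a minimizer over $\mathcal K$. The plan is to exhibit a feasible inward direction $v=(-1,\gamma)$ along which the one-sided directional derivative $\partial_v\bar V(1,b_1)$ is strictly negative. Moving in $a$ costs nothing to first order, because the bias term $(1-a)^2\bar B^2$ has zero derivative at $a=1$. Hence the sign is determined by the variance terms alone.

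Differentiating \eqref{eq:mse} at $a=1$:
\begin{itemize}
\item the $a$-derivative is $\tfrac2n\bigl(\bar\sigma^2-\bar C+b_1(\bar\sigma_f^2-\bar C)\bigr)$;
\item the $b$-derivative is $\tfrac2n(b_1\bar\sigma_f^2-\bar C)+\tfrac{2b_1\bar\sigma_f^2}{N}$.
\end{itemize}
If $b_1$ is interior, the $b$-derivative vanishes, which gives $b_1=\bar C N/(\bar\sigma_f^2(N+n))$. Substituting this back, the directional derivative along $(-1,0)$ becomes $-\tfrac2n$ times an expression of the form
\[
\bar\sigma^2-\frac{\bar C^2}{\bar\sigma_f^2}\cdot(\text{factor close to }1)+O\!\left(\frac nN\right)\bar\sigma_f^2 .
\]
I expect that completing the square, using $|\bar C|\le\bar\sigma\,\bar\sigma_f$ and an AM--GM step $2xy\le x^2+y^2/4$ on the cross term, bounds the error by $\tfrac{n}{4N}\bar\sigma_f^2$. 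The derivative is then negative exactly under the second condition in \eqref{eq:end_to_end_strict}. The boundary case $b_1\in\{0,b_{\max}\}$ is handled with the same computation along the direction $v=(-1,\gamma)$ with $\gamma$ chosen to stay feasible: $\gamma\ge0$ at $b_1=0$ and $\gamma\le0$ at $b_{\max}$. A negative derivative at a slice point then contradicts $\bar V_T^\dagger=\bar V_T^{\textrm{PP-SSL}}$.

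The main obstacle is aggregating across $t$. The hypothesis is stated in terms of $\tfrac1T\sum_t\E[S_t]$, whereas my derivative produces $\bar\sigma^2-\bar C^2/\bar\sigma_f^2$, a ratio of averages rather than an average of ratios. I would handle this with the joint convexity of $(C,\sigma_f^2)\mapsto C^2/\sigma_f^2$ on $\sigma_f^2>0$, which is where the assumption $\sigma_{f,t}^2>0$ enters. This gives $\bar C^2/\bar\sigma_f^2\le\tfrac1T\sum_t\E[C_t^2/\sigma_{f,t}^2]$, so $\bar\sigma^2-\bar C^2/\bar\sigma_f^2\ge\tfrac1T\sum_t\E[S_t]$, which points in the direction I need. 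What remains is to track the exact $n/N$ constants carefully so that the factor $\tfrac14$ comes out.

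Finally, in the asymptotic-$N$ regime the right-hand side of the second condition vanishes. The condition then reduces to $\tfrac1T\sum_t\E[S_t]>0$, which is the non-collinearity statement made at the end of the corollary.
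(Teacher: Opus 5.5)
Your proof is correct, but the strictness step is carried out differently from the paper.

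\textbf{How the paper does it.} The paper never uses optimality conditions for the slice minimizer $b^c$. It moves along the segment from $(1,b^c)$ to the pseudo-only point $(0,1)$, which is your direction with $\gamma=1-b^c$. It then uses that $\phi_t(a,b)=(1-a-b)\tilde V_t+aU_t$ vanishes at $(0,1)$, so the labeled-batch variance scales exactly by $(1-\varepsilon)^2$. This gives the first-order coefficient $-2[P-b^c(1-b^c)R]$ for \emph{every} $b^c$. The bound $P\ge\tfrac1{Tn}\sum_t\E[S_t]$ comes round by round from $\min_\rho\E_t\norm{U_t-\rho\tilde V_t}^2=S_t$, and the constant $\tfrac14$ comes from $b^c(1-b^c)\le\tfrac14$. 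So the paper needs no case split on where $b^c$ sits and no Jensen step, and in fact shows that every point of the unbiased slice is strictly improvable.

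\textbf{What your route buys.} You use direction $(-1,0)$, the KKT conditions for $b_1$, and then joint convexity of $(C,s)\mapsto C^2/s$ to get $\bar\sigma^2-\bar C^2/\bar\sigma_f^2\ge\tfrac1T\sum_t\E[S_t]$; that inequality goes in the right direction. The cost is a case analysis. In return, the interior case yields the slightly weaker sufficient threshold $\tfrac{n}{4(N+n)}\bar\sigma_f^2$, and the boundary cases need only $\bar S>0$.

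\textbf{Two details to fix.}

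First, the inequality $2xy\le x^2+y^2/4$ is false (take $x=y=1$). What you need is $xy\le x^2+y^2/4$, i.e.\ $\bar C\le\bar C^2/\bar\sigma_f^2+\bar\sigma_f^2/4$. The bound $|\bar C|\le\bar\sigma\,\bar\sigma_f$ plays no role.

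Second, the boundary cases are asserted rather than checked. They do follow with $\gamma=0$, from the identity
\[
\tfrac n2\,\partial_a\bar V(1,b)=\bar S+\bar\sigma_f^2(\rho-1)(\rho-b),\qquad \rho:=\bar C/\bar\sigma_f^2,\quad \bar S:=\bar\sigma^2-\bar C^2/\bar\sigma_f^2 .
\]
\begin{itemize}
\item At $b_1=0$, the KKT condition gives $\rho\le 0$, so $(\rho-1)(\rho-b)\ge 0$.
\item At $b_1=b_{\max}$, the KKT condition gives $\rho\ge b_{\max}(1+n/N)>b_{\max}\ge 1$, so again $(\rho-1)(\rho-b)\ge 0$.
\end{itemize}
In both cases $\bar S\ge\tfrac1T\sum_t\E[S_t]>0$ finishes the argument. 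In the interior case, $b_1=\rho N/(N+n)$ turns $(\rho-1)(\rho-b)$ into $\rho(\rho-1)\tfrac{n}{N+n}\ge-\tfrac{n}{4(N+n)}$.
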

See Appendix~\ref{app:cor_end_to_end} for a proof.
The improvement in Corollary~\ref{cor:end_to_end} is over the best \emph{static} PP-SSL tuning, matching the static comparator of Proposition~\ref{prop:joint_regret}; a comparison against the per-round retuned PP-SSL oracle $\tfrac1T\sum_t \E[\inf_b V_t(1,b)]$ would require a dynamic-regret analysis, which we do not pursue.
Pseudocode for the full ABC-SSL procedure with the joint corrected-norm controller is deferred to Appendix~\ref{app:plugin_proofs} (Algorithm~\ref{alg:abcssl}).
The bound of Corollary~\ref{cor:end_to_end} still carries the persistent bias floor $\overline{(1-a_t)^2\Bt^2}$ whenever the controller keeps $a_t$ bounded away from $1$. Appendix~\ref{app:two_timescale} develops a two-timescale variant of ABC-Align that drives this bias contribution to zero asymptotically while preserving the variance-optimality of the controller, at the cost of a single additional $\R^d$ state; its practical footprint---memory and compute overhead, initialization, and interaction with the corrected-norm controller---is discussed in Appendix~\ref{app:tt_practical}.

\section{ABC-Align for Prediction-Powered Alignment} 
\label{sec:abc_align} 
We present three instantiations of ABC-Align, i.e., ABC-SSL algorithm applied to solve LLM alignment problems: reward model training for RLHF, offline preference optimization using DPO, and on-policy RL with Group Relative Policy Optimization (GRPO).
The first two are offline preference-learning problems and use standard ABC-SSL with the corrected-norm controller of \Secref{sec:plugin} (\Secref{sec:abc_rmdpo}), under which we  empirically demonstrate the gain of the improved bias--variance trade-off.
The third is on-policy and introduces an additional challenge---the gradient distribution shifts with the policy across iterations---which we address by replacing the corrected-norm controller with a finite-sample plug-in for the oracle of Proposition~\ref{prop:oracle} (\Secref{sec:abc_grpo}).

\subsection{Background}

\paragraph{Reward Model Training for RLHF.}
We represent an LLM as a policy $\pi_\theta$ with weights $\theta$.
Given an input prompt $p$, the model generates a response $o \sim \pi_\theta(\cdot | p)$.
A reward model $r_\phi(p, o)$ is trained from human preference data: given a prompt $p$ and two responses $o_w, o_l$ where $o_w$ is preferred, the Bradley--Terry model~\citep{Bradley1952RankAO}
posits
\begin{equation}
    p(o_w \succ o_l \mid p) = \sigmoid \bigl(r_\phi(p, o_w) - r_\phi(p, o_l)\bigr),
    \label{eq:bt}
\end{equation}
and the reward model is fit by minimizing the negative log-likelihood
\begin{equation}
    \Ls_{\mathrm{RM}}(r_\phi) = -\E_{(p, o_w, o_l) \sim \mathcal{D}} \bigl[\log \sigmoid \bigl(r_\phi(p, o_w) - r_\phi(p, o_l)\bigr)\bigr].
    \label{eq:rm_loss}
\end{equation}
Once the reward model $r_\phi$ is trained, the policy $\pi_\theta$ is then optimized to maximize the KL-regularized reward objective \citep{Christiano2017DeepRL,Ziegler2019FineTuningLM,Ouyang2022TrainingLM}:
\begin{equation}
    \max_{\pi_\theta}\; \E_{p \sim \mathcal{D}}\, \E_{o \sim \pi_\theta(\cdot | p)} \bigl[r_\phi(p, o)\bigr] - \beta_{\mathrm{kl}}\, \KL \bigl[\pi_\theta(\cdot \mid p) \,\|\, \pi_{\mathrm{ref}}(\cdot \mid p)\bigr],
    \label{eq:rlhf}
\end{equation}
where $\pi_{\mathrm{ref}}$ is a reference policy (typically the supervised fine-tuned model) and $\beta_{\mathrm{kl}} > 0$ controls the strength of the KL penalty. The unique optimum of \eqref{eq:rlhf} is 
\begin{equation}
    \pi^{\star}(o \mid p) = \frac{1}{Z(p)}\, \pi_{\mathrm{ref}}(o \mid p)\, \exp \Bigl(\tfrac{1}{\beta_{\mathrm{kl}}}\, r_\phi(p, o)\Bigr),
    \label{eq:rlhf_optimal}
\end{equation}
with partition function $Z(p) = \sum_o \pi_{\mathrm{ref}}(o \mid p) \exp\bigl((1/\beta_{\mathrm{kl}})\, r_\phi(p, o)\bigr)$.

\paragraph{Direct Preference Optimization (DPO).} 

\citet{Rafailov2023DirectPO} observe that the optimal policy in \eqref{eq:rlhf_optimal} can be rearranged to express the reward as a function of the policy:
\begin{equation}
    r(p, o) = \beta_{\mathrm{kl}} \log \frac{\pi_\theta(o \mid p)}{\pi_{\mathrm{ref}}(o \mid p)} + \beta_{\mathrm{kl}} \log Z(p).
    \label{eq:implicit_reward}
\end{equation}
Substituting \eqref{eq:implicit_reward} into the Bradley--Terry model in \eqref{eq:bt} and taking the negative log-likelihood yields the DPO loss, which optimizes the policy directly without learning an explicit reward model:
\begin{equation}
    \Ls_{\mathrm{DPO}}(\pi_\theta; \pi_{\mathrm{ref}}) = -\E_{(p, o_w, o_l) \sim \mathcal{D}}  \left[\log \sigmoid \left(\beta_{\mathrm{kl}} \log \frac{\pi_\theta(o_w \mid p)}{\pi_{\mathrm{ref}}(o_w \mid p)} - \beta_{\mathrm{kl}} \log \frac{\pi_\theta(o_l \mid p)}{\pi_{\mathrm{ref}}(o_l \mid p)}\right)\right].
    \label{eq:dpo}
\end{equation}

\paragraph{Group Relative Policy Optimization (GRPO).}

GRPO~\citep{Shao2024DeepSeekMathPT} is an on-policy RL method that replaces PPO's learned value baseline with a \emph{group-relative} advantage estimated from multiple samples per prompt, and has become a standard post-training recipe for reasoning LLMs. For each prompt $p$, a behavior policy $\pi_{\mathrm{old}}$ samples a group of $G$ completions $\{o_1, \ldots, o_G\}$, each of which receives a scalar reward $R_i$; the group is then used to form the normalized advantage
\begin{equation}
    A_i \;:=\; \frac{R_i - \bar{R}}{\hat{\sigma}_R + \varepsilon_A},
    \qquad
    \bar{R} := \frac{1}{G}\sum_{j=1}^G R_j,
    \qquad
    \hat{\sigma}_R^2 := \frac{1}{G}\sum_{j=1}^G (R_j - \bar{R})^2.
    \label{eq:grpo_adv}
\end{equation}
Writing the per-token policy ratio
$\rho_{i,t}(\theta) := \pi_\theta(o_{i,t}\mid p, o_{i,<t})/\pi_{\mathrm{old}}(o_{i,t}\mid p, o_{i,<t})$,
the GRPO objective combines a PPO-style clipped surrogate with a KL anchor to a reference policy $\pi_{\mathrm{ref}}$:
\begin{equation}
    \Ls_{\mathrm{GRPO}}(\pi_\theta)
    \;=\;
    -\E_{p,\{o_i\}\sim\pi_{\mathrm{old}}} \left[
    \frac{1}{G}\sum_{i=1}^G \frac{1}{|o_i|}\sum_{t=1}^{|o_i|}
    \min \bigl(\rho_{i,t}(\theta)\,A_i,\;\mathrm{clip}(\rho_{i,t}(\theta),1{-}\epsilon,1{+}\epsilon)\,A_i\bigr)
    \right]
    + \beta_{\mathrm{kl}}\,\KL[\pi_\theta\|\pi_{\mathrm{ref}}].
    \label{eq:grpo_loss}
\end{equation}

\subsection{ABC-Align for offline preference learning: RM and DPO}
\label{sec:abc_rmdpo}

Reward-model training in~\eqref{eq:rm_loss} and DPO in~\eqref{eq:dpo} are both offline preference-learning problems: the labeled and unlabeled examples come from a fixed distribution over preference pairs, independent of the current parameters $\theta_t$.
We instantiate ABC-Align in these settings by applying Algorithm~\ref{alg:abcssl} from \Secref{sec:plugin} verbatim, with the joint corrected-norm controller updating $(a_t, b_t)$ at each step.
The only setting-specific detail is the form of the per-example loss, which we capture next.

Both objectives can be written as the Bradley--Terry log-likelihood
\begin{equation}
    \loss_i(\theta; y_i)
    \;:=\;
    -\,y_i \log \sigmoid(m_i(\theta)) \;-\; (1 - y_i) \log\bigl(1 - \sigmoid(m_i(\theta))\bigr),
    \label{eq:bt_per_example}
\end{equation}
with a (soft or hard) preference label $y_i \in [0, 1]$ and a scalar margin
\begin{equation}
    m_i(\phi) \;=\; r_\phi(p_i, o_{1,i}) - r_\phi(p_i, o_{2,i}) \quad\text{(RM),}
    \quad\;
    m_i(\theta) \;=\; \beta_{\mathrm{kl}} \log \frac{\pi_\theta(o_{1,i} \mid p_i)\,\pi_{\mathrm{ref}}(o_{2,i} \mid p_i)}{\pi_\theta(o_{2,i} \mid p_i)\,\pi_{\mathrm{ref}}(o_{1,i} \mid p_i)} \quad\text{(DPO).}
    \label{eq:scalar_margins}
\end{equation}
On a labeled example, $y_i \in \{0, 1\}$ records which of the two responses the human prefers ($y_i = 1$ if $o_{1,i}$ is preferred over $o_{2,i}$); on an unlabeled example the human preference is unavailable, and the teacher instead supplies a soft verdict $\tilde y_i := f(p_i, o_{1,i}, o_{2,i}) \in [0, 1]$ on the same pair.
The two responses are listed in a fixed but arbitrary order, so the win/lose roles are not known a priori on unlabeled data.
The chain rule then gives a single-Jacobian factorization that the controller exploits.

\begin{lemma}[Scalar-margin factorization]
\label{lem:scalar_margin}
With residuals $r_i := \sigmoid(m_i(\theta)) - y_i$, $r_i^f := \sigmoid(m_i(\theta)) - \tilde y_i$ and margin Jacobian $J_i := \nabla_\theta m_i(\theta)$,
\[
    \nabla_\theta \loss_i(\theta; y_i) \;=\; r_i\,J_i,
    \qquad
    \nabla_\theta \loss_i(\theta; \tilde y_i) \;=\; r_i^f\,J_i.
\]
\end{lemma}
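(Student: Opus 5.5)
The plan is a direct chain-rule computation on the per-example Bradley--Terry loss in \eqref{eq:bt_per_example}, viewed as a composition $\loss_i(\theta; y) = h_y(m_i(\theta))$ with the scalar outer function
\[
    h_y(m) \;:=\; -\,y \log \sigmoid(m) \;-\; (1-y)\log\bigl(1-\sigmoid(m)\bigr), \qquad y \in [0,1].
\]
By the chain rule, $\nabla_\theta \loss_i(\theta; y) = h_y'(m_i(\theta))\, J_i$. The whole argument therefore reduces to showing the one-dimensional identity $h_y'(m) = \sigmoid(m) - y$ for every label value $y \in [0,1]$. The result is then read off twice: with $y = y_i$ on labeled pairs and with $y = \tilde y_i$ on teacher-labeled pairs. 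Since $m_i$ does not depend on which label is supplied, the same Jacobian $J_i$ appears in both expressions.

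The scalar derivative follows from the standard sigmoid identities $\frac{d}{dm}\log\sigmoid(m) = 1-\sigmoid(m)$ and $\frac{d}{dm}\log(1-\sigmoid(m)) = -\sigmoid(m)$, both consequences of $\sigmoid' = \sigmoid(1-\sigmoid)$. Substituting,
\[
    h_y'(m) \;=\; -y\bigl(1-\sigmoid(m)\bigr) + (1-y)\,\sigmoid(m) \;=\; \sigmoid(m) - y .
\]
The key point is that $h_y$ is affine in $y$ with a coefficient that is exactly linear in $\sigmoid(m)$. This lets soft labels $\tilde y_i \in [0,1]$ go through without modification, which matters because the teacher supplies soft verdicts.

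It remains to confirm that $J_i = \nabla_\theta m_i(\theta)$ is well defined in both instantiations of \eqref{eq:scalar_margins}. In the RM case it is the difference of the reward-model gradients at the two responses. In the DPO case the reference-policy factors are constant in $\theta$, so $J_i = \beta_{\mathrm{kl}}\bigl(\nabla_\theta \log \pi_\theta(o_{1,i}\mid p_i) - \nabla_\theta \log \pi_\theta(o_{2,i}\mid p_i)\bigr)$. This only requires differentiability of $r_\phi$ and $\log\pi_\theta$, which I will assume implicitly.

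There is no substantive obstacle here. The only care needed is to observe that the fixed but arbitrary ordering of the two responses is absorbed consistently: swapping them flips the sign of $m_i$ and replaces $y$ by $1-y$, and the identity above is invariant under that swap. Consequently, no win/lose roles have to be known on unlabeled data.
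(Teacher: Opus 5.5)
Your proof is correct and takes the same route as the paper, which justifies the lemma in one line by the chain rule applied to the Bradley--Terry loss composed with the scalar margin. Your computation $h_y'(m) = \sigma(m) - y$ from $\sigma' = \sigma(1-\sigma)$, valid for any soft label $y \in [0,1]$, is exactly the derivation the paper leaves implicit.
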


The labeled component gradients $\glab_t = \tfrac{1}{n}\sum_i r_i\,J_i$ and $\gnf_t = \tfrac{1}{n}\sum_i r_i^f\,J_i$ therefore share the per-example Jacobian and differ only in the per-example residual scalar.
A single labeled forward pass produces $\{m_i(\theta_t)\}_{i=1}^n$, and two weighted backwards (with weights $\{r_i\}$ and $\{r_i^f\}$) produce both aggregates through the shared activations.
The half-batch aggregates $g_A, g_A^f, g_B, g_B^f$ used to form the split-batch $\widehat H_t$ in~\eqref{eq:H_hat} are obtained by restricting the same backwards to indices in $A$ or $B$, with no extra backward passes.
Forming $\gtNf_t$ on the unlabeled batch costs one unlabeled forward and one weighted backward.
Compared to a labeled-only baseline, the controller refresh adds exactly one extra labeled backward and a handful of scalar all-reduces; no per-example Jacobian norms are required.

\subsection{ABC-Align for on-policy GRPO}
\label{sec:abc_grpo}

GRPO~\citep{Shao2024DeepSeekMathPT} is an on-policy method: at each iteration the behavior policy $\pi_{\mathrm{old}}$ is set to the current parameters $\pi_{\theta_t}$, fresh rollouts are sampled, and the gradient distribution shifts across iterations as $\theta_t$ moves.
This shift complicates the joint corrected-norm controller of \Secref{sec:plugin}, whose regret guarantee (Proposition~\ref{prop:joint_regret}) is against the \emph{best fixed} $(a^\dagger, b^\dagger)$ over $T$ rounds.
When the per-step oracle $(a_t^\star, b_t^\star)$ of Proposition~\ref{prop:oracle} itself drifts substantially under the changing policy, the best fixed pair is a poor approximation of the running oracle, and online tracking of the surrogate $\ell_t$ becomes less informative than direct estimation of the per-step oracle at each iteration.
We therefore bypass the corrected-norm controller for GRPO and use a \emph{finite-sample plug-in} instead: at each step, estimate $(\Bt, \sigma^2, \sigma_f^2, C)$ from the freshly sampled labeled prompt-group batch and substitute into the closed form~\eqref{eq:oracle_ab} to obtain $(a_t, b_t)$.

\paragraph{Source-specific advantages and prompt-group atom.}
At iteration $t$, draw a labeled prompt batch $\{p_j\}_{j=1}^n$ and an unlabeled prompt batch $\{p_j\}_{j=n+1}^{n+N}$.
For each prompt $p_j$, sample a rollout group $\{o_{j,i}\}_{i=1}^G \sim \pi_{\mathrm{old}}(\cdot \mid p_j)$.
Labeled prompts are scored by both a trusted reward source (human feedback or a human-anchored reward model) and the teacher $f$ on the \emph{same} rollouts, yielding $\{R_{j,i}^y, R_{j,i}^f\}_{i=1}^G$; unlabeled prompts are scored by the teacher only.
We normalize each source independently into a group-relative advantage vector via~\eqref{eq:grpo_adv}:
\begin{equation}
    A_{j,i}(y) \;:=\; \frac{R_{j,i}^y - \bar R_j^y}{\hat\sigma_{R_j^y} + \varepsilon_A}\;,
    \qquad 
    A_{j,i}(\tilde y) \;:=\; \frac{R_{j,i}^f - \bar R_j^f}{\hat\sigma_{R_j^f} + \varepsilon_A}\;.
    \label{eq:grpo_adv_label}
\end{equation}
The atomic sample is the prompt-group $Z_j := (p_j, \{o_{j,i}\}_{i=1}^G)$.
Source-separated normalization is required: the map $R \mapsto A$ is nonlinear, so the naive shortcut of forming a mixed reward and normalizing once does not commute with reward mixing and yields a gradient that is no longer of the ABC form.

\paragraph{ABC mixing on the advantage component.}
Decompose the GRPO objective~\eqref{eq:grpo_loss} into a reward-dependent advantage component $\Ls^{\mathrm{adv}}$ and a label-independent KL anchor $\Ls^{\mathrm{KL}}$.
Only $\Ls^{\mathrm{adv}}$ depends on the reward source, so only its gradient carries a teacher-induced bias.
With the source-specific batch advantage gradients, 
\begin{equation}
    \glab_t := \nabla_\theta \Ls^{\mathrm{adv}}\bigl(\pi_\theta; A(y)\bigr)\bigm|_{\text{lab.}},
    \quad
    \gnf_t := \nabla_\theta \Ls^{\mathrm{adv}}\bigl(\pi_\theta; A(\tilde y)\bigr)\bigm|_{\text{lab.}},
    \quad
    \gtNf_t := \nabla_\theta \Ls^{\mathrm{adv}}\bigl(\pi_\theta; A(\tilde y)\bigr)\bigm|_{\text{unl.}},
\end{equation}
the ABC-GRPO update applies ABC mixing to the advantage component and adds the KL gradient $g_t^{\mathrm{KL}} := \nabla_\theta \Ls^{\mathrm{KL}}$ as a deterministic offset: 
\begin{equation}
    \glam^{\mathrm{ABC\text{-}GRPO}}
    \;:=\;
    g_t^{\mathrm{KL}} \,+\, \gnf_t \,+\, a_t\,\dt \,+\, b_t\,c_t,
    \qquad
    \dt := \glab_t - \gnf_t,\;\; c_t := \gtNf_t - \gnf_t.
    \label{eq:abc_grpo_update}
\end{equation}
We treat the KL gradient term as regularization and only apply our approach to the source-specific batch advantage gradients.

\paragraph{Prompt-group MSE decomposition.}
Promote the per-example atom $(x, y) \sim P$ in Proposition~\ref{prop:mse} to the prompt-group atom $Z_j$, and let $g_j^h(\theta)$, $g_j^f(\theta)$ denote the per-group advantage gradients (KL excluded) under labels $A_j(y)$ and $A_j(\tilde y)$.
Define the per-group means $\mu_t^h := \E_t[g_j^h]$, $\mu_t^f := \E_t[g_j^f]$, bias $B_{t,\mathrm{G}} := \mu_t^f - \mu_t^h$, and second-moment statistics
\begin{equation}
    \sigma_{t,\mathrm{G}}^2 := \E_t\norm{g_j^h - \mu_t^h}^2,
    \quad
    \sigma_{t,\mathrm{G},f}^2 := \E_t\norm{g_j^f - \mu_t^f}^2,
    \quad
    C_{t,\mathrm{G}} := \E_t\inner{g_j^h - \mu_t^h}{g_j^f - \mu_t^f}.
    \label{eq:grpo_pop_cov}
\end{equation}
\begin{assumption}[Prompt-group sampling]
\label{asm:grpo_exchange}
At each iteration $t$, the $n$ labeled and $N$ unlabeled prompts are drawn i.i.d.\ from a common prompt distribution $\mathcal{D}_P$.
Conditional on $\mathcal F_{t-1}$, every rollout group is sampled from $\pi_{\mathrm{old}}$, and the human and teacher rewards on each labeled prompt are evaluated on the same sampled rollouts.
\end{assumption}

\noindent Assumption~\ref{asm:grpo_exchange} makes the prompt-group atom $Z_j$ play exactly the role of the per-example atom $(x,y)$ in \Secref{sec:abc_ssl}: the groups are i.i.d.\ draws, and the shared-rollout condition is what couples $g_j^h$ and $g_j^f$ on the labeled prompts through the cross covariance $C_{t,\mathrm{G}}$.

\begin{proposition}[Group-level MSE decomposition]
\label{prop:grpo_mse}
Under Assumption~\ref{asm:grpo_exchange}, for any $\mathcal F_{t-1}$-measurable pair $(a, b)$ the advantage component of the update~\eqref{eq:abc_grpo_update} evaluated at $(a, b)$ has conditional mean-squared error
\begin{equation}
    V_{t,\mathrm{G}}(a, b) \;=\; (1 - a)^2 \norm{B_{t,\mathrm{G}}}^2 \;+\; \frac{(1\!-\!a\!-\!b)^2 \sigma_{t,\mathrm{G},f}^2 + a^2 \sigma_{t,\mathrm{G}}^2 + 2a(1\!-\!a\!-\!b)\,C_{t,\mathrm{G}}}{n} \;+\; \frac{b^2\,\sigma_{t,\mathrm{G},f}^2}{N},
    \label{eq:grpo_mse}
\end{equation}
which is the expression of~\eqref{eq:mse} with $(\Bt, \sigma^2, \sigma_f^2, C)$ replaced by $(\norm{B_{t,\mathrm{G}}}, \sigma_{t,\mathrm{G}}^2, \sigma_{t,\mathrm{G},f}^2, C_{t,\mathrm{G}})$.
Consequently $V_{t,\mathrm{G}}$ is a convex quadratic in $(a,b)$ and its minimizer is given by Proposition~\ref{prop:oracle} under the same substitution.
\end{proposition}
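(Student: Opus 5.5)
The plan is to reduce Proposition~\ref{prop:grpo_mse} to Proposition~\ref{prop:mse} by showing that the prompt-group atoms satisfy the same i.i.d.\ and independence structure that the per-example proof used. First, I will write the batch advantage gradients as averages over groups:
\[
\glab_t = \tfrac1n\sum_{j\le n} g_j^h, \qquad \gnf_t=\tfrac1n\sum_{j\le n} g_j^f, \qquad \gtNf_t=\tfrac1N\sum_{j>n} g_j^f .
\]
This step needs care: the GRPO loss~\eqref{eq:grpo_loss} is a per-prompt average, so the gradient of the advantage component over a batch is the average of per-group gradients. Source-separated normalization~\eqref{eq:grpo_adv_label} is computed within each group only, so each $g_j^h$ and $g_j^f$ is a measurable function of $Z_j$ (and its reward labels) and $\theta_t$ alone. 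Under Assumption~\ref{asm:grpo_exchange}, conditional on $\mathcal F_{t-1}$, the pairs $(g_j^h,g_j^f)$ for $j\le n$ are i.i.d. The vectors $g_j^f$ for $j>n$ are i.i.d.\ with the same law as the labeled-prompt $g_j^f$, since prompts share $\mathcal D_P$ and rollouts share $\pi_{\mathrm{old}}$. They are also independent of the labeled block. The KL term is a deterministic offset and is excluded from both the estimator and the target, so the target is $\mu_t^h$.

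Second, I will write the advantage part of~\eqref{eq:abc_grpo_update} as $(1-a-b)\gnf_t + a\glab_t + b\gtNf_t$, matching~\eqref{eq:abc_grad}. Its conditional mean is $(1-a-b)\mu_t^f + a\mu_t^h + b\mu_t^f$, so the bias is $(1-a)(\mu_t^f-\mu_t^h)=(1-a)B_{t,\mathrm G}$, exactly as in Lemma~\ref{lem:bias}. The MSE then splits into squared bias plus conditional variance, because the cross term has zero conditional mean given that $(a,b)$ is $\mathcal F_{t-1}$-measurable. For the variance, I split the centered estimator into a labeled part $\tfrac1n\sum_{j\le n}\bigl[(1-a-b)(g_j^f-\mu_t^f)+a(g_j^h-\mu_t^h)\bigr]$ and an unlabeled part $\tfrac bN\sum_{j>n}(g_j^f-\mu_t^f)$. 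These two parts are independent with zero mean, so their cross term vanishes. Within each part, i.i.d.\ centered summands give variance equal to the per-atom second moment divided by the batch size. Expanding the labeled per-atom second moment gives $(1-a-b)^2\sigma_{t,\mathrm G,f}^2 + a^2\sigma_{t,\mathrm G}^2 + 2a(1-a-b)C_{t,\mathrm G}$, which yields~\eqref{eq:grpo_mse}.

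Third, the consequences follow directly. Equation~\eqref{eq:grpo_mse} is syntactically~\eqref{eq:mse} under the stated substitution, so Lemma~\ref{lem:convex} and Proposition~\ref{prop:oracle} apply verbatim. Their proofs use only the algebraic form together with $\sigma_{t,\mathrm G,f}^2\ge0$ and Cauchy--Schwarz, $C_{t,\mathrm G}^2\le\sigma_{t,\mathrm G}^2\sigma_{t,\mathrm G,f}^2$, and both hold for the group-level statistics.

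The main obstacle is the first step: justifying that each group's gradient depends only on its own group. I will argue this from the fact that the advantage normalization~\eqref{eq:grpo_adv_label} uses only within-group statistics, and that the PPO ratio and clipping are per-token functions of $\theta_t$, $\pi_{\mathrm{old}}$, and that group's rollouts. If the loss used a global token-count normalization across the batch, independence would fail. I will therefore note that the per-prompt $1/G$ and $1/|o_i|$ averaging of~\eqref{eq:grpo_loss} is what is assumed.
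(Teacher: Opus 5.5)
Your proposal is correct and follows the paper's route. The paper's proof is a one-line reduction to Proposition~\ref{prop:mse} with the prompt-group atom $Z_j$ in place of $(x,y)$: Assumption~\ref{asm:grpo_exchange} supplies the i.i.d.\ and shared-rollout structure, and the KL gradient is treated as a conditionally deterministic offset. Your version just spells out what that reduction takes for granted, namely that within-group advantage normalization and per-prompt averaging make each batch gradient an average of per-group gradients depending only on their own group; that is a worthwhile caveat (it would fail under batch-level token normalization), not a different argument.
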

\noindent The proof is identical to that of Proposition~\ref{prop:mse} (Appendix~\ref{app:prop_mse}) with $Z_j$ in place of $(x,y)$; Assumption~\ref{asm:grpo_exchange} supplies the i.i.d.\ and shared-rollout properties that proof uses.
The KL gradient $g_t^{\mathrm{KL}}$ is conditionally deterministic given $\mathcal F_{t-1}$, so it shifts the gradient target but contributes neither to $B_{t,\mathrm{G}}$ nor to the variance terms.

\paragraph{Split-batch plug-in estimates from aggregate half-batch gradients.}
On the freshly sampled on-policy batch, $\rho_{j,i,t}(\theta) = 1$ and the clip in~\eqref{eq:grpo_loss} is inactive, so the per-(prompt, completion) advantage gradient factors as the GRPO analogue of Lemma~\ref{lem:scalar_margin}:
\begin{equation}
    \nabla_\theta \loss_{j,i}(y) \; = \; -A_{j,i}(y)\,J_{j,i},
    \qquad
    J_{j,i} \;:=\; \frac{1}{|o_{j,i}|}\sum_{\tau=1}^{|o_{j,i}|} \nabla_\theta \log\pi_\theta(o_{j,i,\tau} \mid p_j, o_{j,i,<\tau}).
    \label{eq:grpo_pergrad}
\end{equation}
We estimate the per-group statistics in~\eqref{eq:grpo_pop_cov} by the same split-batch device as Lemma~\ref{lem:split_H}: partition the $n$ labeled prompt-groups into halves $A, B$ by a deterministic function of the indices, form the half-batch advantage gradients $g_A^h, g_A^f, g_B^h, g_B^f$ (averages of $g_j^h$, resp.\ $g_j^f$, over $j$ in each half), and set $d_A := g_A^h - g_A^f$, $d_B := g_B^h - g_B^f$. Define
\begin{equation}
\begin{aligned}
    \widehat\sigma_{t,\mathrm{G}}^2 &:= \tfrac{n}{4}\norm{g_A^h - g_B^h}^2,
    &\qquad
    \widehat\sigma_{t,\mathrm{G},f}^2 &:= \tfrac{n}{4}\norm{g_A^f - g_B^f}^2,\\
    \widehat C_{t,\mathrm{G}} &:= \tfrac{n}{4}\inner{g_A^h - g_B^h}{g_A^f - g_B^f},
    &\qquad
    \widehat{\norm{B_{t,\mathrm{G}}}^2} &:= \bigl(\inner{d_A}{d_B}\bigr)_+.
\end{aligned}
    \label{eq:grpo_splitbatch}
\end{equation}
Under Assumption~\ref{asm:grpo_exchange} the two halves are independent draws from the prompt-group distribution, so each estimator is conditionally unbiased before the clipping $(\cdot)_+$: e.g., $\E_t\norm{g_A^h - g_B^h}^2 = \tfrac{4}{n}\,\sigma_{t,\mathrm{G}}^2$ and $\E_t\inner{d_A}{d_B} = \norm{B_{t,\mathrm{G}}}^2$ (the clip only guards against negative realizations of the squared-bias estimate). All four quantities are scalar dot products of aggregate half-batch gradients; no per-completion gradients or Gram matrices are formed.
Substituting $(\widehat{\norm{B_{t,\mathrm{G}}}^2}, \widehat\sigma_{t,\mathrm{G}}^2, \widehat\sigma_{t,\mathrm{G},f}^2, \widehat C_{t,\mathrm{G}})$ into~\eqref{eq:oracle_ab} gives the plug-in pair $(a_t, b_t)$ used by the ABC-GRPO update.

\begin{remark}[The plug-in pair is not predictable]
\label{rem:grpo_predictability}
The plug-in $(a_t, b_t)$ is computed from the same round-$t$ labeled batch that produces $\glab_t, \gnf_t$, so it is \emph{not} $\mathcal F_{t-1}$-measurable, and Proposition~\ref{prop:grpo_mse}---whose proof pulls $(a, b)$ through the conditional variance, licensed only for $\mathcal F_{t-1}$-measurable pairs (cf.\ Definition~\ref{def:abc_estimator})---describes the MSE landscape that the plug-in \emph{targets} rather than the realized MSE of the deployed update, which acquires additional cross-terms from the correlation between $(a_t, b_t)$ and the round-$t$ gradients. Predictability (and with it the exact identity~\eqref{eq:grpo_mse} at the deployed pair) can be restored by computing the plug-in from the previous round's labeled batch, at the cost of a one-round delay.
\end{remark}

\section{Experiments}
\label{sec:experiments}
We empirically validate ABC-Align across four settings of increasing scale and realism: a controlled synthetic preference task (\Secref{sec:exp_synthetic}), reward model (RM) training on real preference data (\Secref{sec:exp_rm}), DPO-based policy alignment (\Secref{sec:exp_dpo}), and on-policy GRPO (\Secref{sec:exp_grpo}).
In the synthetic setting, we compare against four baselines:
\begin{enumerate}[label={(\roman*)}, itemsep=0pt]
    \item \textbf{Labeled-only}, which trains exclusively on the $n$ human-labeled samples;
\item \textbf{Pseudo-label-only}, which trains on all $N$ pseudo-labeled samples, ignoring human labels;
\item \textbf{Doubly Robust (DR)}, corresponding to $(a,b) = (1, N/(N+n))$ in~\eqref{eq:abc_grad}, fixed over time; and
\item \textbf{PP-SSL}~\citep{Shoham2025PredictionPoweredSL}, the prediction-powered method from~\eqref{eq:ppssl_grad}.
\end{enumerate}
In the LLM experiments, Doubly Robust is replaced with \textbf{Na\"ive combined}, which pools human and teacher labels without any correction, a natural baseline used in practice.
All methods within each experiment use identical architectures, optimizers, and batch sizes; only the gradient estimator differs.

\subsection{Synthetic preference experiment on linear reward model training}
\label{sec:exp_synthetic}
We first validate the theoretical predictions of \Secref{sec:abc_ssl} on a synthetic preference task that mimics reward model training and has fully observable ground-truth quantities, allowing us to directly verify the bias--variance trade-off before moving on to real data.

\paragraph{Setup.}
A ground-truth linear reward $r^\star(x) = {w^\star}^\top x$, where $w^\star \in \R^m$ and $m = 20$, generates pairwise preferences via the Bradley--Terry model with annotator noise $\sigma_\epsilon = 0.5$.
A biased teacher produces pseudo labels using a corrupted reward $r^f(x) = (w^\star + \mu v)^\top x$, where $v$ is a fixed unit-norm perturbation and $\mu \ge 0$ controls bias intensity; teacher labels are deterministic given $r^f$.
We use $n = 50$ labeled pairs and $N = 5{,}000$ pseudo-labeled pairs ($100\times$ ratio).
We fit a linear reward model $r_\phi(x) = \phi^\top x$ by minimizing~\eqref{eq:rm_loss} on pairwise feature differences.
All methods share the same optimizer Adam~\citep{Kingma2014AdamAM} with $\text{lr} = 10^{-3}$, same batch size~$32$, and run for $T = 2{,}000$ iterations; they differ only in the gradient estimator.
For ABC-Align, the pair $(a_t, b_t)$ is updated by the joint corrected-norm controller of \Secref{sec:plugin}: at each step $\widehat H_t$ from~\eqref{eq:H_hat} is EMA-smoothed (rate $\alpha_H = 0.05$; a variance-reduction heuristic, see Remark~\ref{rem:ema_heuristic}), and projected AdaGrad on $\ell_t$ over $\mathcal K = [0, 1] \times [0, b_{\max}]$ produces $(a_{t+1}, b_{t+1})$. For PP-SSL, $\lambda_t$ is tuned online by projected gradient descent on $\|\hat g_t\|^2$.
We sweep $\mu \in \{0.0, 0.2, \ldots, 1.0\}$ and average over $25$ trials.
Additional details are in Appendix~\ref{app:synth_details}.

\begin{figure}[h]
    \centering
    \includegraphics[width=\textwidth]{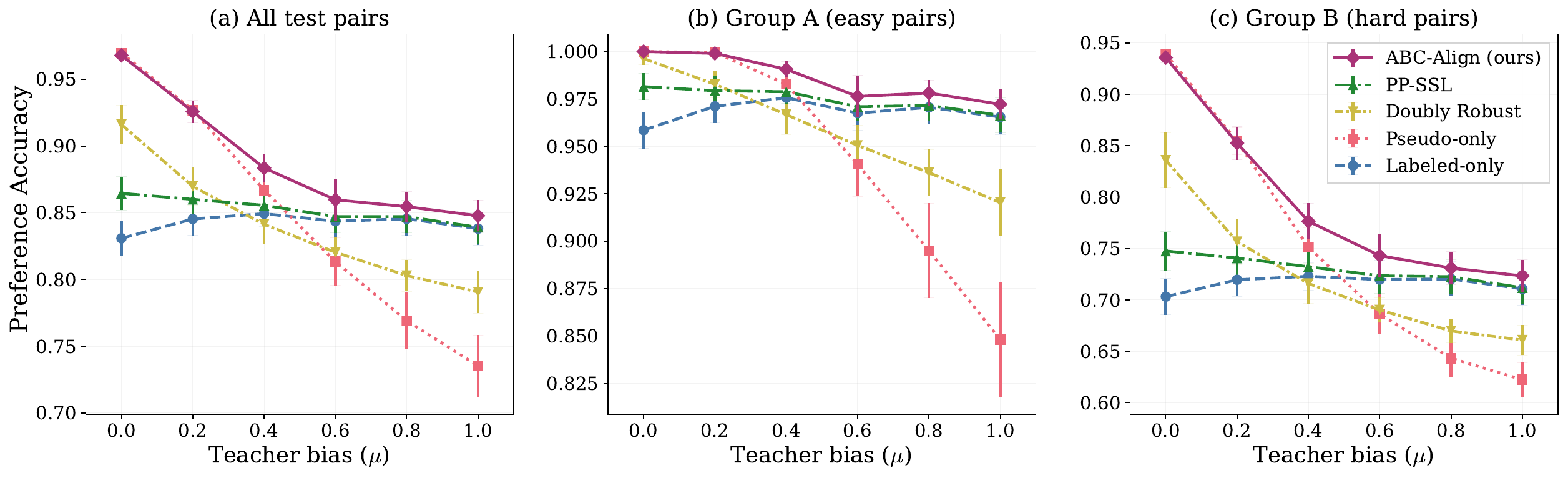}
    \caption{Preference accuracy  vs.\ teacher bias on synthetic data ($n = 50$, $N = 5{,}000$, $25$ trials) shows that ABC-Align robustly improves upon the best performing baseline at every operating point of $\mu$.  For a more fine-grained analysis, test pairs are split by reward-gap magnitude into easy (Group~A) and hard (Group~B) subgroups. ABC-Align tracks pseudo-only when the teacher is accurate (small $\mu$), remains strongest on hard pairs, and degrades smoothly as teacher bias increases; PP-SSL cannot fully exploit pseudo-labels when the teacher is accurate.}
    \label{fig:synthetic}
\end{figure}

\paragraph{Results.}
Figure~\ref{fig:synthetic} reports preference accuracy, i.e., $\Pr[{\rm sign}({w^\star}^\top(x_1-x_2)) = {\rm sign}(\phi^\top (x_1-x_2))]$, on $10{,}000$ held-out pairs, split at the median reward gap into easy (Group~A) and hard (Group~B) subsets.
When the teacher is accurate ($\mu \le 0.4$), ABC-Align nearly matches pseudo-only performance while outperforming labeled-only, DR, and PP-SSL.
The gain concentrates on hard pairs, where $n = 50$ labels leave unbiased estimators variance-limited.
As $\mu$ increases, the controller shifts toward larger $a_t$ (stronger human correction) and ABC-Align degrades smoothly rather than collapsing---consistent with Proposition~\ref{prop:mse}.
Figure~\ref{fig:teaser} additionally visualizes the $(a, b)$ plane: ABC-Align's terminal $(a_T, b_T)$ lands near the optimal region of the heatmap across teacher-bias levels, while the baselines remain pinned to their fixed slices. Figures~\ref{fig:synth_traj} and~\ref{fig:synth_mse} further show that $(a_t, b_t)$ converges toward the per-step oracle and that ABC-Align attains the lowest gradient MSE throughout training.

\begin{figure*}[t]
    \centering
    \includegraphics[width=\textwidth]{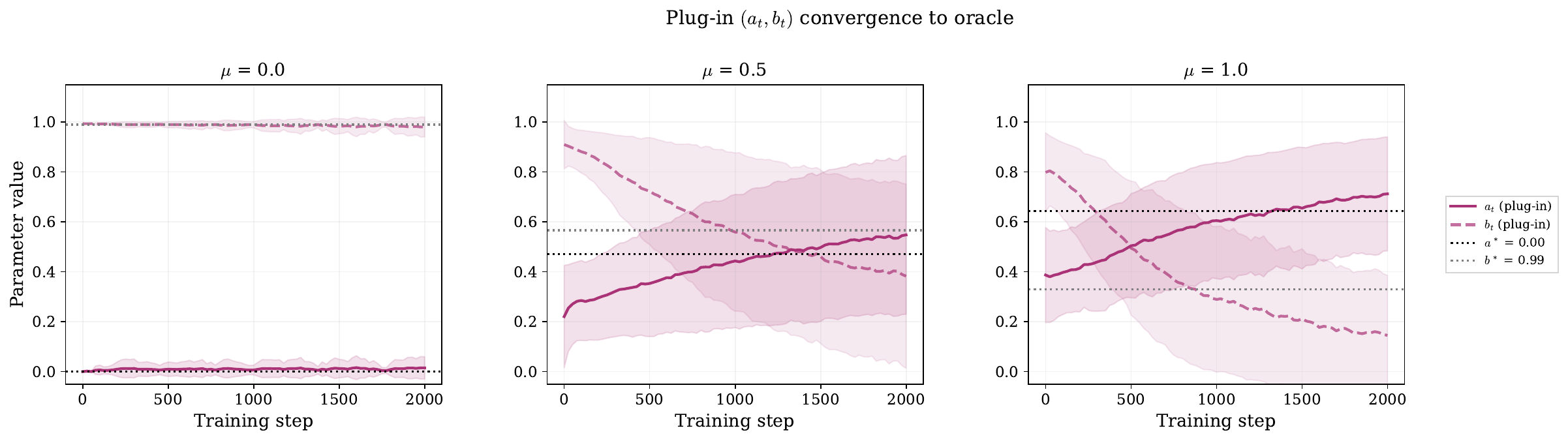}
    \caption{Evolution of the ABC-Align plug-in parameters during training, at teacher-bias levels $\mu \in \{0.0, 0.5, 1.0\}$. Solid and dashed lines show the learned $(a_t, b_t)$; dotted lines show the per-step oracle $(a^\star, b^\star)$ of Proposition~\ref{prop:oracle}, evaluated at the population statistics of the corresponding $\mu$ (so the oracle target differs across panels). The learned trajectory moves toward the oracle target and shifts from pseudo-label-dominant behavior at low teacher bias toward stronger human correction as bias increases. Shaded bands show $\pm 1$ standard error across trials.
    }
    \label{fig:synth_traj}
\end{figure*}

\begin{figure*}[t]
    \centering
    \includegraphics[width=\textwidth]{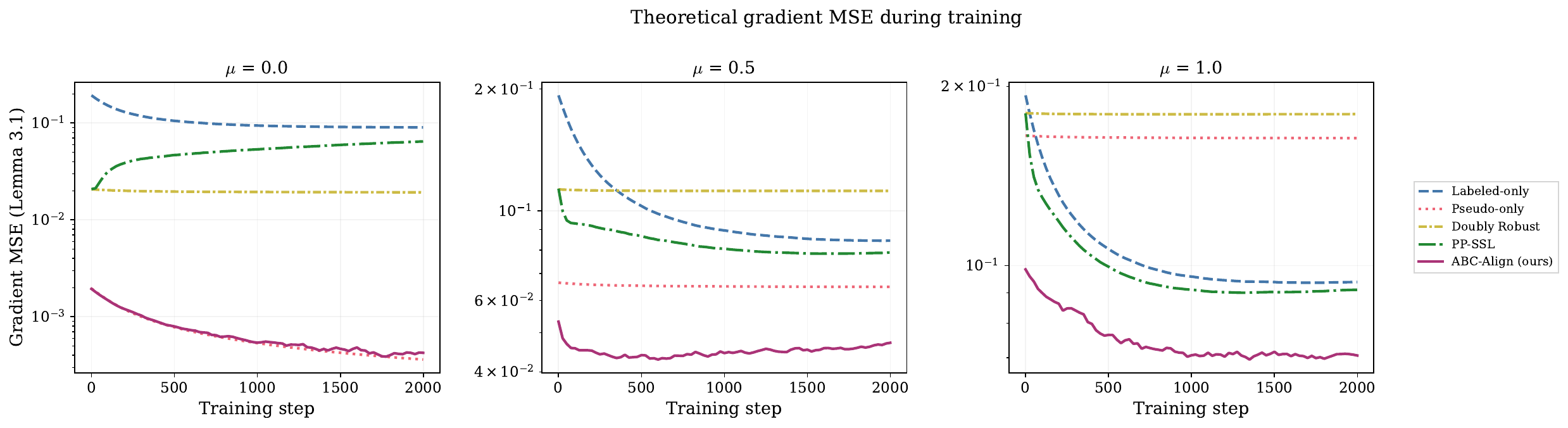}
    \caption{Theoretical gradient MSE computed along training trajectories. ABC-Align consistently attains the lowest estimator MSE among the compared methods, with the largest gap when the teacher is accurate or moderately biased.}
    \label{fig:synth_mse}
\end{figure*}
\subsection{Reward model training}
\label{sec:exp_rm}

We next evaluate ABC-Align on a real reward-model training problem using the same Bradley--Terry objective as in~\eqref{eq:rm_loss}.
Our main experiment uses Llama-3.2-3B-Instruct~\citep{Dubey2024TheL3} as the base reward model and trains only the scalar score head while freezing the backbone, which isolates the effect of the gradient estimator from full-model optimization.
We use the Skywork Reward Preference 80K~\citep{Liu2024SkyworkRewardBO,Liu2025SkyworkRewardV2SP} training split, shuffle it once, reserve $2{,}000$ examples as a held-out pairwise-accuracy set, and partition the remainder into a small human-labeled subset and a large unlabeled pool.
We sweep the labeled fraction over $\{0.01, 0.05, 0.10\}$.
On labeled examples we retain the human preference; on unlabeled examples we supply teacher preferences.
In the main sweep, teacher labels are synthesized by flipping the ground-truth preference with probability $\rho \in \{0.05, 0.10, 0.20\}$, yielding a controlled teacher-quality axis on real prompts and responses that mirrors the synthetic experiment. The flip-rate protocol gives an \emph{unbiased} teacher (label noise is symmetric across classes); we additionally validate on a structurally biased real teacher below (Table~\ref{tab:rm_real_teacher}), where teacher--human disagreements concentrate on response style rather than spreading uniformly.

All methods share the same architecture, tokenizer, optimizer, and update budget, differing only in the gradient estimator.
We compare ABC-Align against three reward-model baselines: \textbf{Labeled-only}, \textbf{Na\"ive combined}, and \textbf{PP-SSL}.
To test whether the online tuning rule is doing real work, we additionally run fixed-mixture ablations for both ABC-Align and PP-SSL at a representative label fraction and teacher-noise level.
Evaluation combines in-domain held-out pairwise accuracy with out-of-domain reward-model benchmarks: RewardBench~\citep{Lambert2024RewardBenchER,Malik2025RewardBench2A} and PPE~\citep{Frick2024HowTE}, where we emphasize human-preference accuracy and the math best-of-$k$ proxy.
Sequence lengths, batch sizes, the optimizer, and the frozen-backbone protocol are deferred to Appendix~\ref{app:rm_details}.

\paragraph{Results: in-domain pairwise accuracy.}
Figure~\ref{fig:rm_pairwise} reports held-out pairwise accuracy on the Skywork validation split across the three teacher flip rates $\rho \in \{0.05, 0.10, 0.20\}$ and label fractions $\{0.01, 0.05, 0.10\}$. ABC-Align is the strongest method at every point in the sweep.
At the hardest setting ($\rho = 0.20$, label fraction $0.01$), it reaches $0.77$ where the next-best baseline (PP-SSL) attains $0.70$ and labeled-only $0.64$. The gap persists as the teacher improves: at $\rho = 0.05$ and label fraction $0.01$, ABC-Align already exceeds what labeled-only achieves with $10\times$ as many human labels.
Naive combined and PP-SSL trail ABC-Align by $3$--$7$ points throughout, and their ordering flips with teacher noise---Naive is competitive at $\rho = 0.05$ but collapses as teacher quality drops, while PP-SSL's unbiased correction keeps it robust under $\rho = 0.20$ at the cost of leaving variance reduction on the table at $\rho = 0.05$.
This pattern matches Proposition~\ref{prop:mse}: by tuning both $(a_t, b_t)$, ABC-Align adapts along the full bias--variance frontier rather than paying either the full pseudo-pooling bias or the unbiased variance floor $S/n$.

\begin{figure}[t]
    \centering
    \includegraphics[width=\textwidth]{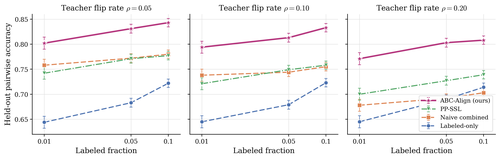}
    \caption{Held-out pairwise accuracy on Skywork-Reward-80K (Llama-3.2-3B-Instruct, frozen backbone) across label fractions $\{0.01, 0.05, 0.10\}$ and teacher flip rates $\rho \in \{0.05, 0.10, 0.20\}$. ABC-Align dominates at every setting; its advantage widens at small label fractions and persists under teacher noise.}
    \label{fig:rm_pairwise}
\end{figure}

\paragraph{Out-of-domain transfer.}
Figure~\ref{fig:rm_ood} confirms that the in-domain gains transfer to out-of-distribution reward-model benchmarks at label fraction $0.05$, $\rho = 0.10$. ABC-Align tops every evaluation: RewardBench overall ($+2.2$ pp over PP-SSL, $+8.0$ pp over labeled-only), RewardBench human-preference ($+3.9$ pp over PP-SSL), PPE human-preference ($+2.7$ pp), and PPE math best-of-$k$ ($+2.4$ pp). The math-reasoning proxy---the most out-of-distribution benchmark relative to Skywork's training mix---preserves the same ordering, indicating the gains come from better preference modeling rather than distributional overfitting to Skywork.

\begin{figure}[t]
    \centering
    \includegraphics[width=0.85\textwidth]{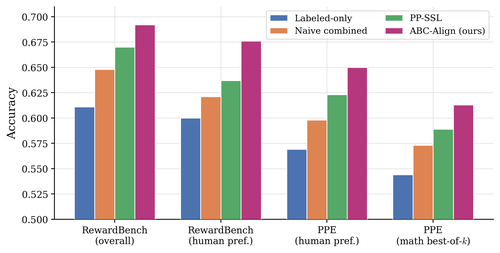}
    \caption{Out-of-domain real-world  reward-model benchmarks (label fraction $0.05$, $\rho = 0.10$). ABC-Align, trained on the Skywork-Reward-80K, achieves the best accuracy on RewardBench (overall and human-preference splits) and on PPE (human-preference and math best-of-$k$). Table~\ref{tab:rm_ood} in Appendix~\ref{app:tables} reports the same results numerically.}
    \label{fig:rm_ood}
\end{figure}

\paragraph{Online tuning vs.\ fixed mixtures.}
Figure~\ref{fig:rm_mixture} probes whether the online $(a_t, b_t)$ rule is doing work beyond a convenient default. We sweep fixed mixtures along two one-dimensional cuts: $\lambda \in [0, 1]$ for PP-SSL (i.e., $(a, b) = (1, \lambda)$) and $t \in [0, 1]$ for ABC-Align along the cut $(a, b) = (1{-}t, t)$.
The ABC-Align cut is nearly monotonically increasing in $t$, peaking around $0.69$ near $t \approx 0.8$---direct evidence that $a < 1$ is strictly helpful here and that the bias decoupling of Lemma~\ref{lem:bias} is practically exploitable. The joint corrected-norm rule (Figure~\ref{fig:rm_mixture}, dashed magenta) lands near this peak at $\sim 0.68$, showing that ABC-Align can robustly adapt to the given problem instance.
The PP-SSL cut, restricted to $a = 1$, peaks at only $\sim 0.67$ near $\lambda \approx 0.6$---below the best ABC-Align fixed mixture---and its online rule (dotted green) underperforms its own best fixed $\lambda$ at $\sim 0.64$. The adaptive ABC-Align rule thus not only accesses a better frontier than any $a = 1$ restriction can reach, but delivers it stably; PP-SSL's online variance-minimization, by contrast, drifts away from its own optimum.

\begin{figure}[t]
    \centering
    \includegraphics[width=0.65\textwidth]{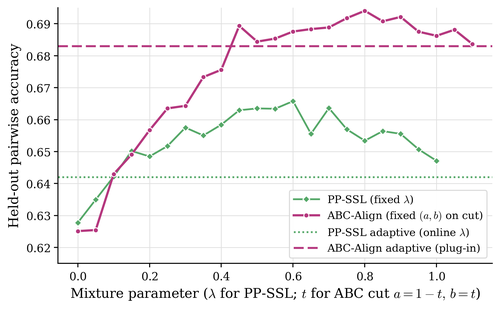}
    \caption{Fixed-mixture ablation at label fraction $0.05$, $\rho = 0.10$. Solid lines: held-out pairwise accuracy for fixed PP-SSL $\lambda$ (green) and fixed ABC-Align $(a, b) = (1{-}t, t)$ (magenta). Dashed/dotted: online joint corrected-norm ABC-Align (dashed magenta) and online AdaGrad PP-SSL (dotted green). ABC-Align's best fixed cut exceeds PP-SSL's best fixed $\lambda$, and ABC-Align's adaptive rule tracks the peak better.}
    \label{fig:rm_mixture}
\end{figure}

\paragraph{Real biased teachers.}
The synthetic flip-rate protocol above gives a controlled but unbiased teacher (label noise is symmetric across classes).
To address asymmetric/structural bias, we additionally evaluate on a \emph{real biased teacher}: pseudo labels on the unlabeled pool are produced by a smaller off-the-shelf reward model (Llama-3.2-1B-Instruct score head, trained on a disjoint preference mix) rather than by flipping ground truth.
This teacher exhibits genuine systematic bias---its disagreements with human preference are concentrated on specific response styles (verbosity, formatting) rather than spread uniformly.
We measure \emph{teacher--human agreement rate} on the held-out split and report final pairwise accuracy at label fraction $0.05$.

\begin{table}[t]
    \centering
    \caption{Real biased teacher on Skywork-Reward-80K (Llama-3.2-3B-Instruct, frozen backbone, label fraction $0.05$). Teacher--human agreement is measured on the held-out split. ABC-Align improves over the next-best baseline by $\Delta$ (rightmost column) at every teacher quality, including at chance-level agreement, where Naive combined falls below labeled-only.}
    \label{tab:rm_real_teacher}
    \begin{tabular}{lcccccc}
        \toprule
        Teacher & Agreement & Labeled-only & Naive comb. & PP-SSL & ABC-Align & $\Delta$ \\
        \midrule
        Llama-3.2-1B RM   & $0.82$ & $0.66$ & $0.71$ & $0.72$ & $\mathbf{0.76}$ & $+0.04$ \\
        Qwen2.5-0.5B RM   & $0.74$ & $0.66$ & $0.68$ & $0.71$ & $\mathbf{0.78}$ & $+0.07$ \\
        GPT-2-large RM    & $0.66$ & $0.66$ & $0.64$ & $0.70$ & $\mathbf{0.73}$ & $+0.03$ \\
        Random-init head  & $0.50$ & $0.66$ & $0.55$ & $0.66$ & $\mathbf{0.71}$ & $+0.05$ \\
        \bottomrule
    \end{tabular}
\end{table}

Table~\ref{tab:rm_real_teacher} shows that ABC-Align retains its advantage under a structurally biased teacher: at moderate teacher quality (agreement $\approx 0.66$--$0.82$) it dominates every baseline, and as teacher--human agreement degrades toward chance, ABC-Align's adaptive $(a_t, b_t)$ shifts mass back toward the labeled-only estimator and avoids the collapse that Naive combined exhibits.
PP-SSL is robust to the bias direction but, as in the synthetic-noise sweep, leaves variance reduction unrealized when the teacher is reliable.

\subsection{DPO-based policy alignment}
\label{sec:exp_dpo}

To demonstrate that ABC-Align extends beyond reward modeling, we apply it to direct preference optimization (DPO)~\citep{Rafailov2023DirectPO}, where the gradient estimator operates on the full policy parameters rather than a scalar score head.

\paragraph{Setup.}
We use Llama-3.2-3B~\citep{Dubey2024TheL3} as the base policy and fine-tune all parameters with the DPO loss~\eqref{eq:dpo} ($\beta_{\mathrm{kl}} = 0.1$).
The reference policy $\pi_{\mathrm{ref}}$ is frozen from the same pretrained checkpoint.
Training data comes from the UltraFeedback Binarized release, a curated collection of ${\sim}60{,}000$ multi-turn preference pairs spanning diverse instruction-following categories.
We shuffle the dataset once and partition into a small human-labeled subset and a large unlabeled pool, sweeping the labeled fraction over $\{0.05, 0.10, 0.30\}$.
On labeled examples the ground-truth preference is retained; on unlabeled examples the teacher preference is simulated by flipping the ground-truth label with probability $\rho = 0.10$, matching the controlled teacher-noise protocol of~\Secref{sec:exp_rm}.
We reserve $2{,}000$ examples as a held-out evaluation set.

\paragraph{Training and full-model adaptation.}
All methods share the same architecture, tokenizer, and AdamW optimizer; only the gradient estimator differs.
The key practical point is that, unlike the score-head reward model, DPO fine-tunes millions of policy parameters, yet the controller adds at most one full-model gradient buffer on top of the standard DPO step: the symmetric split-batch $\widehat H_t$ of~\eqref{eq:H_hat} is replaced by the one-sided streaming variant $\widehat H_t^{\text{stream}} := -\inner{g_A}{d_B}$ (still conditionally unbiased, $\E_t[\widehat H_t^{\text{stream}}] = H_t$), and every scalar primitive feeding $\ell_t$ is an aggregate dot product computable with a single all-reduce under FSDP/ZeRO, with no per-example gradient vectors materialized.
ABC-Align therefore touches $N + 2n$ examples per step (the teacher gradient on both the unlabeled and the labeled batch, plus the human gradient on the labeled batch), against $N + n$ for Naive combined SGD, which passes over the same pooled data once. The dominant cost ratio versus Naive combined SGD is thus $(N + 2n)/(N + n) \approx 1$ when $N \gg n$.
Optimizer hyperparameters, batch sizes, sequence lengths, and the full streaming-controller construction are deferred to Appendix~\ref{app:dpo_details}.

\paragraph{Baselines and evaluation.}
We compare the same four paradigms as in \Secref{sec:exp_rm}: \textbf{Labeled-only}, \textbf{Naive combined}, \textbf{PP-SSL}, and \textbf{ABC-Align}.
Since DPO produces a generative policy rather than a reward model, evaluation differs from the RM setting.
We report held-out DPO preference accuracy---the fraction of held-out pairs on which the policy assigns higher implicit reward to the preferred response---as the primary in-training diagnostic.

\paragraph{Results.}
Figure~\ref{fig:dpo_sweep} reports held-out DPO preference accuracy across label fractions $\{0.05, 0.10, 0.30\}$ at $\rho = 0.10$. ABC-Align is strongest at every label fraction, reaching $0.72$ at label fraction $0.05$ and $0.75$ at $0.30$. PP-SSL follows with a consistent $2$--$3$ point gap ($0.69, 0.71, 0.72$), above Naive combined ($0.67, 0.68, 0.70$) and labeled-only ($0.64, 0.66, 0.68$).
At the smallest label fraction, ABC-Align with $5\%$ labels already matches the accuracy labeled-only achieves with $30\%$ labels, indicating that the variance reduction from the two-parameter family translates directly into label efficiency in the full-policy setting.
Unlike the frozen score head of \Secref{sec:exp_rm}, DPO trains every policy parameter, yet the controller never forms a per-example gradient: it drives $(a_t, b_t)$ from aggregate dot products alone.
The consistent ordering across label fractions shows that this cheaper probe loses nothing at full-policy scale, and the $O(T^{-1/2})$ regret bound of Proposition~\ref{prop:joint_regret} applies directly to the realized $(a_t, b_t)$ trajectory.

\begin{figure}[t]
    \centering
    \includegraphics[width=0.5\textwidth]{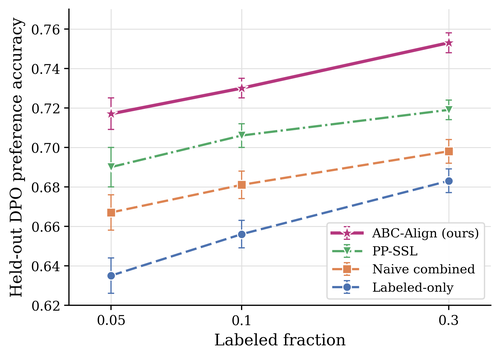} %
    \caption{DPO policy alignment on UltraFeedback Binarized (Llama-3.2-3B, $\rho = 0.10$). Held-out DPO preference accuracy across label fractions $\{0.05, 0.10, 0.30\}$. ABC-Align dominates at every label fraction; PP-SSL follows, above Naive combined and labeled-only.}
    \label{fig:dpo_sweep}
\end{figure}

\subsection{On-policy alignment with GRPO}
\label{sec:exp_grpo}

Finally, we instantiate ABC-Align on on-policy GRPO using the finite-sample plug-in controller of \Secref{sec:abc_grpo}.
This is the most challenging of the three alignment settings.
In the RM and DPO experiments the training data are drawn from a \emph{fixed} distribution, and the per-step oracle $(a_t^\star, b_t^\star)$ moves only because the parameters $\theta_t$ at which the gradient statistics are evaluated move.
Under on-policy GRPO the behavior policy is reset to $\pi_{\theta_t}$ at every iteration, so the rollouts themselves are drawn from a distribution that changes with $t$: the data distribution, not just the evaluation point, is non-stationary.
Both sources of drift act on the oracle, and the second is absent from the earlier settings.

\paragraph{Setup.}
We fine-tune Llama-3.2-3B with GRPO on a corpus of grade-school math (GSM8K-style) prompts with verifiable answers.
For each prompt we sample a rollout group of $G = 8$ completions.
Labeled prompts are scored by a trusted, verifiable correctness reward (the human-anchored source); unlabeled prompts are scored only by a teacher reward model, which is noisy relative to the verifiable signal.
At each step we estimate $(\widehat{\norm{B_{t,\mathrm{G}}}^2}, \widehat\sigma_{t,\mathrm{G}}^2, \widehat\sigma_{t,\mathrm{G},f}^2, \widehat C_{t,\mathrm{G}})$ from the labeled prompt-group batch via the split-batch plug-in of~\eqref{eq:grpo_splitbatch} and substitute into the closed form~\eqref{eq:oracle_ab} to obtain $(a_t, b_t)$.
We sweep the labeled fraction over $\{0.05, 0.10, 0.30\}$ at a fixed teacher-noise level, average over seeds, and report held-out GSM8K pass@$1$ on a disjoint evaluation split.
We compare the same four paradigms as in \Secref{sec:exp_rm}: \textbf{Labeled-only}, \textbf{Naive combined}, \textbf{PP-SSL}, and \textbf{ABC-Align}; full training details are deferred to Appendix~\ref{app:llm_details}.

\paragraph{Results.}
Figure~\ref{fig:grpo_sweep} reports held-out GSM8K accuracy across the three label fractions.
ABC-Align is strongest at every label fraction, improving from $0.721$ at fraction $0.05$ to $0.752$ at $0.30$, with PP-SSL trailing by $1.5$--$2$ points and Naive combined and labeled-only below that.
The advantage is largest at the smallest label fraction, where verifiable reward signal is scarcest and the variance reduction from the abundant teacher-scored pool matters most---the same label-efficiency pattern observed in the RM and DPO settings, now under a shifting on-policy gradient distribution.
This confirms that the finite-sample plug-in controller of \Secref{sec:abc_grpo} retains the bias--variance advantage even when the per-step oracle drifts, which is precisely the regime that motivated replacing the corrected-norm controller for GRPO.

\begin{figure}[t]
    \centering
    \includegraphics[width=0.62\textwidth]{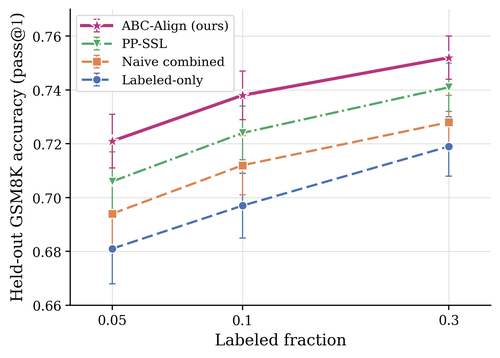}
    \caption{On-policy GRPO on GSM8K-style math (Llama-3.2-3B, $G = 8$ rollouts per prompt). Held-out GSM8K pass@$1$ across label fractions $\{0.05, 0.10, 0.30\}$. ABC-Align dominates at every label fraction; the gap is widest where verifiable labels are scarcest.}
    \label{fig:grpo_sweep}
\end{figure}

\section{Related Work}
\label{sec:related_work}
\paragraph{Preference-based post-training.}
Aligning LLMs to human preferences is standardly done by fitting a Bradley--Terry reward model~\citep{Bradley1952RankAO} to pairwise comparisons and optimizing a KL-regularized objective against it~\citep{Christiano2017DeepRL,Ziegler2019FineTuningLM,Stiennon2020LearningTS,Ouyang2022TrainingLM}, or by collapsing the two stages into a single supervised objective as in DPO~\citep{Rafailov2023DirectPO} and its variants~\citep{Azar2023AGT,Ethayarajh2024KTOMA}.
For reasoning models, outcome-based RL with group-relative advantages (GRPO)~\citep{Shao2024DeepSeekMathPT} and its descendants~\citep{Yu2025DAPOAO,Liu2025UnderstandingR1ZeroLT,Zheng2025GroupSP} have become the dominant recipe.
All three families are bottlenecked by the same resource---human-annotated ground truth---which is expensive, slow, and noisy~\citep{Casper2023OpenPA,Zhang2024DivergingPW,Gao2024ImpactOP}.
ABC-Align is agnostic to which objective is used: it modifies only the gradient \emph{estimator}, and we instantiate it for all three (\Secref{sec:abc_align}).

\paragraph{Pseudo labels for alignment, and why they are not free.}
The dominant response to label scarcity is to substitute a model for the annotator: RLAIF and constitutional-style pipelines label preferences with a strong LLM~\citep{Bai2022ConstitutionalAH,Lee2023RLAIFVR}, self-rewarding methods let the policy score its own generations~\citep{Yuan2024SelfRewardingLM,Wu2024MetaRewardingLM}, synthetic-preference pipelines mine or construct comparison pairs~\citep{Pace2024WestofNSP,Liu2025SkyworkRewardV2SP}, and weak-to-strong approaches supervise a strong student with a weak teacher~\citep{Burns2023WeaktoStrongGE,Tao2024YourWL}.
Such pseudo labels are abundant, but they are systematically---not randomly---wrong.
LLM judges exhibit reproducible verbosity, position, and self-preference biases~\citep{Zheng2023JudgingLW,Dubois2024LengthControlledAA,Ye2024JusticeOP,Shi2024JudgingTJ,Wataoka2024SelfPreferenceBI}, are sycophantic~\citep{Wei2023SimpleSD,Wang2023LargeLM}, and leak preferences when judge and generator share a lineage~\citep{Li2025PreferenceLA}.
Training on them degrades alignment in proportion to the noise rate~\citep{Gao2024ImpactOP}, saturates as the generation--verification gap closes~\citep{Song2024MindTG,Huang2024SelfImprovementIL}, and collapses without a real-data anchor~\citep{Shumailov2023TheCO,Gerstgrasser2024IsMC}.
The weak-to-strong literature makes the ceiling explicit: the achievable gain is governed by the student's misfit to the weak labels~\citep{Charikar2024QuantifyingTG}, and refining against a weak labeler alone carries irreducible error~\citep{Somerstep2024ATL}.
This is exactly the regime our estimator targets: teacher error is a persistent \emph{bias}, so more pseudo labels do not average it away, and removing it requires an external anchor.

\paragraph{Prediction-powered inference.}
PPI~\citep{Angelopoulos2023PredictionpoweredI} anchors pseudo labels to a small trusted set through a zero-mean rectifier, yielding valid inference \emph{regardless} of pseudo-label quality; PPI++~\citep{Angelopoulos2023PPIEP} tunes a scalar power parameter so that the estimator never underperforms the labeled-only baseline, and follow-up work strengthens the design via cross-fitting~\citep{Zrnic2023CrossPredictionpoweredI}, active labeling~\citep{Zrnic2024ActiveSI,chen2025surrogate,kluger2026m,sfyrakirevisiting},  stratification~\citep{Fisch2024StratifiedPI}, bootstrapping \cite{zrnic2024note,kluger2025prediction}, and recalibration \cite{ji2025predictions}. In particular, the recalibration approach attempts to unify PPI with the classical semiparametric theory, leading to a series of work at the intersection \cite{chen2025unified,testa2025semiparametric,carlson2025unifying,van2026calibeating,emmenegger2026prediction}. \citet{song2026demystifying} provide  a comprehensive survey of  strengths and weaknesses of the recent advances in PPI.

Applied to LLMs,  this line has so far lived almost entirely at \emph{evaluation} time, unlike ABC-Align: debiasing autorater-based benchmarks~\citep{Boyeau2024AutoEvalDR,Fisch2024StratifiedPI}, drawing valid conclusions from unconfident LLM annotations~\citep{Gligoric2024CanUL}, debiasing LLMs'
decisions on moral dilemmas 
\cite{broska2025mixed}, auditing LLMs on election-related information \cite{emmenegger2026prediction}, 
and unbiased win-rate estimation from mixed human/synthetic feedback~\citep{Zhou2025AcceleratingUL}.  
Two recent results sharpen the limits of this program and directly motivate ours.
\citet{Dorner2024LimitsTS} prove that when the judge is no more accurate than the model being judged, \emph{no} unbiased debiasing scheme can cut the number of required ground-truth labels by more than a factor of two, however many pseudo labels are available; and \citet{Mani2025NoFL} show non-asymptotically that PPI++ can be strictly \emph{worse} than using the labeled data alone unless pseudo-label correlation clears an $n$-dependent threshold.
Both barriers are statements about the unbiased class; ABC-Align breaks them by leaving it.

\paragraph{Prediction-powered training.}
Closest to us is work carrying PPI from estimation to iterative optimization.
PP-SSL~\citep{Shoham2025PredictionPoweredSL} forms an unbiased prediction-powered \emph{gradient} and tunes its single interpolation parameter online; PPI-SVRG~\citep{Ao2026PPISVRGUP} identifies PPI's rectifier with the SVRG control variate and inherits an error floor set by prediction uncertainty; and the doubly robust self-training loss of \citet{Zhu2023DoublyRS}---descended from doubly robust estimation in causal inference---is the fixed-$\lambda$ member of the same family.
Each of these estimators is unbiased by construction, which is precisely what pins it to the $\Theta(1/n)$ variance floor of Corollary~\ref{cor:variance_floor}.
Our estimator introduces a second parameter that relaxes unbiasedness, converting that floor into an explicit, adaptively controlled bias--variance trade-off.
The semiparametric strand of alignment theory is complementary rather than competing: \citet{Xu2025DoublyRA} and \citet{Ye2025RobustRF} seek consistency under model misspecification and variance reduction from unlabeled responses, not a deliberate bias budget.

\paragraph{Semi-supervised and noise-robust preference learning.}
A parallel line keeps the estimator unbiased and instead cleans the labels.
Robust DPO~\citep{Chowdhury2024ProvablyRD} debiases the DPO loss under a \emph{known} symmetric flip rate; semi-supervised reward modeling~\citep{He2024SemiSupervisedRM} and semi-supervised preference optimization~\citep{Lee2025SemiSupervisedPO} pseudo-label with confidence thresholds, inheriting the selection heuristics of classical SSL~\citep{Lee2013PseudoLabelT,Sohn2020FixMatchSS,Zhang2021FlexMatchBS}; and hybrid pipelines route instances between human and LLM annotators to spend a fixed human budget well~\citep{Miranda2024HybridPL,Xu2025RLTHFTH}.
These methods either assume a noise model we do not require, or \emph{trust} pseudo labels after filtering rather than \emph{correcting} them against ground truth.
Our human-labeled set plays a different role: it is not extra training signal to be augmented, but a measuring instrument for the teacher's gradient bias.

\paragraph{Biased gradient estimators.}
That a biased estimator can dominate an unbiased one in mean-squared error is well established in stochastic optimization: biased variance-reduced methods attain lower gradient MSE and faster rates than their unbiased counterparts~\citep{Driggs2019OnBS}, and convergence under biased oracles is standard for both SGD~\citep{Ajalloeian2020OnTC,Demidovich2023AGT} and adaptive methods~\citep{Surendran2024NonasymptoticAO}.
In RLHF, control variates appear as variance-reducing \emph{baselines}~\citep{Ahmadian2024BackTB}, including shrinkage baselines that knowingly trade bias for variance in the GRPO advantage~\citep{Zeng2025ShrinkingTV}; GRPO's group-relative advantage is itself a biased gradient estimator~\citep{Liu2025UnderstandingR1ZeroLT,Zhou2026DemystifyingGR}.
What neither literature provides is a bias that is \emph{measured against trusted human labels and controlled online}, which is what ABC-SSL contributes.

\section{Conclusion}

Prediction powered inference (PPI) techniques have emerged as a powerful tool for semi-supervised learning when ($i$) we have access to a noisy and biased teacher model and ($ii$) the number of labeled examples $n$ is smaller than the number of unlabeled examples $N$.
The standard PPI-style model training methods like PP-SSL use a {\em single parameter} to control variance while keeping the gradient estimate {\em unbiased}.
Such unbiasedness forces the resulting estimate to suffer from a variance floor that scales as $\Theta(1/n)$.
To this end, we introduced adaptive bias control for semi-supervised learning (ABC-SSL, Algorithm~\ref{alg:abcssl}), an iterative algorithm with a {\em two-parameter} gradient estimator that improve the bias-variance trade-off beyond the $\Theta(1/n)$ variance floor by admitting {\em a controlled bias} decoupled from variance reduction. Our analysis (Corollary~\ref{cor:end_to_end}) gives a joint regret guarantee in $(a_t, b_t)$ via the corrected-norm surrogate $\ell_t$ and the model parameter $\theta$
(Section~\ref{sec:plugin}), where the parameters $(a_t, b_t)$ are adaptively learned to control the bias--variance trade-off.
This is shown to achieve a strict improvement over the best statically tuned PP-SSL estimator at rate $O(T^{-1/2})$ for a broad range of parameters.

Empirically, our goal is to improve LLM alignment by better utilizing abundant unlabeled text together with easily accessible teacher models.
To empirically test the gain of our new approach, we present three instantiations of ABC-Align, i.e., the ABC-SSL algorithm applied to solve LLM alignment problems: reward model training for RLHF, offline preference optimization using DPO, and on-policy RL with GRPO.
We show that ABC-Align dominates common baselines of labeled-only, naively combined, and PP-SSL across all three scenarios.

\section*{Acknowledgement}

Frankel is supported in part by the NSF graduate research fellowship program;
Ratliff is supported in part by NSF award AF-2312775; Oh is  supported in part by NSF awards CNS-2112471, IIS-2229876, CCF-2505865, and DMS-2502281.

\bibliographystyle{plainnat}
\bibliography{references}

\begin{thebibliography}{106}
\providecommand{\natexlab}[1]{#1}
\providecommand{\url}[1]{\texttt{#1}}
\expandafter\ifx\csname urlstyle\endcsname\relax
  \providecommand{\doi}[1]{doi: #1}\else
  \providecommand{\doi}{doi: \begingroup \urlstyle{rm}\Url}\fi

\bibitem[Ahmadian et~al.(2024)Ahmadian, Cremer, Gall{\'e}, Fadaee, Kreutzer,
  Pietquin, {\"U}st{\"u}n, and Hooker]{Ahmadian2024BackTB}
Arash Ahmadian, Chris Cremer, Matthias Gall{\'e}, Marzieh Fadaee, Julia
  Kreutzer, Olivier Pietquin, Ahmet {\"U}st{\"u}n, and Sara Hooker.
\newblock Back to basics: Revisiting {REINFORCE}-style optimization for
  learning from human feedback in {LLM}s.
\newblock In \emph{Proceedings of the 62nd Annual Meeting of the Association
  for Computational Linguistics (ACL)}, 2024.
\newblock arXiv:2402.14740.

\bibitem[Ajalloeian and Stich(2020)]{Ajalloeian2020OnTC}
Ahmad Ajalloeian and Sebastian~U. Stich.
\newblock On the convergence of {SGD} with biased gradients.
\newblock \emph{arXiv preprint arXiv:2008.00051}, 2020.

\bibitem[Angelopoulos et~al.(2023{\natexlab{a}})Angelopoulos, Bates, Fannjiang,
  Jordan, and Zrnic]{Angelopoulos2023PredictionpoweredI}
Anastasios~Nikolas Angelopoulos, Stephen Bates, Clara Fannjiang, Michael~I.
  Jordan, and Tijana Zrnic.
\newblock Prediction-powered inference.
\newblock \emph{Science}, 382:\penalty0 669 -- 674, 2023{\natexlab{a}}.

\bibitem[Angelopoulos et~al.(2023{\natexlab{b}})Angelopoulos, Duchi, and
  Zrnic]{Angelopoulos2023PPIEP}
Anastasios~Nikolas Angelopoulos, John~C. Duchi, and Tijana Zrnic.
\newblock {PPI++}: Efficient prediction-powered inference.
\newblock \emph{ArXiv}, abs/2311.01453, 2023{\natexlab{b}}.

\bibitem[Ao et~al.(2026)Ao, Chen, Liu, Simchi-Levi, and Sun]{Ao2026PPISVRGUP}
Ruicheng Ao, Hongyu Chen, Haoyang Liu, David Simchi-Levi, and Will~Wei Sun.
\newblock {PPI-SVRG}: Unifying prediction-powered inference and variance
  reduction for semi-supervised optimization.
\newblock \emph{ArXiv}, abs/2601.21470, 2026.

\bibitem[Arazo et~al.(2019)Arazo, Ortego, Albert, O'Connor, and
  McGuinness]{Arazo2019PseudoLabelingAC}
Eric Arazo, Diego Ortego, Paul Albert, Noel~E. O'Connor, and Kevin McGuinness.
\newblock Pseudo-labeling and confirmation bias in deep semi-supervised
  learning.
\newblock \emph{2020 International Joint Conference on Neural Networks
  (IJCNN)}, pages 1--8, 2019.

\bibitem[Azar et~al.(2023)Azar, Rowland, Piot, Guo, Calandriello, Valko, and
  Munos]{Azar2023AGT}
Mohammad~Gheshlaghi Azar, Mark Rowland, Bilal Piot, Daniel Guo, Daniele
  Calandriello, Michal Valko, and R{\'e}mi Munos.
\newblock A general theoretical paradigm to understand learning from human
  preferences.
\newblock \emph{ArXiv}, abs/2310.12036, 2023.

\bibitem[Bai et~al.(2022)Bai, Kadavath, Kundu, Askell, Kernion, Jones, Chen,
  Goldie, Mirhoseini, McKinnon, Chen, Olsson, Olah, Hernandez, Drain, Ganguli,
  Li, Tran-Johnson, Perez, Kerr, Mueller, Ladish, Landau, Ndousse,
  Luko{\v{s}}i{\=u}t\.{e}, Lovitt, Sellitto, Elhage, Schiefer, Mercado,
  Dassarma, Lasenby, Larson, Ringer, Johnston, Kravec, Showk, Fort, Lanham,
  Telleen-Lawton, Conerly, Henighan, Hume, Bowman, Hatfield-Dodds, Mann,
  Amodei, Joseph, McCandlish, Brown, and Kaplan]{Bai2022ConstitutionalAH}
Yuntao Bai, Saurav Kadavath, Sandipan Kundu, Amanda Askell, John Kernion, Andy
  Jones, Anna Chen, Anna Goldie, Azalia Mirhoseini, Cameron McKinnon, Carol
  Chen, Catherine Olsson, Chris Olah, Danny Hernandez, Dawn Drain, Deep
  Ganguli, Dustin Li, Eli Tran-Johnson, E~Perez, Jamie Kerr, Jared Mueller,
  Jeffrey Ladish, J~Landau, Kamal Ndousse, Kamil\.{e} Luko{\v{s}}i{\=u}t\.{e},
  Liane Lovitt, Michael Sellitto, Nelson Elhage, Nicholas Schiefer, Noem'i
  Mercado, Nova Dassarma, Robert Lasenby, Robin Larson, Sam Ringer, Scott
  Johnston, Shauna Kravec, Sheer~El Showk, Stanislav Fort, Tamera Lanham,
  Timothy Telleen-Lawton, Tom Conerly, Thomas Henighan, Tristan Hume, Sam
  Bowman, Zac Hatfield-Dodds, Benjamin Mann, Dario Amodei, Nicholas Joseph, Sam
  McCandlish, Tom~B. Brown, and Jared Kaplan.
\newblock Constitutional {AI}: Harmlessness from {AI} feedback.
\newblock \emph{ArXiv}, abs/2212.08073, 2022.

\bibitem[Borkar(1997)]{Borkar1997StochasticAW}
Vivek~S. Borkar.
\newblock Stochastic approximation with two time scales.
\newblock \emph{Systems \& Control Letters}, 29\penalty0 (5):\penalty0
  291--294, 1997.

\bibitem[Borkar(2008)]{Borkar2008StochasticAA}
Vivek~S. Borkar.
\newblock \emph{Stochastic Approximation: A Dynamical Systems Viewpoint}.
\newblock Cambridge University Press and Hindustan Book Agency, 2008.

\bibitem[Boyeau et~al.(2025)Boyeau, Angelopoulos, Yosef, Malik, and
  Jordan]{Boyeau2024AutoEvalDR}
Pierre Boyeau, Anastasios~N. Angelopoulos, Nir Yosef, Jitendra Malik, and
  Michael~I. Jordan.
\newblock Autoeval done right: Using synthetic data for model evaluation.
\newblock In \emph{International Conference on Machine Learning}, 2025.
\newblock arXiv:2403.07008.

\bibitem[Bradley and Terry(1952)]{Bradley1952RankAO}
Ralph~Allan Bradley and Milton~E. Terry.
\newblock Rank analysis of incomplete block designs: I. the method of paired
  comparisons.
\newblock \emph{Biometrika}, 39:\penalty0 324, 1952.

\bibitem[Broska et~al.(2025)Broska, Howes, and van Loon]{broska2025mixed}
David Broska, Michael Howes, and Austin van Loon.
\newblock The mixed subjects design: Treating large language models as
  potentially informative observations.
\newblock \emph{Sociological Methods \& Research}, 54\penalty0 (3):\penalty0
  1074--1109, 2025.

\bibitem[Burns et~al.(2023)Burns, Izmailov, Kirchner, Baker, Gao,
  Aschenbrenner, Chen, Ecoffet, Joglekar, Leike, Sutskever, and
  Wu]{Burns2023WeaktoStrongGE}
Collin Burns, Pavel Izmailov, Jan~Hendrik Kirchner, Bowen Baker, Leo Gao,
  Leopold Aschenbrenner, Yining Chen, Adrien Ecoffet, Manas Joglekar, Jan
  Leike, Ilya Sutskever, and Jeffrey Wu.
\newblock Weak-to-strong generalization: Eliciting strong capabilities with
  weak supervision.
\newblock \emph{ArXiv}, abs/2312.09390, 2023.

\bibitem[Carlson and Dell(2025)]{carlson2025unifying}
Jacob Carlson and Melissa Dell.
\newblock A unifying framework for robust and efficient inference with
  unstructured data.
\newblock \emph{arXiv preprint arXiv:2505.00282}, 2025.

\bibitem[Casper et~al.(2023)Casper, Davies, Shi, Gilbert, Scheurer, Rando,
  Freedman, Korbak, Lindner, Freire, Wang, Marks, S{\'e}gerie, Carroll, Peng,
  Christoffersen, Damani, Slocum, Anwar, Siththaranjan, Nadeau, Michaud, Pfau,
  Krasheninnikov, Chen, di~Langosco, Hase, Biyik, Dragan, Krueger, Sadigh, and
  Hadfield-Menell]{Casper2023OpenPA}
Stephen Casper, Xander Davies, Claudia Shi, Thomas~Krendl Gilbert, J'er'emy
  Scheurer, Javier Rando, Rachel Freedman, Tomasz Korbak, David Lindner,
  Pedro~J Freire, Tony Wang, Samuel Marks, Charbel-Rapha{\"e}l S{\'e}gerie,
  Micah Carroll, Andi Peng, Phillip J.~K. Christoffersen, Mehul Damani, Stewart
  Slocum, Usman Anwar, Anand Siththaranjan, Max Nadeau, Eric~J. Michaud, Jacob
  Pfau, Dmitrii Krasheninnikov, Xin Chen, Lauro~Langosco di~Langosco, Peter
  Hase, Erdem Biyik, Anca~D. Dragan, David Krueger, Dorsa Sadigh, and Dylan
  Hadfield-Menell.
\newblock Open problems and fundamental limitations of reinforcement learning
  from human feedback.
\newblock \emph{Transactions on Machine Learning Research}, 2023.

\bibitem[Charikar et~al.(2024)Charikar, Pabbaraju, and
  Shiragur]{Charikar2024QuantifyingTG}
Moses Charikar, Chirag Pabbaraju, and Kirankumar Shiragur.
\newblock Quantifying the gain in weak-to-strong generalization.
\newblock In \emph{Advances in Neural Information Processing Systems}, 2024.
\newblock arXiv:2405.15116.

\bibitem[Chen et~al.(2022)Chen, Jiang, Wang, Wan, Long, and
  Wang]{Chen2022DebiasedST}
Baixu Chen, Junguang Jiang, Ximei Wang, Pengfei Wan, Mingsheng Long, and
  Jianmin Wang.
\newblock Debiased self-training for semi-supervised learning.
\newblock In \emph{Advances in Neural Information Processing Systems},
  volume~35, 2022.

\bibitem[Chen et~al.(2025{\natexlab{a}})Chen, Wang, Lumley, Dai, and
  Chen]{chen2025surrogate}
Jianmin Chen, Huiyuan Wang, Thomas Lumley, Xiaowu Dai, and Yong Chen.
\newblock Surrogate-powered inference: Regularization and adaptivity.
\newblock \emph{arXiv preprint arXiv:2512.21826}, 2025{\natexlab{a}}.

\bibitem[Chen et~al.(2025{\natexlab{b}})Chen, McCormick, Mukherjee, and
  Wu]{chen2025unified}
Xingran Chen, Tyler McCormick, Bhramar Mukherjee, and Zhenke Wu.
\newblock A unified framework for inference with general missingness patterns
  and machine learning imputation.
\newblock \emph{arXiv preprint arXiv:2508.15162}, 2025{\natexlab{b}}.

\bibitem[Chen et~al.(2024)Chen, Deng, Yuan, Ji, and Gu]{Chen2024SelfPlayFC}
Zixiang Chen, Yihe Deng, Huizhuo Yuan, Kaixuan Ji, and Quanquan Gu.
\newblock Self-play fine-tuning converts weak language models to strong
  language models.
\newblock In \emph{International Conference on Machine Learning}, 2024.

\bibitem[Chowdhury et~al.(2024)Chowdhury, Kini, and
  Natarajan]{Chowdhury2024ProvablyRD}
Sayak~Ray Chowdhury, Anush Kini, and Nagarajan Natarajan.
\newblock Provably robust {DPO}: Aligning language models with noisy feedback.
\newblock \emph{ArXiv}, abs/2403.00409, 2024.

\bibitem[Christiano et~al.(2017)Christiano, Leike, Brown, Martic, Legg, and
  Amodei]{Christiano2017DeepRL}
Paul~Francis Christiano, Jan Leike, Tom~B. Brown, Miljan Martic, Shane Legg,
  and Dario Amodei.
\newblock Deep reinforcement learning from human preferences.
\newblock \emph{Advances in neural information processing systems}, 30, 2017.

\bibitem[Cutkosky and Orabona(2019)]{CutkoskyOrabona2019MomentumBV}
Ashok Cutkosky and Francesco Orabona.
\newblock Momentum-based variance reduction in non-convex {SGD}.
\newblock In \emph{Advances in Neural Information Processing Systems},
  volume~32, 2019.

\bibitem[Demidovich et~al.(2023)Demidovich, Malinovsky, Sokolov, and
  Richt{\'a}rik]{Demidovich2023AGT}
Yury Demidovich, Grigory Malinovsky, Igor Sokolov, and Peter Richt{\'a}rik.
\newblock A guide through the zoo of biased sgd.
\newblock \emph{ArXiv}, abs/2305.16296, 2023.

\bibitem[Doan(2021)]{Doan2021NonlinearTT}
Thinh~T. Doan.
\newblock Nonlinear two-time-scale stochastic approximation: Convergence and
  finite-time performance.
\newblock In \emph{Proceedings of the 3rd Conference on Learning for Dynamics
  and Control (L4DC)}, volume 144 of \emph{Proceedings of Machine Learning
  Research}, pages 47--57. PMLR, 2021.

\bibitem[Doan(2026)]{Doan2024FastNonlinearTT}
Thinh~T Doan.
\newblock Fast nonlinear two-time-scale stochastic approximation: Achieving
  ${O}(1/k) $ finite-sample complexity.
\newblock \emph{IEEE Transactions on Automatic Control}, 71\penalty0
  (1):\penalty0 230--242, 2026.

\bibitem[Dong et~al.(2024)Dong, Xiong, Pang, Wang, Zhao, Zhou, Jiang, Sahoo,
  Xiong, and Zhang]{Dong2024RLHFWF}
Hanze Dong, Wei Xiong, Bo~Pang, Haoxiang Wang, Han Zhao, Yingbo Zhou, Nan
  Jiang, Doyen Sahoo, Caiming Xiong, and Tong Zhang.
\newblock {RLHF} workflow: From reward modeling to online {RLHF}.
\newblock \emph{Transactions on Machine Learning Research}, 2024.
\newblock ISSN 2835-8856.

\bibitem[Dorner et~al.(2025)Dorner, Nastl, and Hardt]{Dorner2024LimitsTS}
Florian~E. Dorner, Vivian~Y. Nastl, and Moritz Hardt.
\newblock Limits to scalable evaluation at the frontier: {LLM} as judge won't
  beat twice the data.
\newblock In \emph{International Conference on Learning Representations}, 2025.
\newblock arXiv:2410.13341.

\bibitem[Driggs et~al.(2022)Driggs, Liang, and Sch{\"o}nlieb]{Driggs2019OnBS}
Derek Driggs, Jingwei Liang, and Carola-Bibiane Sch{\"o}nlieb.
\newblock On biased stochastic gradient estimation.
\newblock \emph{Journal of Machine Learning Research}, 23\penalty0
  (24):\penalty0 1--43, 2022.
\newblock arXiv:1906.01133.

\bibitem[Dubey et~al.(2024)Dubey, Jauhri, Pandey, Kadian, Al-Dahle, Letman,
  Mathur, Schelten, Yang, Fan, Goyal, Hartshorn, Yang, Mitra, Sravankumar,
  Korenev, Hinsvark, Rao, Zhang, Rodriguez, Gregerson, Spataru, Rozi{\`e}re,
  Biron, Tang, Chern, Caucheteux, Nayak, Bi, Marra, McConnell, Keller, Touret,
  Wu, Wong, Ferrer, Nikolaidis, Allonsius, Song, Pintz, Livshits, Esiobu,
  Choudhary, Mahajan, Garcia-Olano, Perino, Hupkes, Lakomkin, AlBadawy,
  Lobanova, Dinan, Smith, Radenovic, Zhang, Synnaeve, Lee, Anderson, Nail,
  Mialon, Pang, Cucurell, Nguyen, Korevaar, Xu, Touvron, Zarov, Ibarra,
  Kloumann, Misra, Evtimov, Copet, Lee, Geffert, Vranes, Park, Mahadeokar,
  Shah, van~der Linde, Billock, Hong, Lee, Fu, Chi, Huang, Liu, Wang, Yu,
  Bitton, Spisak, Park, Rocca, Johnstun, Saxe, Jia, Alwala, Upasani, Plawiak,
  Li, Heafield, Stone, El-Arini, Iyer, Malik, ley Chiu, Bhalla, Rantala-Yeary,
  van~der Maaten, Chen, Tan, Jenkins, Martin, Madaan, Malo, Blecher, Landzaat,
  de~Oliveira, Muzzi, hesh Pasupuleti, Singh, Paluri, Kardas, Oldham, Rita,
  Pavlova, Kambadur, Lewis, Si, Singh, Hassan, Goyal, Torabi, lay Bashlykov,
  Bogoychev, Chatterji, Duchenne, cCelebi, Alrassy, Zhang, Li, Vasi{\'c}, Weng,
  Bhargava, Dubal, Krishnan, Koura, Xu, He, Dong, Srinivasan, Ganapathy,
  Calderer, veira Cabral, Stojnic, Raileanu, Girdhar, Patel, Sauvestre, nie
  Polidoro, Sumbaly, Taylor, Silva, Hou, Wang, Hosseini, hana Chennabasappa,
  Singh, Bell, Kim, Edunov, Nie, Narang, Raparthy, Shen, Wan, Bhosale, Zhang,
  Vandenhende, Batra, Whitman, Sootla, Collot, Gururangan, Borodinsky, Herman,
  Fowler, Sheasha, Georgiou, Scialom, Speckbacher, Mihaylov, Xiao, Karn,
  Goswami, Gupta, Ramanathan, Kerkez, Gonguet, Do, Vogeti, Petrovic, Chu,
  Xiong, Fu, ney Meers, Martinet, Wang, Tan, Xie, Jia, Wang, Goldschlag, Gaur,
  Babaei, Wen, Song, Zhang, Li, Mao, Coudert, Yan, Chen, Papakipos, Singh,
  Grattafiori, Jain, Kelsey, Shajnfeld, Gangidi, Victoria, Goldstand, Menon,
  Sharma, Boesenberg, Vaughan, Baevski, Feinstein, Kallet, Sangani, Yunus,
  Lupu, Alvarado, Caples, Gu, Ho, Poulton, Ryan, Ramchandani, Franco, Saraf,
  Chowdhury, Gabriel, Bharambe, Eisenman, Yazdan, James, Maurer, Leonhardi,
  Huang, Loyd, de~Paola, Paranjape, Liu, Wu, Ni, Hancock, Wasti, Spence,
  Stojkovic, Gamido, Montalvo, Parker, Burton, Mejia, Wang, Kim, Zhou, Hu, Chu,
  Cai, Tindal, Feichtenhofer, Civin, Beaty, Kreymer, Li, Wyatt, Adkins, Xu,
  Testuggine, David, Parikh, Liskovich, Foss, Wang, Le, Holland, Dowling,
  Jamil, Montgomery, Presani, Hahn, Wood, Brinkman, Arcaute, Dunbar, Smothers,
  Sun, Kreuk, Tian, Ozgenel, Caggioni, Guzm{\'a}n, Kanayet, Seide, Florez,
  Schwarz, Badeer, Swee, Halpern, Thattai, Herman, Sizov, Zhang,
  Lakshminarayanan, Shojanazeri, Zou, Wang, Zha, Habeeb, Rudolph, Suk,
  Aspegren, Goldman, Molybog, Tufanov, Veliche, Gat, Weissman, Geboski, Kohli,
  Asher, Gaya, Marcus, Tang, Chan, Zhen, Reizenstein, Teboul, Zhong, Jin, Yang,
  Cummings, Carvill, Shepard, McPhie, Torres, Ginsburg, Wang, Wu, KamHou,
  Saxena, Prasad, Khandelwal, Zand, Matosich, Veeraraghavan, Michelena, Li,
  Huang, Chawla, Lakhotia, Huang, Chen, Garg, Lavender, Silva, Bell, Zhang,
  Guo, Yu, Moshkovich, Wehrstedt, Khabsa, Avalani, Bhatt, Tsimpoukelli, Mankus,
  Hasson, Lennie, Reso, Groshev, Naumov, Lathi, Keneally, Seltzer, Valko,
  Restrepo, Patel, Vyatskov, Samvelyan, Clark, Macey, Wang, Hermoso, Metanat,
  Rastegari, ish Bansal, Santhanam, Parks, White, ata Bawa, Singhal, Egebo,
  Usunier, Laptev, Dong, Zhang, Cheng, Chernoguz, Hart, Salpekar, Kalinli,
  Kent, Parekh, Saab, Balaji, dro Rittner, Bontrager, Roux, Doll{\'a}r,
  Zvyagina, Ratanchandani, Yuvraj, Liang, Alao, Rodriguez, Ayub, Murthy,
  Nayani, Mitra, Li, Hogan, Battey, Wang, han Maheswari, Howes, Rinott, Bondu,
  Datta, Chugh, Hunt, Dhillon, Sidorov, Pan, Verma, Yamamoto, Ramaswamy,
  Lindsay, Feng, Lin, Zha, Shankar, Zhang, Wang, Agarwal, Sajuyigbe, Chintala,
  Max, Chen, Kehoe, Satterfield, Govindaprasad, Gupta, Cho, Virk, Subramanian,
  Choudhury, Goldman, Remez, Glaser, Best, Kohler, Robinson, Li, Zhang,
  Matthews, Chou, Shaked, Vontimitta, Ajayi, Montanez, Mohan, Kumar, Mangla,
  Ionescu, Poenaru, Mihailescu, Ivanov, Li, Wang, Jiang, Bouaziz, Constable,
  Tang, Wang, Wu, Wang, Xia, Wu, Gao, Chen, Hu, Jia, Qi, Li, Zhang, Zhang, Adi,
  Nam, Wang, Hao, Qian, He, Rait, DeVito, Rosnbrick, Wen, Yang, and
  Zhao]{Dubey2024TheL3}
Abhimanyu Dubey, Abhinav Jauhri, Abhinav Pandey, Abhishek Kadian, Ahmad
  Al-Dahle, Aiesha Letman, Akhil Mathur, Alan Schelten, Amy Yang, Angela Fan,
  Anirudh Goyal, Anthony~S. Hartshorn, Aobo Yang, Archi Mitra, Archie
  Sravankumar, Artem Korenev, Arthur Hinsvark, Arun Rao, Aston Zhang, Aur'elien
  Rodriguez, Austen Gregerson, Ava Spataru, Baptiste Rozi{\`e}re, Bethany~M.
  Biron, Binh Tang, Bobbie Chern, Charlotte Caucheteux, Chaya Nayak, Chloe Bi,
  Chris Marra, Chris McConnell, Christian Keller, Christophe Touret, Chunyang
  Wu, Corinne Wong, Cristian~Canton Ferrer, Cyrus Nikolaidis, Damien Allonsius,
  Daniel~J. Song, Danielle Pintz, Danny Livshits, David Esiobu, Dhruv
  Choudhary, Dhruv Mahajan, Diego Garcia-Olano, Diego Perino, Dieuwke Hupkes,
  Egor Lakomkin, Ehab~A. AlBadawy, E~I Lobanova, Emily Dinan, Eric~Michael
  Smith, Filip Radenovic, Frank Zhang, Gabriel Synnaeve, Gabrielle Lee,
  Georgia~Lewis Anderson, Graeme Nail, Gr{\'e}goire Mialon, Guanglong Pang,
  Guillem Cucurell, Hailey Nguyen, Hannah Korevaar, Hu~Xu, Hugo Touvron, Iliyan
  Zarov, Imanol~Arrieta Ibarra, Isabel~M. Kloumann, Ishan Misra, Ivan Evtimov,
  Jade Copet, Jaewon Lee, Jan Geffert, Jana Vranes, Jason Park, Jay Mahadeokar,
  Jeet Shah, Jelmer van~der Linde, Jennifer Billock, Jenny Hong, Jenya Lee,
  Jeremy Fu, Jianfeng Chi, Jianyu Huang, Jiawen Liu, Jie Wang, Jiecao Yu,
  Joanna Bitton, Joe Spisak, Jongsoo Park, Joseph Rocca, Joshua Johnstun,
  Joshua Saxe, Ju-Qing Jia, Kalyan~Vasuden Alwala, K.~Upasani, Kate Plawiak,
  Keqian Li, Kenneth Heafield, Kevin~R. Stone, Khalid El-Arini, Krithika Iyer,
  Kshitiz Malik, Kuen ley Chiu, Kunal Bhalla, Lauren Rantala-Yeary, Laurens
  van~der Maaten, Lawrence Chen, Liang Tan, Liz Jenkins, Louis Martin, Lovish
  Madaan, Lubo Malo, Lukas Blecher, Lukas Landzaat, Luke de~Oliveira, Madeline
  Muzzi, Ma~hesh Pasupuleti, Mannat Singh, Manohar Paluri, Marcin Kardas,
  Mathew Oldham, Mathieu Rita, Maya Pavlova, Melissa Hall~Melanie Kambadur,
  Mike Lewis, Min Si, Mitesh~Kumar Singh, Mona Hassan, Naman Goyal, Narjes
  Torabi, Niko lay Bashlykov, Nikolay Bogoychev, Niladri~S. Chatterji, Olivier
  Duchenne, Onur cCelebi, Patrick Alrassy, Pengchuan Zhang, Pengwei Li, Petar
  Vasi{\'c}, Peter Weng, Prajjwal Bhargava, Pratik Dubal, Praveen Krishnan,
  Punit~Singh Koura, Puxin Xu, Qing He, Qingxiao Dong, Ragavan Srinivasan, Raj
  Ganapathy, Ramon Calderer, Ricardo~Sil veira Cabral, Robert Stojnic, Roberta
  Raileanu, Rohit Girdhar, Rohit Patel, Romain Sauvestre, Ron nie Polidoro,
  Roshan Sumbaly, Ross Taylor, Ruan Silva, Rui Hou, Rui Wang, Saghar Hosseini,
  Sa~hana Chennabasappa, Sanjay Singh, Sean Bell, Seohyun~Sonia Kim, Sergey
  Edunov, Shaoliang Nie, Sharan Narang, Sharath~Chandra Raparthy, Sheng Shen,
  Shengye Wan, Shruti Bhosale, Shun Zhang, Simon Vandenhende, Soumya Batra,
  Spencer Whitman, Sten Sootla, St{\'e}phane Collot, Suchin Gururangan, Sydney
  Borodinsky, Tamar Herman, Tara Fowler, Tarek Sheasha, Thomas Georgiou, Thomas
  Scialom, Tobias Speckbacher, Todor Mihaylov, Tong Xiao, Ujjwal Karn, Vedanuj
  Goswami, Vibhor Gupta, Vignesh Ramanathan, Viktor Kerkez, Vincent Gonguet,
  Virginie Do, Vish Vogeti, Vladan Petrovic, Weiwei Chu, Wenhan Xiong, Wenyin
  Fu, Whit ney Meers, Xavier Martinet, Xiaodong Wang, Xiaoqing~Ellen Tan,
  Xinfeng Xie, Xuchao Jia, Xuewei Wang, Yaelle Goldschlag, Yashesh Gaur,
  Yasmine Babaei, Yiqian Wen, Yiwen Song, Yuchen Zhang, Yue Li, Yuning Mao,
  Zacharie~Delpierre Coudert, Zhengxu Yan, Zhengxing Chen, Zoe Papakipos,
  Aaditya~K. Singh, Aaron Grattafiori, Abha Jain, Adam Kelsey, Adam Shajnfeld,
  Adi Gangidi, Adolfo Victoria, Ahuva Goldstand, Ajay Menon, Ajay Sharma, Alex
  Boesenberg, Alex Vaughan, Alexei Baevski, Allie Feinstein, Amanda Kallet,
  Amit Sangani, Anam Yunus, Andrei Lupu, Andres Alvarado, Andrew Caples, Andrew
  Gu, Andrew Ho, Andrew Poulton, Andrew Ryan, Ankit Ramchandani, Annie Franco,
  Aparajita Saraf, Arkabandhu Chowdhury, Ashley Gabriel, Ashwin Bharambe, Assaf
  Eisenman, Azadeh Yazdan, Beau James, Ben Maurer, Benjamin Leonhardi,
  Po-Yao~(Bernie) Huang, Beth Loyd, Beto de~Paola, Bhargavi Paranjape, Bing
  Liu, Bo~Wu, Boyu Ni, Braden Hancock, Bram Wasti, Brandon Spence, Brani
  Stojkovic, Brian Gamido, Britt Montalvo, Carl Parker, Carly Burton, Catalina
  Mejia, Changhan Wang, Changkyu Kim, Chao Zhou, Chester Hu, Ching-Hsiang Chu,
  Chris Cai, Chris Tindal, Christoph Feichtenhofer, Damon Civin, Dana Beaty,
  Daniel Kreymer, Shang-Wen Li, Danny Wyatt, David Adkins, David Xu, Davide
  Testuggine, Delia David, Devi Parikh, Diana Liskovich, Didem Foss, Dingkang
  Wang, Duc Le, Dustin Holland, Edward Dowling, Eissa Jamil, Elaine Montgomery,
  Eleonora Presani, Emily Hahn, Emily Wood, Erik Brinkman, Esteban Arcaute,
  Evan Dunbar, Evan Smothers, Fei Sun, Felix Kreuk, Feng Tian, Firat Ozgenel,
  Francesco Caggioni, Francisco~(Paco) Guzm{\'a}n, Frank~J. Kanayet, Frank
  Seide, Gabriela~Medina Florez, Gabriella Schwarz, Gada Badeer, Georgia Swee,
  Gil Halpern, Govind Thattai, Grant Herman, Grigory Sizov, Guangyi Zhang, Guna
  Lakshminarayanan, Hamid Shojanazeri, Han Zou, Hannah Wang, Han Zha, Haroun
  Habeeb, Harrison Rudolph, Helen Suk, Henry Aspegren, Hunter Goldman, Igor
  Molybog, Igor Tufanov, Irina-Elena Veliche, Itai Gat, Jake Weissman, James
  Geboski, James Kohli, Japhet Asher, Jean-Baptiste Gaya, Jeff Marcus, Jeff
  Tang, Jennifer Chan, Jenny Zhen, Jeremy Reizenstein, Jeremy Teboul, Jessica
  Zhong, Jian Jin, Jingyi Yang, Joe Cummings, Jon Carvill, Jon Shepard,
  Jonathan McPhie, Jonathan Torres, Josh Ginsburg, Junjie Wang, Kaixing(Kai)
  Wu, U~KamHou, Karan Saxena, Karthik Prasad, Kartikay Khandelwal, Katayoun
  Zand, Kathy Matosich, Kaushik Veeraraghavan, Kelly Michelena, Keqian Li, Kun
  Huang, Kunal Chawla, Kushal Lakhotia, Kyle Huang, Lailin Chen, Lakshya Garg,
  A~Lavender, Leandro Silva, Lee Bell, Lei Zhang, Liangpeng Guo, Licheng Yu,
  Liron Moshkovich, Luca Wehrstedt, Madian Khabsa, Manav Avalani, Manish Bhatt,
  Maria Tsimpoukelli, Martynas Mankus, Matan Hasson, Matthias Lennie, Matthias
  Reso, Maxim Groshev, Maxim Naumov, Maya Lathi, Meghan Keneally, Michael~L.
  Seltzer, Michal Valko, Michelle Restrepo, Mihir Patel, Mik Vyatskov, Mikayel
  Samvelyan, Mike Clark, Mike Macey, Mike Wang, Miquel~Jubert Hermoso,
  Mo~Metanat, Mohammad Rastegari, Mun ish Bansal, Nandhini Santhanam, Natascha
  Parks, Natasha White, Navy ata Bawa, Nayan Singhal, Nick Egebo, Nicolas
  Usunier, Nikolay~Pavlovich Laptev, Ning Dong, Ning Zhang, Norman Cheng, Oleg
  Chernoguz, Olivia Hart, Omkar Salpekar, Ozlem Kalinli, Parkin Kent, Parth
  Parekh, Paul Saab, Pavan Balaji, Pe~dro Rittner, Philip Bontrager, Pierre
  Roux, Piotr Doll{\'a}r, Polina Zvyagina, Prashant Ratanchandani, Pritish
  Yuvraj, Qian Liang, Rachad Alao, Rachel Rodriguez, Rafi Ayub, Raghotham
  Murthy, Raghu Nayani, Rahul Mitra, Raymond Li, Rebekkah Hogan, Robin Battey,
  Rocky Wang, Ro~han Maheswari, Russ Howes, Ruty Rinott, Sai~Jayesh Bondu,
  Samyak Datta, Sara Chugh, Sara Hunt, Sargun Dhillon, S.~Yu. Sidorov, Satadru
  Pan, Saurabh Verma, Seiji Yamamoto, Sharadh Ramaswamy, Shaun Lindsay, Sheng
  Feng, Shenghao Lin, Shengxin Zha, Shiva Shankar, Shuqiang Zhang, Sinong Wang,
  Sneha Agarwal, Soji Sajuyigbe, Soumith Chintala, Stephanie Max, Stephen Chen,
  Steve Kehoe, Steve Satterfield, Sudarshan Govindaprasad, Sumit~Kumar Gupta,
  Sung-Bae Cho, Sunny Virk, Suraj Subramanian, Sy~Choudhury, Sydney Goldman,
  Tal Remez, Tamar Glaser, Tamara Best, Thilo Kohler, Thomas Robinson, Tianhe
  Li, Tianjun Zhang, Tim Matthews, Timothy Chou, Tzook Shaked, Varun
  Vontimitta, Victoria~O Ajayi, Victoria Montanez, Vijai Mohan, Vinay Kumar,
  Vishal Mangla, Vlad Ionescu, Vlad~Andrei Poenaru, Vlad~T. Mihailescu,
  Vladimir Ivanov, Wei Li, Wenchen Wang, Wenwen Jiang, Wes Bouaziz, Will
  Constable, Xia Tang, Xiaofang Wang, Xiaojian Wu, Xiaolan Wang, Xide Xia,
  Xilun Wu, Xinbo Gao, Yanjun Chen, Ye~Hu, Ye~Jia, Ye~Qi, Yenda Li, Yilin
  Zhang, Ying Zhang, Yossi Adi, Youngjin Nam, Yu~Wang, Yuchen Hao, Yundi Qian,
  Yuzi He, Zach Rait, Zachary DeVito, Zef Rosnbrick, Zhaoduo Wen, Zhenyu Yang,
  and Zhiwei Zhao.
\newblock The {Llama 3} herd of models.
\newblock \emph{arXiv preprint arXiv:2407.21783}, 2024.

\bibitem[Dubois et~al.(2024)Dubois, Galambosi, Liang, and
  Hashimoto]{Dubois2024LengthControlledAA}
Yann Dubois, Bal'azs Galambosi, Percy Liang, and Tatsunori Hashimoto.
\newblock Length-controlled {AlpacaEval}: A simple way to debias automatic
  evaluators.
\newblock \emph{ArXiv}, abs/2404.04475, 2024.

\bibitem[Emmenegger et~al.(2026)Emmenegger, Stahler, and
  Podimata]{emmenegger2026prediction}
Nicolas Emmenegger, Ellery Stahler, and Chara Podimata.
\newblock Prediction-powered inference across many tasks for ai evaluation \&
  social science research.
\newblock \emph{arXiv preprint arXiv:2605.29249}, 2026.

\bibitem[Ethayarajh et~al.(2024)Ethayarajh, Xu, Muennighoff, Jurafsky, and
  Kiela]{Ethayarajh2024KTOMA}
Kawin Ethayarajh, Winnie Xu, Niklas Muennighoff, Dan Jurafsky, and Douwe Kiela.
\newblock Kto: Model alignment as prospect theoretic optimization.
\newblock In \emph{International Conference on Machine Learning}, 2024.

\bibitem[Fisch et~al.(2024)Fisch, Maynez, Hofer, Dhingra, Globerson, and
  Cohen]{Fisch2024StratifiedPI}
Adam Fisch, Joshua Maynez, R.~Alex Hofer, Bhuwan Dhingra, Amir Globerson, and
  William~W. Cohen.
\newblock Stratified prediction-powered inference for hybrid language model
  evaluation.
\newblock \emph{ArXiv}, abs/2406.04291, 2024.

\bibitem[Frick et~al.(2024)Frick, Li, Chen, Chiang, Angelopoulos, Jiao, Zhu,
  Gonzalez, and Stoica]{Frick2024HowTE}
Evan Frick, Tianle Li, Connor Chen, Wei-Lin Chiang, Anastasios~Nikolas
  Angelopoulos, Jiantao Jiao, Banghua Zhu, Joseph~E. Gonzalez, and Ion Stoica.
\newblock How to evaluate reward models for {RLHF}.
\newblock In \emph{The Thirteenth International Conference on Learning
  Representations}, 2024.

\bibitem[Gao et~al.(2024)Gao, Alon, and Metzler]{Gao2024ImpactOP}
Yang Gao, Dana Alon, and Donald Metzler.
\newblock Impact of preference noise on the alignment performance of generative
  language models.
\newblock \emph{ArXiv}, abs/2404.09824, 2024.

\bibitem[Gerstgrasser et~al.(2024)Gerstgrasser, Schaeffer, Dey, Rafailov,
  Sleight, Hughes, Korbak, Agrawal, Pai, Gromov, Roberts, Yang, Donoho, and
  Koyejo]{Gerstgrasser2024IsMC}
Matthias Gerstgrasser, Rylan Schaeffer, Apratim Dey, Rafael Rafailov, Henry
  Sleight, John Hughes, Tomasz Korbak, Rajashree Agrawal, Dhruv Pai, Andrey
  Gromov, Daniel~A. Roberts, Diyi Yang, David~L. Donoho, and Sanmi Koyejo.
\newblock Is model collapse inevitable? breaking the curse of recursion by
  accumulating real and synthetic data.
\newblock \emph{ArXiv}, abs/2404.01413, 2024.

\bibitem[Ghadimi and Lan(2013)]{Ghadimi2013StochasticFA}
Saeed Ghadimi and Guanghui Lan.
\newblock Stochastic first- and zeroth-order methods for nonconvex stochastic
  programming.
\newblock \emph{SIAM J. Optim.}, 23:\penalty0 2341--2368, 2013.

\bibitem[Gligori{\'c} et~al.(2025)Gligori{\'c}, Zrnic, Lee, Cand{\`e}s, and
  Jurafsky]{Gligoric2024CanUL}
Kristina Gligori{\'c}, Tijana Zrnic, Cinoo Lee, Emmanuel~J. Cand{\`e}s, and Dan
  Jurafsky.
\newblock Can unconfident {LLM} annotations be used for confident conclusions?
\newblock In \emph{Proceedings of the 2025 Conference of the North American
  Chapter of the Association for Computational Linguistics (NAACL)}, 2025.
\newblock arXiv:2408.15204.

\bibitem[Han et~al.(2024)Han, Li, and Zhang]{HanLiZhang2024FinitetimeDC}
Yuze Han, Xiang Li, and Zhihua Zhang.
\newblock Finite-time decoupled convergence in nonlinear two-time-scale
  stochastic approximation.
\newblock \emph{arXiv preprint arXiv:2401.03893}, 2024.

\bibitem[Hazan(2016)]{Hazan2016OCO}
Elad Hazan.
\newblock Introduction to online convex optimization.
\newblock \emph{Foundations and Trends in Optimization}, 2\penalty0
  (3-4):\penalty0 157--325, 2016.

\bibitem[He et~al.(2024)He, Wang, Jiang, Papangelis, and
  Zhao]{He2024SemiSupervisedRM}
Yifei He, Haoxiang Wang, Ziyan Jiang, Alexandros Papangelis, and Han Zhao.
\newblock Semi-supervised reward modeling via iterative self-training.
\newblock In \emph{Findings of the Association for Computational Linguistics:
  EMNLP 2024}, 2024.
\newblock arXiv:2409.06903.

\bibitem[Hong et~al.(2023)Hong, Wai, Wang, and Yang]{HongWaiWangYang2023ATS}
Mingyi Hong, Hoi-To Wai, Zhaoran Wang, and Zhuoran Yang.
\newblock A two-timescale stochastic algorithm framework for bilevel
  optimization: Complexity analysis and application to actor-critic.
\newblock \emph{SIAM Journal on Optimization}, 33\penalty0 (1):\penalty0
  147--180, 2023.

\bibitem[Huang et~al.(2024)Huang, Block, Foster, Rohatgi, Zhang, Simchowitz,
  Ash, and Krishnamurthy]{Huang2024SelfImprovementIL}
Audrey Huang, Adam Block, Dylan~J. Foster, Dhruv Rohatgi, Cyril Zhang, Max
  Simchowitz, Jordan~T. Ash, and Akshay Krishnamurthy.
\newblock Self-improvement in language models: The sharpening mechanism.
\newblock \emph{ArXiv}, abs/2412.01951, 2024.

\bibitem[Ji et~al.(2025)Ji, Lei, and Zrnic]{ji2025predictions}
Wenlong Ji, Lihua Lei, and Tijana Zrnic.
\newblock Predictions as surrogates: Revisiting surrogate outcomes in the age
  of ai.
\newblock \emph{arXiv preprint arXiv:2501.09731}, 2025.

\bibitem[Kingma and Ba(2014)]{Kingma2014AdamAM}
Diederik~P. Kingma and Jimmy Ba.
\newblock Adam: A method for stochastic optimization.
\newblock \emph{CoRR}, abs/1412.6980, 2014.

\bibitem[Kluger and Bates(2026)]{kluger2026m}
Dan~M Kluger and Stephen Bates.
\newblock M-estimation under two-phase multiwave sampling with applications to
  prediction-powered inference.
\newblock \emph{arXiv preprint arXiv:2602.16933}, 2026.

\bibitem[Kluger et~al.(2025)Kluger, Lu, Zrnic, Wang, and
  Bates]{kluger2025prediction}
Dan~M Kluger, Kerri Lu, Tijana Zrnic, Sherrie Wang, and Stephen Bates.
\newblock Prediction-powered inference with imputed covariates and nonuniform
  sampling.
\newblock \emph{arXiv preprint arXiv:2501.18577}, 2025.

\bibitem[Konda and Tsitsiklis(2003)]{KondaTsitsiklis2003OnActorCritic}
Vijay~R. Konda and John~N. Tsitsiklis.
\newblock On actor-critic algorithms.
\newblock \emph{SIAM Journal on Control and Optimization}, 42\penalty0
  (4):\penalty0 1143--1166, 2003.

\bibitem[Kwon et~al.(2025)Kwon, Dotson, Chen, and Xie]{Kwon2025TwoTimescaleLSA}
Jeongyeol Kwon, Luke Dotson, Yudong Chen, and Qiaomin Xie.
\newblock Two-timescale linear stochastic approximation: Constant stepsizes go
  a long way.
\newblock In \emph{Proceedings of the 28th International Conference on
  Artificial Intelligence and Statistics (AISTATS)}, Proceedings of Machine
  Learning Research. PMLR, 2025.

\bibitem[Lambert et~al.(2025)Lambert, Pyatkin, Morrison, Miranda, Lin, Chandu,
  Dziri, Kumar, Zick, Choi, Smith, and Hajishirzi]{Lambert2024RewardBenchER}
Nathan Lambert, Valentina Pyatkin, Jacob~Daniel Morrison, Lester James~Validad
  Miranda, Bill~Yuchen Lin, Khyathi~Raghavi Chandu, Nouha Dziri, Sachin Kumar,
  Tom Zick, Yejin Choi, Noah~A. Smith, and Hanna Hajishirzi.
\newblock Rewardbench: Evaluating reward models for language modeling.
\newblock In \emph{Findings of the Association for Computational Linguistics:
  NAACL 2025}, pages 1755--1797, 2025.

\bibitem[Lee(2013)]{Lee2013PseudoLabelT}
Dong-Hyun Lee.
\newblock Pseudo-label : The simple and efficient semi-supervised learning
  method for deep neural networks.
\newblock 2013.

\bibitem[Lee et~al.(2023)Lee, Phatale, Mansoor, Lu, Mesnard, Bishop, Carbune,
  and Rastogi]{Lee2023RLAIFVR}
Harrison Lee, Samrat Phatale, Hassan Mansoor, Kellie Lu, Thomas Mesnard, Colton
  Bishop, Victor Carbune, and Abhinav Rastogi.
\newblock {RLAIF} vs. {RLHF}: Scaling reinforcement learning from human
  feedback with {AI} feedback.
\newblock In \emph{International Conference on Machine Learning}, 2023.

\bibitem[Lee et~al.(2025)Lee, Lim, Park, Cheon, and
  Song]{Lee2025SemiSupervisedPO}
Seonggyun Lee, Sungjun Lim, Seojin Park, Soeun Cheon, and Kyungwoo Song.
\newblock Semi-supervised preference optimization with limited feedback.
\newblock \emph{ArXiv}, abs/2511.00040, 2025.

\bibitem[Li et~al.(2025)Li, Sun, Huang, Zhong, Jiang, Han, Zhang, Wang, and
  Liu]{Li2025PreferenceLA}
Dawei Li, Renliang Sun, Yue Huang, Ming Zhong, Bohan Jiang, Jiawei Han,
  Xiangliang Zhang, Wei Wang, and Huan Liu.
\newblock Preference leakage: A contamination problem in {LLM}-as-a-judge.
\newblock \emph{ArXiv}, abs/2502.01534, 2025.

\bibitem[Liu et~al.(2024)Liu, Zeng, Liu, Yan, He, Wang, Yan, Liu, and
  Zhou]{Liu2024SkyworkRewardBO}
Chris Liu, Liang Zeng, Jiacai Liu, Rui Yan, Jujie He, Chaojie Wang, Shuicheng
  Yan, Yang Liu, and Yahui Zhou.
\newblock Skywork-reward: Bag of tricks for reward modeling in llms.
\newblock \emph{ArXiv}, abs/2410.18451, 2024.

\bibitem[Liu et~al.(2025{\natexlab{a}})Liu, Zeng, Xiao, He, Liu, Wang, Yan,
  Shen, Zhang, Xu, Liu, and Zhou]{Liu2025SkyworkRewardV2SP}
Chris Liu, Liang Zeng, Yuzhen Xiao, Jujie He, Jiacai Liu, Chaojie Wang, Rui
  Yan, Wei Shen, Fuxiang Zhang, Jiacheng Xu, Yang Liu, and Yahui Zhou.
\newblock Skywork-reward-v2: Scaling preference data curation via human-ai
  synergy.
\newblock \emph{ArXiv}, abs/2507.01352, 2025{\natexlab{a}}.

\bibitem[Liu et~al.(2025{\natexlab{b}})Liu, Chen, Li, Qi, Pang, Du, Lee, and
  Lin]{Liu2025UnderstandingR1ZeroLT}
Zichen Liu, Changyu Chen, Wenjun Li, Penghui Qi, Tianyu Pang, Chao Du, Wee~Sun
  Lee, and Min Lin.
\newblock Understanding r1-zero-like training: A critical perspective.
\newblock \emph{ArXiv}, abs/2503.20783, 2025{\natexlab{b}}.

\bibitem[Malik et~al.(2025)Malik, Pyatkin, Land, Morrison, Smith, Hajishirzi,
  and Lambert]{Malik2025RewardBench2A}
Saumya Malik, Valentina Pyatkin, Sander Land, Jacob~Daniel Morrison, Noah~A.
  Smith, Hanna Hajishirzi, and Nathan Lambert.
\newblock Rewardbench 2: Advancing reward model evaluation.
\newblock \emph{arXiv preprint arXiv:2506.01937}, 2025.

\bibitem[Mani et~al.(2025)Mani, Xu, Lipton, and Oberst]{Mani2025NoFL}
Pranav Mani, Peng Xu, Zachary~C. Lipton, and Michael Oberst.
\newblock No free lunch: Non-asymptotic analysis of prediction-powered
  inference.
\newblock \emph{ArXiv}, abs/2505.20178, 2025.

\bibitem[Miranda et~al.(2024)Miranda, Wang, Elazar, Kumar, Pyatkin, Brahman,
  Smith, Hajishirzi, and Dasigi]{Miranda2024HybridPL}
Lester James~V. Miranda, Yizhong Wang, Yanai Elazar, Sachin Kumar, Valentina
  Pyatkin, Faeze Brahman, Noah~A. Smith, Hannaneh Hajishirzi, and Pradeep
  Dasigi.
\newblock Hybrid preferences: Learning to route instances for human vs. {AI}
  feedback.
\newblock \emph{ArXiv}, abs/2410.19133, 2024.

\bibitem[Mokkadem and Pelletier(2006)]{MokkademPelletier2006ConvergenceRO}
Abdelkader Mokkadem and Mariane Pelletier.
\newblock Convergence rate and averaging of nonlinear two-time-scale stochastic
  approximation algorithms.
\newblock \emph{The Annals of Applied Probability}, 16\penalty0 (3):\penalty0
  1671--1702, 2006.

\bibitem[Ouyang et~al.(2022)Ouyang, Wu, Jiang, Almeida, Wainwright, Mishkin,
  Zhang, Agarwal, Slama, Ray, Schulman, Hilton, Kelton, Miller, Simens, Askell,
  Welinder, Christiano, Leike, and Lowe]{Ouyang2022TrainingLM}
Long Ouyang, Jeff Wu, Xu~Jiang, Diogo Almeida, Carroll~L. Wainwright, Pamela
  Mishkin, Chong Zhang, Sandhini Agarwal, Katarina Slama, Alex Ray, John
  Schulman, Jacob Hilton, Fraser Kelton, Luke~E. Miller, Maddie Simens, Amanda
  Askell, Peter Welinder, Paul~Francis Christiano, Jan Leike, and Ryan~J. Lowe.
\newblock Training language models to follow instructions with human feedback.
\newblock \emph{Advances in neural information processing systems},
  35:\penalty0 27730--27744, 2022.

\bibitem[Pace et~al.(2024)Pace, Mallinson, Malmi, Krause, and
  Severyn]{Pace2024WestofNSP}
Aliz{\'e}e Pace, Jonathan Mallinson, Eric Malmi, Sebastian Krause, and Aliaksei
  Severyn.
\newblock West-of-n: Synthetic preferences for self-improving reward models.
\newblock \emph{ArXiv}, abs/2401.12086, 2024.

\bibitem[Polyak and Juditsky(1992)]{PolyakJuditsky1992AccelerationOS}
Boris~T. Polyak and Anatoli~B. Juditsky.
\newblock Acceleration of stochastic approximation by averaging.
\newblock \emph{SIAM Journal on Control and Optimization}, 30\penalty0
  (4):\penalty0 838--855, 1992.

\bibitem[Rafailov et~al.(2023)Rafailov, Sharma, Mitchell, Ermon, Manning, and
  Finn]{Rafailov2023DirectPO}
Rafael Rafailov, Archit Sharma, Eric Mitchell, Stefano Ermon, Christopher~D.
  Manning, and Chelsea Finn.
\newblock Direct preference optimization: Your language model is secretly a
  reward model.
\newblock \emph{Advances in neural information processing systems},
  36:\penalty0 53728--53741, 2023.

\bibitem[Rakhlin and Sridharan(2013)]{Rakhlin2012OnlineLW}
Alexander Rakhlin and Karthik Sridharan.
\newblock Online learning with predictable sequences.
\newblock In \emph{Conference on Learning Theory}, pages 993--1019. PMLR, 2013.

\bibitem[Sfyraki and Wang(2026)]{sfyrakirevisiting}
Maria-Eleni Sfyraki and Jun-Kun Wang.
\newblock Revisiting active sequential prediction-powered mean estimation.
\newblock In \emph{The Fourteenth International Conference on Learning
  Representations}, 2026.

\bibitem[Shao et~al.(2024)Shao, Wang, Zhu, Xu, Song, Zhang, Li, Wu, and
  Guo]{Shao2024DeepSeekMathPT}
Zhihong Shao, Peiyi Wang, Qihao Zhu, Runxin Xu, Junxiao Song, Mingchuan Zhang,
  Y.~K. Li, Yu~Wu, and Daya Guo.
\newblock Deepseekmath: Pushing the limits of mathematical reasoning in open
  language models.
\newblock \emph{arXiv preprint arXiv:2402.03300}, 2024.

\bibitem[Shi et~al.(2024)Shi, Ma, Liang, Diao, Ma, and
  Vosoughi]{Shi2024JudgingTJ}
Lin Shi, Chiyu Ma, Wenhua Liang, Xingjian Diao, Weicheng Ma, and Soroush
  Vosoughi.
\newblock Judging the judges: A systematic study of position bias in
  {LLM}-as-a-judge.
\newblock \emph{ArXiv}, abs/2406.07791, 2024.

\bibitem[Shoham et~al.(2025)Shoham, Dorfman, Shaer, Levy, and
  Romano]{Shoham2025PredictionPoweredSL}
Noa Shoham, Ron Dorfman, Shalev Shaer, Kfir~Yehuda Levy, and Yaniv Romano.
\newblock Prediction-powered semi-supervised learning with online power tuning.
\newblock In \emph{The Thirty-ninth Annual Conference on Neural Information
  Processing Systems}, 2025.

\bibitem[Shumailov et~al.(2023)Shumailov, Shumaylov, Zhao, Gal, Papernot, and
  Anderson]{Shumailov2023TheCO}
Ilia Shumailov, Zakhar Shumaylov, Yiren Zhao, Yarin Gal, Nicolas Papernot, and
  Ross Anderson.
\newblock The curse of recursion: Training on generated data makes models
  forget.
\newblock \emph{ArXiv}, abs/2305.17493, 2023.

\bibitem[Sohn et~al.(2020)Sohn, Berthelot, Li, Zhang, Carlini, Cubuk, Kurakin,
  Zhang, and Raffel]{Sohn2020FixMatchSS}
Kihyuk Sohn, David Berthelot, Chun-Liang Li, Zizhao Zhang, Nicholas Carlini,
  Ekin~Dogus Cubuk, Alexey Kurakin, Han Zhang, and Colin Raffel.
\newblock Fixmatch: Simplifying semi-supervised learning with consistency and
  confidence.
\newblock \emph{Advances in neural information processing systems},
  33:\penalty0 596--608, 2020.

\bibitem[Somerstep et~al.(2024)Somerstep, Polo, Banerjee, Ritov, Yurochkin, and
  Sun]{Somerstep2024ATL}
Seamus Somerstep, Felipe~Maia Polo, Moulinath Banerjee, Ya'acov Ritov, Mikhail
  Yurochkin, and Yuekai Sun.
\newblock A transfer learning framework for weak-to-strong generalization.
\newblock \emph{ArXiv}, abs/2405.16236, 2024.

\bibitem[Song et~al.(2026)Song, Kluger, Parikh, and Gu]{song2026demystifying}
Yilin Song, Dan~M Kluger, Harsh Parikh, and Tian Gu.
\newblock Demystifying prediction powered inference.
\newblock \emph{arXiv preprint arXiv:2601.20819}, 2026.

\bibitem[Song et~al.(2025)Song, Zhang, Eisenach, Kakade, Foster, and
  Ghai]{Song2024MindTG}
Yuda Song, Hanlin Zhang, Carson Eisenach, Sham Kakade, Dean Foster, and Udaya
  Ghai.
\newblock Mind the gap: Examining the self-improvement capabilities of large
  language models.
\newblock In \emph{International Conference on Learning Representations}, 2025.
\newblock arXiv:2412.02674.

\bibitem[Stiennon et~al.(2020)Stiennon, Ouyang, Wu, Ziegler, Lowe, Voss,
  Radford, Amodei, and Christiano]{Stiennon2020LearningTS}
Nisan Stiennon, Long Ouyang, Jeff Wu, Daniel~M. Ziegler, Ryan~J. Lowe, Chelsea
  Voss, Alec Radford, Dario Amodei, and Paul Christiano.
\newblock Learning to summarize from human feedback.
\newblock \emph{ArXiv}, abs/2009.01325, 2020.

\bibitem[Surendran et~al.(2024)Surendran, Godichon-Baggioni, Fermanian, and
  Corff]{Surendran2024NonasymptoticAO}
Sobihan Surendran, Antoine Godichon-Baggioni, Adeline Fermanian, and Sylvain~Le
  Corff.
\newblock Non-asymptotic analysis of biased adaptive stochastic approximation.
\newblock \emph{ArXiv}, abs/2402.02857, 2024.

\bibitem[Tao and Li(2025)]{Tao2024YourWL}
Leitian Tao and Yixuan Li.
\newblock Your weak {LLM} is secretly a strong teacher for alignment.
\newblock In \emph{International Conference on Learning Representations}, 2025.
\newblock arXiv:2409.08813.

\bibitem[Testa et~al.(2025)Testa, Xu, Lei, and Roeder]{testa2025semiparametric}
Lorenzo Testa, Qi~Xu, Jing Lei, and Kathryn Roeder.
\newblock Semiparametric semi-supervised learning for general targets under
  distribution shift and decaying overlap.
\newblock \emph{arXiv preprint arXiv:2505.06452}, 2025.

\bibitem[Tunstall et~al.(2024)Tunstall, Beeching, Lambert, Rajani, Rasul,
  Belkada, Huang, Werra, Fourrier, Habib, Sarrazin, Sanseviero, Rush, and
  Wolf]{Tunstall2023ZephyrDD}
Lewis Tunstall, Edward~Emanuel Beeching, Nathan Lambert, Nazneen Rajani, Kashif
  Rasul, Younes Belkada, Shengyi Huang, Leandro~Von Werra, Cl{\'e}mentine
  Fourrier, Nathan Habib, Nathan Sarrazin, Omar Sanseviero, Alexander~M Rush,
  and Thomas Wolf.
\newblock Zephyr: Direct distillation of {LM} alignment.
\newblock In \emph{First Conference on Language Modeling}, 2024.

\bibitem[van~der Laan and Van Der~Laan(2026)]{van2026calibeating}
Lars van~der Laan and Mark Van Der~Laan.
\newblock Calibeating prediction-powered inference.
\newblock \emph{arXiv preprint arXiv:2604.21260}, 2026.

\bibitem[Wang et~al.(2024)Wang, Li, Chen, Zhu, Lin, Cao, Liu, Liu, and
  Sui]{Wang2023LargeLM}
Peiyi Wang, Lei Li, Liang Chen, Dawei Zhu, Binghuai Lin, Yunbo Cao, Qi~Liu,
  Tianyu Liu, and Zhifang Sui.
\newblock Large language models are not fair evaluators.
\newblock In \emph{Proceedings of the 62nd Annual Meeting of the Association
  for Computational Linguistics (Volume 1: Long Papers)}, pages 9440--9450,
  2024.

\bibitem[Ward et~al.(2018)Ward, Wu, and Bottou]{Ward2018AdaGradSS}
Rachel~A. Ward, Xiaoxia Wu, and L{\'e}on Bottou.
\newblock Adagrad stepsizes: Sharp convergence over nonconvex landscapes, from
  any initialization.
\newblock In \emph{International Conference on Machine Learning}, 2018.

\bibitem[Wataoka et~al.(2024)Wataoka, Takahashi, and
  Ri]{Wataoka2024SelfPreferenceBI}
Koki Wataoka, Tsubasa Takahashi, and Ryokan Ri.
\newblock Self-preference bias in {LLM}-as-a-judge.
\newblock \emph{ArXiv}, abs/2410.21819, 2024.

\bibitem[Wei et~al.(2023)Wei, Huang, Lu, Zhou, and Le]{Wei2023SimpleSD}
Jerry~W. Wei, Da~Huang, Yifeng Lu, Denny Zhou, and Quoc~V. Le.
\newblock Simple synthetic data reduces sycophancy in large language models.
\newblock \emph{ArXiv}, abs/2308.03958, 2023.

\bibitem[Wu et~al.(2024)Wu, Yuan, Golovneva, Xu, Tian, Jiao, Weston, and
  Sukhbaatar]{Wu2024MetaRewardingLM}
Tianhao Wu, Weizhe Yuan, Olga Golovneva, Jing Xu, Yuandong Tian, Jiantao Jiao,
  Jason Weston, and Sainbayar Sukhbaatar.
\newblock Meta-rewarding language models: Self-improving alignment with
  llm-as-a-meta-judge.
\newblock \emph{ArXiv}, abs/2407.19594, 2024.

\bibitem[Xu et~al.(2025{\natexlab{a}})Xu, Ye, Zhou, Zhu, Quinzan, and
  Shi]{Xu2025DoublyRA}
Erhan Xu, Kai Ye, Hongyi Zhou, Luhan Zhu, Francesco Quinzan, and Chengchun Shi.
\newblock Doubly robust alignment for large language models.
\newblock In \emph{Advances in Neural Information Processing Systems},
  2025{\natexlab{a}}.
\newblock arXiv:2506.01183.

\bibitem[Xu et~al.(2025{\natexlab{b}})Xu, Chakraborty, K{\i}c{\i}man, Aryal,
  Rodrigues, Sharma, Estevao, de~Luis~Balaguer, Wolk, Padilha, Nunes,
  Balakrishnan, Lu, and Chandra]{Xu2025RLTHFTH}
Yifei Xu, Tusher Chakraborty, Emre K{\i}c{\i}man, Bibek Aryal, Eduardo
  Rodrigues, Srinagesh Sharma, Roberto Estevao, Maria~Angels de~Luis~Balaguer,
  Jessica Wolk, Rafael Padilha, Leonardo Nunes, Shobana Balakrishnan, Songwu
  Lu, and Ranveer Chandra.
\newblock {RLTHF}: Targeted human feedback for {LLM} alignment.
\newblock In \emph{International Conference on Machine Learning},
  2025{\natexlab{b}}.
\newblock arXiv:2502.13417.

\bibitem[Ye et~al.(2024)Ye, Wang, Huang, Chen, Zhang, Moniz, Gao, Geyer, Huang,
  Chen, Chawla, and Zhang]{Ye2024JusticeOP}
Jiayi Ye, Yanbo Wang, Yue Huang, Dongping Chen, Qihui Zhang, Nuno Moniz, Tian
  Gao, Werner Geyer, Chao Huang, Pin-Yu Chen, Nitesh~V. Chawla, and Xiangliang
  Zhang.
\newblock Justice or prejudice? quantifying biases in {LLM}-as-a-judge.
\newblock \emph{ArXiv}, abs/2410.02736, 2024.

\bibitem[Ye et~al.(2025)Ye, Zhou, Zhu, Quinzan, and Shi]{Ye2025RobustRF}
Kai Ye, Hongyi Zhou, Jin Zhu, Francesco Quinzan, and Chengchun Shi.
\newblock Robust reinforcement learning from human feedback for large language
  models fine-tuning.
\newblock \emph{ArXiv}, abs/2504.03784, 2025.

\bibitem[Yu et~al.(2025)Yu, Zhang, Zhu, Yuan, Zuo, Yue, Dai, Fan, Liu, Liu,
  Liu, Lin, Lin, Ma, Sheng, Tong, Zhang, Zhang, Zhang, Zhu, Zhu, Chen, Chen,
  Wang, Yu, Dai, Song, Wei, Zhou, Liu, Ma, Wang, Yan, Qiao, Wu, and
  Wang]{Yu2025DAPOAO}
Qiying Yu, Zheng Zhang, Ruofei Zhu, Yufeng Yuan, Xiaochen Zuo, Yu~Yue, Weinan
  Dai, Tiantian Fan, Gaohong Liu, Lingjun Liu, Xin Liu, Haibin Lin, Zhiqi Lin,
  Bole Ma, Guangming Sheng, Yuxuan Tong, Chi Zhang, Mofan Zhang, Wang Zhang,
  Hang Zhu, Jinhua Zhu, Jiaze Chen, Jiangjie Chen, Chengyi Wang, Hongli Yu,
  Weinan Dai, Yuxuan Song, Xiangpeng Wei, Hao Zhou, Jingjing Liu, Wei-Ying Ma,
  Yaqing Wang, Lin Yan, Mu~Qiao, Yonghui Wu, and Mingxuan Wang.
\newblock {DAPO}: An open-source {LLM} reinforcement learning system at scale.
\newblock \emph{ArXiv}, abs/2503.14476, 2025.

\bibitem[Yuan et~al.(2024)Yuan, Pang, Cho, Li, Sukhbaatar, Xu, and
  Weston]{Yuan2024SelfRewardingLM}
Weizhe Yuan, Richard~Yuanzhe Pang, Kyunghyun Cho, Xian Li, Sainbayar
  Sukhbaatar, Jing Xu, and Jason~E Weston.
\newblock Self-rewarding language models.
\newblock In \emph{Forty-first International Conference on Machine Learning},
  2024.

\bibitem[Zeng et~al.(2025)Zeng, Zhou, Arora, and Zanette]{Zeng2025ShrinkingTV}
Guanning Zeng, Zhaoyi Zhou, Daman Arora, and Andrea Zanette.
\newblock Shrinking the variance: Shrinkage baselines for reinforcement
  learning with verifiable rewards.
\newblock \emph{ArXiv}, abs/2511.03710, 2025.

\bibitem[Zhang et~al.(2021)Zhang, Wang, Hou, Wu, Wang, Okumura, and
  Shinozaki]{Zhang2021FlexMatchBS}
Bowen Zhang, Yidong Wang, Wenxin Hou, Hao Wu, Jindong Wang, Manabu Okumura, and
  Takahiro Shinozaki.
\newblock Flexmatch: Boosting semi-supervised learning with curriculum pseudo
  labeling.
\newblock \emph{Advances in neural information processing systems},
  34:\penalty0 18408--18419, 2021.

\bibitem[Zhang et~al.(2025)Zhang, Wang, Hwang, Dong, Delalleau, Choi, Choi,
  Ren, and Pyatkin]{Zhang2024DivergingPW}
Michael~Jq Zhang, Zhilin Wang, Jena~D Hwang, Yi~Dong, Olivier Delalleau, Yejin
  Choi, Eunsol Choi, Xiang Ren, and Valentina Pyatkin.
\newblock Diverging preferences: When do annotators disagree and do models
  know?
\newblock In \emph{International Conference on Machine Learning}, pages
  76193--76212. PMLR, 2025.

\bibitem[Zheng et~al.(2025)Zheng, Liu, Li, Chen, Yu, Gao, Dang, Liu, Men, Yang,
  Zhou, and Lin]{Zheng2025GroupSP}
Chujie Zheng, Shixuan Liu, Mingze Li, Xiong-Hui Chen, Bowen Yu, Chang Gao, Kai
  Dang, Yuqiong Liu, Rui Men, An~Yang, Jingren Zhou, and Junyang Lin.
\newblock Group sequence policy optimization.
\newblock \emph{ArXiv}, abs/2507.18071, 2025.

\bibitem[Zheng et~al.(2023)Zheng, Chiang, Sheng, Zhuang, Wu, Zhuang, Lin, Li,
  Li, Xing, Zhang, Gonzalez, and Stoica]{Zheng2023JudgingLW}
Lianmin Zheng, Wei-Lin Chiang, Ying Sheng, Siyuan Zhuang, Zhanghao Wu, Yonghao
  Zhuang, Zi~Lin, Zhuohan Li, Dacheng Li, Eric~P. Xing, Haotong Zhang,
  Joseph~E. Gonzalez, and Ion Stoica.
\newblock Judging llm-as-a-judge with mt-bench and chatbot arena.
\newblock \emph{Advances in neural information processing systems},
  36:\penalty0 46595--46623, 2023.

\bibitem[Zhou et~al.(2026)Zhou, Ye, Xu, Zhu, Yang, Gong, and
  Shi]{Zhou2026DemystifyingGR}
Hongyi Zhou, Kai Ye, Erhan Xu, Jin Zhu, Ying Yang, Shijin Gong, and Chengchun
  Shi.
\newblock Demystifying group relative policy optimization: Its policy gradient
  is a u-statistic.
\newblock \emph{ArXiv}, abs/2603.01162, 2026.

\bibitem[Zhou et~al.(2025)Zhou, Song, and Zanette]{Zhou2025AcceleratingUL}
Zhaoyi Zhou, Yuda Song, and Andrea Zanette.
\newblock Accelerating unbiased {LLM} evaluation via synthetic feedback.
\newblock \emph{ArXiv}, abs/2502.10563, 2025.

\bibitem[Zhu et~al.(2023)Zhu, Ding, Jacobson, Wu, Zhan, Jordan, and
  Jiao]{Zhu2023DoublyRS}
Banghua Zhu, Mingyu Ding, Philip Jacobson, Ming Wu, Wei Zhan, Michael Jordan,
  and Jiantao Jiao.
\newblock Doubly-robust self-training.
\newblock \emph{Advances in Neural Information Processing Systems},
  36:\penalty0 41413--41431, 2023.

\bibitem[Ziegler et~al.(2019)Ziegler, Stiennon, Wu, Brown, Radford, Amodei,
  Christiano, and Irving]{Ziegler2019FineTuningLM}
Daniel~M. Ziegler, Nisan Stiennon, Jeff Wu, Tom~B. Brown, Alec Radford, Dario
  Amodei, Paul Christiano, and Geoffrey Irving.
\newblock Fine-tuning language models from human preferences.
\newblock \emph{ArXiv}, abs/1909.08593, 2019.

\bibitem[Zrnic(2024)]{zrnic2024note}
Tijana Zrnic.
\newblock A note on the prediction-powered bootstrap.
\newblock \emph{arXiv preprint arXiv:2405.18379}, 2024.

\bibitem[Zrnic and
  Cand{\`e}s(2024{\natexlab{a}})]{Zrnic2023CrossPredictionpoweredI}
Tijana Zrnic and Emmanuel~J. Cand{\`e}s.
\newblock Cross-prediction-powered inference.
\newblock \emph{Proceedings of the National Academy of Sciences}, 121\penalty0
  (15), 2024{\natexlab{a}}.
\newblock arXiv:2309.16598.

\bibitem[Zrnic and Cand{\`e}s(2024{\natexlab{b}})]{Zrnic2024ActiveSI}
Tijana Zrnic and Emmanuel~J. Cand{\`e}s.
\newblock Active statistical inference.
\newblock In \emph{International Conference on Machine Learning},
  2024{\natexlab{b}}.
\newblock arXiv:2403.03208.

\end{thebibliography}

\newpage
\appendix
\section*{Appendix}

\section{Proofs for \Secref{sec:abc_ssl}}
\label{app:abc_ssl_proofs}

This appendix collects the proofs of all results stated in \Secref{sec:abc_ssl}, organized to mirror the main-text subsections.

\subsection{Estimator and bias--variance decomposition (\Secref{sec:estimator})}
\label{app:estimator_proofs}

\subsubsection{Proof of Lemma~\ref{lem:bias}}
\label{app:lem_bias}

\lembias*

\begin{proof}
By linearity of conditional expectation and predictability of $(a_t, b_t)$,
\[
    \E_t[\glam] = \E_t[\gnf_t] + a_t\,\E_t[\dt] + b_t\,\E_t[c_t].
\]
The inputs in each of $\glab_t, \gnf_t, \gtNf_t$ are i.i.d.\ with marginal $P_X$, and $f(x)$ is deterministic given $x$, so
\[
    \E_t[\glab_t] = \nabla \Ls(\theta_t), \qquad \E_t[\gnf_t] = \E_t[\gtNf_t] = \nabla \Lf(\theta_t).
\]
Hence $\E_t[\dt] = -B(\theta_t)$ and $\E_t[c_t] = 0$, giving $\E_t[\glam] = \nabla \Lf(\theta_t) - a_t B(\theta_t) = \nabla \Ls(\theta_t) + (1-a_t) B(\theta_t)$, i.e., $\Bias_t(\glam) = (1-a_t) B(\theta_t)$.
\end{proof}

\subsubsection{Proof of Proposition~\ref{prop:mse}}
\label{app:prop_mse}

\propmse*

\begin{proof}
The bias--variance identity $\E_t\norm{\glam - \nabla\Ls(\theta_t)}^2 = \norm{\Bias_t(\glam)}^2 + \Var_t(\glam)$ holds by definition of conditional bias and variance applied to $\glam$ relative to the deterministic target $\nabla\Ls(\theta_t)$. The squared-bias formula $\norm{\Bias_t(\glam)}^2 = (1-a_t)^2\Bt^2$ is immediate from Lemma~\ref{lem:bias}. It remains to compute $\Var_t(\glam)$.

Using the second form of~\eqref{eq:abc_grad}, write
\[
    \glam = (1-a_t-b_t)\,\gnf_t + a_t\,\glab_t + b_t\,\gtNf_t.
\]
The unlabeled batch $\{x^i_t\}_{i=n+1}^{n+N}$ is drawn independently of the labeled batch $\{(x^i_t, y^i_t)\}_{i=1}^{n}$, and only $\gtNf_t$ depends on the unlabeled batch, so
\begin{equation}
    \Var_t(\glam) = \Var_t\!\bigl((1-a_t-b_t)\,\gnf_t + a_t\,\glab_t\bigr) + b_t^2\,\Var_t(\gtNf_t)
    = \Var_t\!\bigl((1-a_t-b_t)\,\gnf_t + a_t\,\glab_t\bigr) + \frac{b_t^2\,\sigma_f^2}{N},
    \label{eq:var_split}
\end{equation}
using $\Var_t(\gtNf_t) = \sigma_f^2/N$ since $\gtNf_t$ is an average of $N$ i.i.d.\ pseudolabeled gradients.

The remaining term is the variance of an average over the $n$ labeled samples. The $i$th sample contributes
\[
    u_i := (1-a_t-b_t)\,\nabla\loss(\theta_t; x_i, f(x_i)) + a_t\,\nabla\loss(\theta_t; x_i, y_i),
\]
and $(1-a_t-b_t)\,\gnf_t + a_t\,\glab_t = (1/n)\sum_{i=1}^n u_i$. By the i.i.d.\ assumption, $\Var_t\!\bigl((1-a_t-b_t)\gnf_t + a_t\glab_t\bigr) = \mathrm{tr}\,\mathrm{Cov}(u_i)/n$. Expanding the covariance via $\mathrm{tr}\,\mathrm{Cov}(\alpha X + \beta Y) = \alpha^2\,\mathrm{tr}\,\mathrm{Cov}(X) + \beta^2\,\mathrm{tr}\,\mathrm{Cov}(Y) + 2\alpha\beta\,\mathrm{tr}\,\mathrm{Cov}(X,Y)$ and substituting the definitions of $\sigma^2$, $\sigma_f^2$, and $C$ from~\eqref{eq:sigma_sq}--\eqref{eq:C_def} gives
\[
    \mathrm{tr}\,\mathrm{Cov}(u_i) = (1-a_t-b_t)^2\,\sigma_f^2 + a_t^2\,\sigma^2 + 2\,a_t(1-a_t-b_t)\,C.
\]
Combining with~\eqref{eq:var_split} yields the variance formula in~\eqref{eq:mse}.
\end{proof}

\subsection{Oracle-optimal parameters (\Secref{sec:oracle})}
\label{app:oracle_proofs}

\subsubsection{Convexity of the per-step MSE}
\label{app:lem_convex}

\begin{lemma}[Convexity of the per-step MSE]
\label{lem:convex}
For any fixed $\theta_t$, the per-step MSE $V_t(a, b)$ from~\eqref{eq:mse_def}--\eqref{eq:mse} is a convex quadratic function of $(a, b) \in \R^2$.
\end{lemma}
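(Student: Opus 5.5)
We take the closed form of Proposition~\ref{prop:mse} as the starting point. Fixing $\theta_t$ freezes $(\Bt,\sigma^2,\sigma_f^2,C)$, and then every term of~\eqref{eq:mse} is a polynomial of degree at most two in $(a,b)$. So $V_t$ is a quadratic, and convexity reduces to showing that its Hessian is positive semidefinite. We would avoid computing the $2\times 2$ Hessian and checking its determinant directly. Instead we would write $V_t$ as a sum of manifestly convex pieces, which settles convexity without sign bookkeeping involving $C$.

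\textbf{Squared-bias term.} The first term is $(1-a)^2\Bt^2$, the square of an affine function of $a$ scaled by $\Bt^2\ge 0$. It is therefore convex.

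\textbf{Unlabeled term.} The last term is $b^2\sigma_f^2/N$, with $\sigma_f^2\ge 0$, so it is also convex.

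\textbf{Labeled term (the main step).} The middle term is
\[
\frac{1}{n}\,\E\norm{(1-a-b)\,U_f + a\,U}^2 ,
\]
where $U := \nabla\loss(\theta_t;x,y)-\nabla\Ls(\theta_t)$ and $U_f := \nabla\loss(\theta_t;x,f(x))-\nabla\Lf(\theta_t)$ are the centered per-sample gradients on the \emph{same} input $x$. Expanding the norm reproduces exactly $(1-a-b)^2\sigma_f^2 + a^2\sigma^2 + 2a(1-a-b)C$, by the definitions in~\eqref{eq:sigma_sq}--\eqref{eq:C_def}; this is the trace-of-covariance identity already used in the proof of Proposition~\ref{prop:mse}. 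For each realization of $x$, the map $(a,b)\mapsto(1-a-b)U_f+aU$ is affine into $\R^d$, and composing it with $\norm{\cdot}^2$ gives a convex function. Taking the expectation preserves convexity, so the middle term is convex without ever needing to compare $C$ with $\sigma\sigma_f$.

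Summing the three convex quadratics gives the lemma. For an explicit certificate we could also state the Hessian as $2$ times the Gram matrix of the random vectors $\partial_{(a,b)}\bigl[(1-a-b)U_f+aU\bigr]/\sqrt n$, which is $(U-U_f,\,-U_f)/\sqrt n$, plus $\mathrm{diag}(\Bt^2,\sigma_f^2/N)$. A Gram matrix is PSD, and so is a diagonal matrix with nonnegative entries, so the sum is PSD.

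The only subtle point is the middle term. If one works with the coefficients $\sigma^2,\sigma_f^2,C$ directly, the cross term $2a(1-a-b)C$ can carry either sign, and checking convexity would require the Cauchy--Schwarz bound $C^2\le\sigma^2\sigma_f^2$ plus some algebra. The representation as an expected squared norm of an affine map avoids this. Strict convexity, which the paper uses for uniqueness of the stationary point in \Secref{sec:oracle}, is not claimed here; it would need extra nondegeneracy such as $\sigma_f^2>0$ together with either $\Bt>0$ or $S>0$.
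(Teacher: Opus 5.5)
Your proof is correct and is essentially the paper's own: the paper also uses the trace-of-covariance identity to rewrite the labeled term of \eqref{eq:mse} as $n^{-1}\E\norm{(1-a-b)\,U_f + a\,U}^2$ (it writes $\tilde V$ for your $U_f$). It then concludes by summing three convex quadratics, and the only cosmetic difference is that it treats $(a,b)\mapsto(1-a-b)U_f + aU$ as an affine map into the Hilbert space $L^2(P;\R^d)$, whereas you argue realization by realization and then take expectations. (In your optional Hessian certificate the diagonal piece $\mathrm{diag}(\Bt^2,\sigma_f^2/N)$ should also carry the factor $2$; this does not affect positive semidefiniteness.)
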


\begin{proof}
Let $U := \nabla\loss(\theta_t; x, y) - \nabla\Ls(\theta_t)$ and $\tilde V := \nabla\loss(\theta_t; x, f(x)) - \nabla\Lf(\theta_t)$ be the centered labeled and pseudolabeled gradients on a shared input $(x,y) \sim P$; by~\eqref{eq:sigma_sq}--\eqref{eq:C_def}, $\E\norm{U}^2 = \sigma^2$, $\E\norm{\tilde V}^2 = \sigma_f^2$, and $\E\inner{U}{\tilde V} = C$. The $\mathrm{tr}\,\mathrm{Cov}(u_i)$ identity from the proof of Proposition~\ref{prop:mse} reads
\[
    \E\norm{(1{-}a{-}b)\,\tilde V + a\,U}^2 = (1{-}a{-}b)^2\,\sigma_f^2 + a^2\,\sigma^2 + 2a(1{-}a{-}b)\,C,
\]
so~\eqref{eq:mse} rewrites as
\begin{equation}
V_t(a,b) = (1-a)^2\,\Bt^2 \;+\; \frac{\E\norm{(1{-}a{-}b)\,\tilde V + a\,U}^2}{n} \;+\; \frac{b^2\,\sigma_f^2}{N}.
\label{eq:mse_sos}
\end{equation}
Each summand is a convex quadratic in $(a,b)$: the first and third are squared affine functions, and the second is $n^{-1}\norm{\phi(a,b)}_{L^2(P)}^2$ where $\phi(a,b) := (1{-}a{-}b)\,\tilde V + a\,U$ is an affine map from $\R^2$ into the Hilbert space $L^2(P;\R^d)$, hence the composition of a squared norm with an affine function. A sum of convex quadratics is a convex quadratic.
\end{proof}

\subsubsection{Proof of Proposition~\ref{prop:oracle}}
\label{app:prop_oracle}

\proporacle*

\begin{proof}
By Lemma~\ref{lem:convex}, $V_t(a,b)$ is a convex quadratic in $(a,b) \in \R^2$, so any stationary point is a global minimizer; Cauchy--Schwarz applied to $C = \E\inner{U}{\tilde V}$ also gives $C^2 \le \sigma^2 \sigma_f^2$ and hence $S \ge 0$.

Take $V_\infty(a, b) := \lim_{N \to \infty} V_t(a, b)$, which drops the $b^2\sigma_f^2/N$ term. For fixed $a$, unconstrained minimization in $b$ gives
\[
    \frac{\partial V_\infty}{\partial b} = \frac{-2(1{-}a{-}b)\sigma_f^2 - 2a\,C}{n} = 0
    \;\Longrightarrow\;
    b_\infty^\star(a) = (1-a) + a\,\frac{C}{\sigma_f^2},
\]
which matches the $b^\star$ component of~\eqref{eq:oracle_ab} at $a = a^\star$. At $b = b_\infty^\star(a)$, the residual $(1{-}a{-}b_\infty^\star) = -aC/\sigma_f^2$, so the numerator of the $1/n$ term in $V_\infty$ expands as
\[
    (1{-}a{-}b)^2\sigma_f^2 + 2a(1{-}a{-}b)\,C\,\Big|_{b = b_\infty^\star(a)}
    = a^2\frac{C^2}{\sigma_f^2} \,-\, 2a^2\frac{C^2}{\sigma_f^2}
    = -\,a^2\frac{C^2}{\sigma_f^2},
\]
and adding the $a^2\sigma^2$ contribution gives $a^2(\sigma^2 - C^2/\sigma_f^2) = a^2 S$. The reduced objective is therefore
\[
    V_\infty(a) = (1{-}a)^2\Bt^2 + a^2 S/n.
\]

Assume $\Bt^2 + S/n > 0$ (the degenerate case is handled below). Since $\Bt^2 \ge 0$ trivially and $S \ge 0$ as shown above, the univariate quadratic $V_\infty(a)$ has strictly positive leading coefficient, and its unique minimizer is
\[
    -2(1{-}a)\Bt^2 + 2a\,S/n = 0 \;\Longrightarrow\; a^\star = \frac{n\Bt^2}{n\Bt^2 + S} = \frac{\Bt^2}{\Bt^2 + S/n} \in [0, 1],
\]
with the $[0,1]$ range automatic from non-negativity of both coefficients. Back-substitution yields $b^\star = (1{-}a^\star) + a^\star C/\sigma_f^2$. The first-order condition rearranges to $(1{-}a^\star)\Bt^2 = a^\star \cdot S/n$, which gives
\[
    V^\star_\infty
    = (1{-}a^\star)^2\Bt^2 + (a^\star)^2 S/n
    = (1{-}a^\star)\bigl[a^\star S/n\bigr] + a^\star\bigl[a^\star S/n\bigr]
    = a^\star \cdot S/n.
\]
In the degenerate case $\Bt^2 = S = 0$, the bias vanishes for every $a$ and the variance $a^2 S/n$ vanishes along $b = b_\infty^\star(a)$, so $V_\infty \equiv 0$ along this curve and any such $(a,b)$ is optimal.
\end{proof}

\subsubsection{Finite-\texorpdfstring{$N$}{N} refinement}
\label{app:finite_N}

For completeness and to support the main-text claim that the asymptotic minimizer differs from the exact finite-$N$ minimizer by $O(1/N)$, we record the exact stationary point of $V_t(a, b)$.
Throughout this subsection we assume $\sigma_f^2(\theta_t) > 0$, so that the ratios $\rho := C/\sigma_f^2$ and $H := \sigma_f^2/(N{+}n)$ are well-defined; the degenerate case $\sigma_f^2 = 0$ implies $C = 0$ by Cauchy--Schwarz, $\gnf_t$ and $\gtNf_t$ are deterministic conditional on $\theta_t$, and the $b$ coordinate enters $\glam$ only through a deterministic shift.

For fixed $a$, the first-order condition $\partial V_t/\partial b = 0$ gives
\[
    \frac{-2(1{-}a{-}b)\sigma_f^2 - 2a\,C}{n} + \frac{2b\,\sigma_f^2}{N} = 0
    \;\Longrightarrow\;
    b_N^\star(a) = \frac{N}{N+n}\!\left((1-a) + a\,\rho\right).
\]
Substituting $b_N^\star(a)$ back into~\eqref{eq:mse} and collecting terms in $H$ reduces the bivariate MSE to a univariate quadratic in $a$,
\begin{equation}
    V_N(a) = (1-a)^2\,\Bt^2 \;+\; \frac{a^2\,S}{n} \;+\; H\,(1 - a + a\,\rho)^2,
    \qquad \rho := \frac{C}{\sigma_f^2}, \quad S := \sigma^2 - \frac{C^2}{\sigma_f^2},
    \label{eq:reduced_finite_N}
\end{equation}
with unique \emph{unconstrained} finite-$N$ minimizer
\begin{equation}
    a_N^\star = \frac{\Bt^2 + H(1-\rho)}{\Bt^2 + S/n + H(1-\rho)^2},
    \qquad b_N^\star(a_N^\star) = \frac{N}{N+n}\bigl((1 - a_N^\star) + a_N^\star\,\rho\bigr).
    \label{eq:exact_a}
\end{equation}
Unlike the asymptotic ratio in Proposition~\ref{prop:oracle}, $a_N^\star$ need not lie in $[0, 1]$ when $\rho \notin [0, 1]$; on a bounded feasible set $\mathcal K \subset \R^2$, the constrained minimizer is the unique minimizer of the convex quadratic $V_t$ over $\mathcal K$ and generally differs from $(a_N^\star, b_N^\star(a_N^\star))$.
Since $H = O(1/N)$, taking $N \to \infty$ in~\eqref{eq:exact_a} recovers $a^\star$ from Proposition~\ref{prop:oracle}, with discrepancy $|a_N^\star - a^\star| = O(1/N)$ for any fixed $\theta_t$.

\subsubsection{Proof of Corollary~\ref{cor:variance_floor}}
\label{app:cor_variance_floor}

\corvariancefloor*

\begin{proof}
Restricting~\eqref{eq:mse} to the unbiased slice $a_t \equiv 1$ collapses the bias term and gives
\[
    \Var_t(\glam)\big|_{a_t=1} \;=\; \frac{b_t^2\,\sigma_f^2 + \sigma^2 - 2\,b_t\,C}{n} \,+\, \frac{b_t^2\,\sigma_f^2}{N}
    \;=\; \frac{\sigma^2}{n} \,+\, b_t^2\,\sigma_f^2\!\left(\tfrac{1}{n} + \tfrac{1}{N}\right) \,-\, \frac{2\,b_t\,C}{n}.
\]
Setting the derivative in $b_t$ to zero gives the minimizer $b_t^\dagger = \frac{N}{N+n}\cdot\frac{C}{\sigma_f^2}$. Substituting back,
\[
    \inf_{b_t}\Var_t(\glam)\big|_{a_t=1}
    \;=\; \frac{\sigma^2}{n} \,-\, \frac{N}{N+n}\cdot\frac{C^2}{n\,\sigma_f^2}
    \;=\; \frac{S}{n} \,+\, \frac{n}{N+n}\cdot\frac{C^2}{n\sigma_f^2}
    \;=\; \frac{S}{n} \,+\, \frac{C^2}{(N+n)\,\sigma_f^2},
\]
where the second equality uses $\sigma^2 = S + C^2/\sigma_f^2$; this is the display of~\eqref{eq:variance_floor}, and in particular the infimum is bounded below by $S/n$. The strict-improvement claim follows by comparing with~\eqref{eq:oracle_ab}: the optimal ABC MSE $V_\infty^\star = a^\star S/n$ satisfies $V_\infty^\star \le S/n$ with strict inequality whenever $a^\star < 1$, equivalently whenever $S > 0$ and $\Bt^2 < \infty$; the gap is significant whenever $\Bt^2 < S/n$ (good-teacher regime by Corollary~\ref{cor:regimes}).
\end{proof}

\subsection{Online controller and end-to-end rate (\Secref{sec:plugin})}
\label{app:plugin_proofs}

The full pseudocode for ABC-SSL with the joint corrected-norm controller appears as Algorithm~\ref{alg:abcssl} at the end of this section.

\subsubsection{Proof of Theorem~\ref{thm:main}}
\label{app:thm_main}

The main statement gives the constant-step-size rate~\eqref{eq:main_bound}; we additionally prove a variance-tuned analogue that yields the same bound with $\bar V$ replaced by an a-posteriori variance term.

\thmmain*

\begin{proof}
We follow the standard smooth descent argument for biased SGD~\citep{Ajalloeian2020OnTC,Ghadimi2013StochasticFA}, adapted to the ABC parameterization via Lemma~\ref{lem:bias}. Write $\mu_t := \nabla\Ls(\theta_t)$ and $\bar{g}_t := \E_t[\glam]$, and note that $\bar{g}_t = \mu_t + (1-a_t)B(\theta_t)$ by Lemma~\ref{lem:bias}; since $\theta_t$ and $(a_t,b_t)$ are $\mathcal{F}_{t-1}$-measurable, both $\mu_t$ and $\bar{g}_t$ are deterministic given $\mathcal{F}_{t-1}$ and pass through $\E_t$ as constants.

By $\beta$-smoothness of $\Ls$ (Assumption~\ref{asm:smooth}) applied to the update $\theta_{t+1} = \theta_t - \eta\glam$,
\[
    \Ls(\theta_{t+1}) \;\le\; \Ls(\theta_t) \,-\, \eta\inner{\mu_t}{\glam} \,+\, \tfrac{\beta\eta^2}{2}\norm{\glam}^2.
\]
Taking $\E_t$ and applying the vector decomposition $\E_t\!\norm{\glam}^2 = \norm{\bar{g}_t}^2 + \Var_t(\glam)$ gives
\[
    \E_t\bigl[\Ls(\theta_{t+1})\bigr] \;\le\; \Ls(\theta_t) \,-\, \eta\inner{\mu_t}{\bar{g}_t} \,+\, \tfrac{\beta\eta^2}{2}\bigl(\norm{\bar{g}_t}^2 + \Var_t(\glam)\bigr).
\]
The polarization identity $\inner{\mu_t}{\bar{g}_t} = \tfrac{1}{2}(\norm{\mu_t}^2 + \norm{\bar{g}_t}^2 - \norm{\bar{g}_t - \mu_t}^2)$, combined with $\bar{g}_t - \mu_t = (1-a_t)B(\theta_t)$ and hence $\norm{\bar{g}_t - \mu_t}^2 = (1-a_t)^2\Bt^2$, rewrites the right-hand side as
\[
    \Ls(\theta_t) \,-\, \tfrac{\eta}{2}\norm{\mu_t}^2 \,-\, \tfrac{\eta}{2}(1-\eta\beta)\norm{\bar{g}_t}^2 \,+\, \tfrac{\eta}{2}(1-a_t)^2\Bt^2 \,+\, \tfrac{\beta\eta^2}{2}\Var_t(\glam).
\]
The step-size condition $\eta \le 1/\beta$ ensures $-\tfrac{\eta}{2}(1-\eta\beta)\norm{\bar{g}_t}^2 \le 0$, which we drop; rearranging yields the per-step bound
\[
    \norm{\mu_t}^2 \;\le\; \tfrac{2}{\eta}\bigl(\Ls(\theta_t) - \E_t[\Ls(\theta_{t+1})]\bigr) \,+\, (1-a_t)^2\Bt^2 \,+\, \eta\beta\,\Var_t(\glam).
\]
Taking total expectations, summing over $t = 1, \ldots, T$, and telescoping, the descent terms collapse to $\Ls(\theta_1) - \E[\Ls(\theta_{T+1})] \le \Delta_0$ (Assumption~\ref{asm:bounded}, using $\E[\Ls(\theta_{T+1})] \ge \inf_\theta\Ls(\theta)$). Dividing by $T$, defining $\bar B^2 := \frac{1}{T}\sum_t \E[(1-a_t)^2\Bt^2]$ and $\bar V^{\mathrm{var}} := \frac{1}{T}\sum_t \E[\Var_t(\glam)]$, gives
\begin{equation}
    \frac{1}{T}\sum_{t=1}^T \E\norm{\nabla\Ls(\theta_t)}^2 \;\le\; \frac{2\Delta_0}{\eta T} \,+\, \bar B^2 \,+\, \eta\beta\,\bar V^{\mathrm{var}}.
    \label{eq:main_bound_general}
\end{equation}
At $\eta = 1/\beta$, the right-hand side becomes $2\beta\Delta_0/T + \bar B^2 + \bar V^{\mathrm{var}} = 2\beta\Delta_0/T + \bar V$, recovering~\eqref{eq:main_bound}.

\paragraph{Variance-tuned step size.}
Let $\mathcal V > 0$ be any deterministic upper bound on $\bar V^{\mathrm{var}}$ and choose $\eta_{\mathcal V} := \min\bigl(1/\beta, \sqrt{2\Delta_0/(\beta\mathcal V T)}\bigr)$. Applying~\eqref{eq:main_bound_general} at $\eta = \eta_{\mathcal V}$, the inequality $1/\min(A,B) \le 1/A + 1/B$ for positive $A, B$ gives
\[
    \frac{2\Delta_0}{\eta_{\mathcal V} T} \;\le\; \frac{2\beta\Delta_0}{T} + \frac{2\Delta_0}{T}\sqrt{\frac{\beta\mathcal V T}{2\Delta_0}} = \frac{2\beta\Delta_0}{T} + \sqrt{\frac{2\beta\Delta_0\,\mathcal V}{T}},
\]
while $\eta_{\mathcal V}\beta\bar V^{\mathrm{var}} \le \sqrt{2\Delta_0/(\beta\mathcal V T)}\cdot\beta\bar V^{\mathrm{var}} \le \sqrt{2\beta\Delta_0\mathcal V/T}$ using $\bar V^{\mathrm{var}} \le \mathcal V$. Combining,
\begin{equation}
    \frac{1}{T}\sum_{t=1}^T \E\norm{\nabla\Ls(\theta_t)}^2 \;\le\; \bar B^2 \,+\, \frac{2\beta\Delta_0}{T} \,+\, 2\sqrt{\frac{2\beta\Delta_0\,\mathcal V}{T}}.
    \label{eq:main_bound_vtuned}
\end{equation}
The same bound holds with $\mathcal V$ replaced by $\bar V^{\mathrm{var}}$ a posteriori.
\end{proof}

\subsubsection{Proof of Lemma~\ref{lem:faithful}}
\label{app:lem_faithful}

\lemfaithful*

\begin{proof}
Substituting~\eqref{eq:norm_decomp} into~\eqref{eq:ell_def} and taking conditional expectation,
\[
    \E_t[\ell_t(a,b)]
    = \norm{\nabla\Ls(\theta_t)}^2 + 2(1-a)H_t + (1-a)^2\Bt^2 + \Var_t(\glam(a,b)) - 2(1-a)H_t,
\]
in which the $\pm 2(1-a)H_t$ terms cancel. The remainder is $\norm{\nabla\Ls(\theta_t)}^2 + (1-a)^2\Bt^2 + \Var_t(\glam(a,b)) = \norm{\nabla\Ls(\theta_t)}^2 + V_t(a,b)$ by the bias--variance decomposition of Proposition~\ref{prop:mse}.
\end{proof}

\subsubsection{Split-batch unbiased estimator of $H_t$}
\label{app:lem_split_H}

\begin{lemma}[Split-batch unbiased estimator of $H_t$]
\label{lem:split_H}
Partition the labeled batch $\mathcal B_t^{\mathrm{lab}}$ into two halves $A, B$ of size $n/2$ each (assume $n$ even; otherwise drop one sample), and let
\[
    g_A := \tfrac{2}{n}\sum_{i \in A}\nabla\loss(\theta_t; x_i, y_i),
    \quad
    g_A^f := \tfrac{2}{n}\sum_{i \in A}\nabla\loss(\theta_t; x_i, f(x_i)),
    \quad
    d_A := g_A - g_A^f,
\]
and analogously $g_B, g_B^f, d_B$. Then
\begin{equation}
    \widehat{H}_t := -\tfrac{1}{2}\bigl(\inner{g_A}{d_B} + \inner{g_B}{d_A}\bigr)
    \label{eq:H_hat}
\end{equation}
satisfies $\E_t[\widehat H_t] = H_t$.
\end{lemma}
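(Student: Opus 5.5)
The plan is to exploit conditional independence of the two halves. Conditional on $\mathcal F_{t-1}$, $\theta_t$ is fixed. The labeled samples in $\mathcal B_t^{\mathrm{lab}}$ are i.i.d.\ from $P$, and the partition into $A$ and $B$ is a deterministic function of indices, so it does not depend on the sample values. Hence $\{(x_i,y_i)\}_{i\in A}$ and $\{(x_i,y_i)\}_{i\in B}$ are conditionally independent, each distributed as $P^{n/2}$. Consequently $(g_A,d_A)$ is a measurable function of the $A$-half alone, $(g_B,d_B)$ of the $B$-half alone, and these two pairs are conditionally independent given $\mathcal F_{t-1}$.

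Next I compute the conditional means of the half-batch aggregates, exactly as in the proof of Lemma~\ref{lem:bias}. Since each summand has conditional mean $\nabla\Ls(\theta_t)$ or $\nabla\Lf(\theta_t)$,
\[
\E_t[g_A]=\E_t[g_B]=\nabla\Ls(\theta_t),\qquad \E_t[g_A^f]=\E_t[g_B^f]=\nabla\Lf(\theta_t).
\]
It follows that $\E_t[d_A]=\E_t[d_B]=\nabla\Ls(\theta_t)-\nabla\Lf(\theta_t)=-B(\theta_t)$.

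The key step uses independence to factor each cross inner product. Writing $\inner{g_A}{d_B}=\sum_k (g_A)_k(d_B)_k$ coordinatewise, independence gives $\E_t[(g_A)_k(d_B)_k]=\E_t[(g_A)_k]\,\E_t[(d_B)_k]$. Integrability follows from finite second moments, which are implicit in $\sigma^2,\sigma_f^2<\infty$, together with Cauchy--Schwarz. Therefore
\[
\E_t\inner{g_A}{d_B}=\inner{\nabla\Ls(\theta_t)}{-B(\theta_t)}=-H_t,
\]
and by symmetry $\E_t\inner{g_B}{d_A}=-H_t$. Plugging both into \eqref{eq:H_hat} yields
\[
\E_t[\widehat H_t]=-\tfrac12(-H_t-H_t)=H_t.
\]

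The only delicate point is the independence claim. The quantity $\inner{g_A}{d_A}$ on the same half would carry a covariance term $\E\inner{U}{U-\tilde V}$, which is exactly why the halves must be crossed. I also need to confirm that the split introduces no data-dependent selection, which holds because the partition depends only on indices. The odd-$n$ case is handled by dropping one sample, which leaves the distributional argument unchanged.
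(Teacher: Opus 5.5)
Your proposal is correct and follows the paper's proof essentially step for step. Both use the independence of the two index-determined halves, the conditional means $\E_t[g_A]=\nabla\Ls(\theta_t)$ and $\E_t[d_B]=-B(\theta_t)$, factorization of each crossed inner product, and symmetric averaging; your remarks on integrability and on why the same-half product $\inner{g_A}{d_A}$ would pick up an extra covariance term are correct refinements that the paper leaves implicit.
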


\begin{proof}
Since the partition is deterministic in the indices, exchangeability of the i.i.d.\ labeled samples makes halves $A$ and $B$ independent draws from $P^{n/2}$, so $\E_t[g_A] = \E_t[g_B] = \nabla\Ls(\theta_t)$ and $\E_t[d_A] = \E_t[d_B] = \nabla\Ls(\theta_t) - \nabla\Lf(\theta_t) = -B(\theta_t)$. By independence,
\[
    \E_t\inner{g_A}{d_B} = \inner{\E_t[g_A]}{\E_t[d_B]} = -\inner{\nabla\Ls(\theta_t)}{B(\theta_t)} = -H_t,
\]
and the symmetric pair gives the same value. Averaging and negating yields $\E_t[\widehat H_t] = H_t$.
\end{proof}

\subsubsection{Proof of Proposition~\ref{prop:joint_regret}}
\label{app:prop_joint_regret}

The main statement gives the projected-AdaGrad $O(\sqrt T)$ regret.

\propjointregret*

\begin{proof}
$\ell_t$ is convex on $\mathcal K$ (its Hessian $2\mA_t$ is PSD), and $\norm{\nabla\ell_t(\vx)} = \norm{2\mA_t\vx + 2\vr_t}$ is uniformly bounded by some $G_\ell$ on $\mathcal K$ under the boundedness assumption. Projected OGD with step size $D_{\mathcal K}/(G_\ell\sqrt T)$ attains the standard diameter-times-gradient regret bound, and projected AdaGrad attains the same rate adaptively without knowing $T$ in advance~\citep{Ward2018AdaGradSS}, yielding~\eqref{eq:joint_regret_V} via Lemma~\ref{lem:faithful}: taking $\E$ of the surrogate-regret inequality and using $\E_t[\ell_t(a,b)] = V_t(a,b) + \norm{\nabla\Ls(\theta_t)}^2$ on both sides, the $\norm{\nabla\Ls(\theta_t)}^2$ terms cancel in the difference.
We note that a faster $O(\log T)$ rate via Online Newton Step is \emph{not} available in general: $\nabla^2\ell_t = 2\mA_t$ is the Gram matrix of $(\dt, c_t)$ and is singular whenever $\dt$ and $c_t$ are linearly dependent, in which case $\ell_t$ is affine (with slope driven by $\widehat H_t$) along the null direction and is exp-concave for no $\mu > 0$; recovering $O(\log T)$ would require an additional lower bound $\lambda_{\min}(\mA_t) \ge \alpha > 0$.
\end{proof}

\subsubsection{Proof of Corollary~\ref{cor:end_to_end}}
\label{app:cor_end_to_end}

\corendtoend*

\begin{proof}
By Theorem~\ref{thm:main} at $\eta = 1/\beta$, $\frac{1}{T}\sum_t \E\norm{\nabla\Ls(\theta_t)}^2 \le 2\beta\Delta_0/T + \bar V$, where $\bar V = \frac{1}{T}\sum_t \E[V_t(a_t, b_t)]$ is evaluated along the realized $(a_t, b_t)$ path. By Proposition~\ref{prop:joint_regret} divided by $T$, $\bar V \le \bar V_T^\dagger + R(T)/T$ with $R(T) = O(\sqrt T)$, giving the first inequality of~\eqref{eq:end_to_end}.

The second inequality is immediate: every pair $(1, b)$ with $b \in [0, b_{\max}]$ lies in $\mathcal K$, so minimizing the time-averaged MSE over all of $\mathcal K$ can only do better, i.e., $\bar V_T^\dagger \le \min_{b \in [0, b_{\max}]} \tfrac1T\sum_t \E[V_t(1, b)] = \bar V_T^{\textrm{PP-SSL}}$.

It remains to prove strictness under~\eqref{eq:end_to_end_strict}. Write $J(a, b) := \tfrac1T\sum_t \E[V_t(a, b)]$ for fixed (deterministic) $(a, b)$. By the sum-of-squares form~\eqref{eq:mse_sos},
\[
    J(a, b) \;=\; (1-a)^2\,\bar M \;+\; \frac1T\sum_t \frac{\E\norm{\phi_t(a, b)}^2}{n} \;+\; b^2\,\frac{\bar\sigma_f^2}{N},
    \qquad \phi_t(a, b) := (1{-}a{-}b)\,\tilde V_t + a\,U_t,
\]
where $U_t, \tilde V_t$ are the centered per-sample labeled and pseudolabeled gradients at $\theta_t$ (as in Lemma~\ref{lem:convex}), $\bar M := \tfrac1T\sum_t \E[\Bt^2]$, and $\bar\sigma_f^2 := \tfrac1T\sum_t \E[\sigma_{f,t}^2]$. Let $b^c \in [0, b_{\max}]$ attain $\bar V_T^{\textrm{PP-SSL}} = J(1, b^c)$ (the minimum exists: $J(1, \cdot)$ is a continuous quadratic on a compact interval). Consider the segment
\[
    (a_\varepsilon, b_\varepsilon) \;:=\; (1-\varepsilon)\,(1, b^c) + \varepsilon\,(0, 1) \;\in\; \mathcal K, \qquad \varepsilon \in [0, 1],
\]
feasible since $\mathcal K$ is convex and contains both endpoints ($b_{\max} \ge 1$). The map $(a, b) \mapsto \phi_t(a, b)$ is affine and $\phi_t(0, 1) = 0$, so $\phi_t(a_\varepsilon, b_\varepsilon) = (1-\varepsilon)\,\phi_t(1, b^c)$. Writing $P := \tfrac1{Tn}\sum_t \E\norm{\phi_t(1, b^c)}^2$ and $R := \bar\sigma_f^2/N$,
\[
    J(a_\varepsilon, b_\varepsilon)
    \;=\; \varepsilon^2\,\bar M \;+\; (1-\varepsilon)^2\,P \;+\; \bigl(b^c + \varepsilon(1 - b^c)\bigr)^2 R,
\]
and hence
\[
    J(a_\varepsilon, b_\varepsilon) - \bar V_T^{\textrm{PP-SSL}}
    \;=\; -2\varepsilon\bigl[P - b^c(1 - b^c)\,R\bigr] \;+\; \varepsilon^2\bigl[\bar M + P + (1 - b^c)^2 R\bigr].
\]
Per round, $\phi_t(1, b^c) = U_t - b^c \tilde V_t$ and $\min_{\rho \in \R} \E_t\norm{U_t - \rho \tilde V_t}^2 = \sigma_t^2 - C_t^2/\sigma_{f,t}^2 = S_t$ (using $\sigma_{f,t}^2 > 0$), so $P \ge \tfrac1{Tn}\sum_t \E[S_t]$; and $b^c(1 - b^c) \le 1/4$. The second condition of~\eqref{eq:end_to_end_strict} therefore gives $P - b^c(1-b^c)R \ge \tfrac1{Tn}\sum_t\E[S_t] - \bar\sigma_f^2/(4N) > 0$, while the first makes the $\varepsilon^2$ coefficient finite. Hence $J(a_\varepsilon, b_\varepsilon) < \bar V_T^{\textrm{PP-SSL}}$ for all sufficiently small $\varepsilon > 0$, and since $(a_\varepsilon, b_\varepsilon) \in \mathcal K$ we conclude $\bar V_T^\dagger < \bar V_T^{\textrm{PP-SSL}}$.
\end{proof}

\subsubsection{Algorithm pseudocode}
\label{app:alg}

\begin{algorithm}[h]
\caption{ABC-SSL: Adaptive Bias Control for Semi-Supervised Learning (with joint corrected-norm controller).}
\label{alg:abcssl}
\begin{algorithmic}[1]
\REQUIRE Teacher $f$; SGD step size $\eta$; OCO step size for $(a, b)$; feasible set $\mathcal K = [0, 1] \times [0, b_{\max}]$; iterations $T$
\STATE Initialize $\theta_1 \in \R^d$, $(a_1, b_1) \leftarrow (1, 0)$, OCO state \hfill\COMMENT{\textcolor{magenta}{labeled-only first step}}
\FOR{$t = 1, \ldots, T$}
    \STATE Draw $\mathcal{B}_t^{\mathrm{lab}} = \{(x_t^i, y_t^i)\}_{i=1}^n$ split into halves $A, B$ of size $n/2$; draw $\mathcal{B}_t^{\mathrm{unl}} = \{x_t^i\}_{i=n+1}^{n+N}$
    \STATE Compute aggregate component gradients $\glab_t, \gnf_t, \gtNf_t$ and half-batch gradients $g_A, g_A^f, g_B, g_B^f$
    \STATE $\dt \leftarrow \glab_t - \gnf_t$,\quad $c_t \leftarrow \gtNf_t - \gnf_t$,\quad $d_A \leftarrow g_A - g_A^f$,\quad $d_B \leftarrow g_B - g_B^f$
    \STATE $\glam \leftarrow \gnf_t + a_t\,\dt + b_t\,c_t$ \hfill\COMMENT{\textcolor{magenta}{ABC gradient~\eqref{eq:abc_grad}}}
    \STATE $\theta_{t+1} \leftarrow \theta_t - \eta\,\glam$ \hfill\COMMENT{\textcolor{magenta}{SGD update, Theorem~\ref{thm:main}}}
    \STATE \emph{Scalar primitives:}\; compute $\norm{\dt}^2, \norm{c_t}^2, \inner{\dt}{c_t}, \inner{\gnf_t}{\dt}, \inner{\gnf_t}{c_t}, \norm{\gnf_t}^2$
    \STATE $\widehat H_t \leftarrow -\tfrac{1}{2}\bigl(\inner{g_A}{d_B} + \inner{g_B}{d_A}\bigr)$ \hfill\eqref{eq:H_hat}
    \STATE Form quadratic $\ell_t(a, b) = a^2\norm{\dt}^2 + b^2\norm{c_t}^2 + 2ab\inner{\dt}{c_t} + 2a\inner{\gnf_t}{\dt} + 2b\inner{\gnf_t}{c_t} + \norm{\gnf_t}^2 - 2(1{-}a)\widehat H_t$
    \STATE Update OCO state on $\ell_t$ (projected AdaGrad); set $(a_{t+1}, b_{t+1}) \leftarrow$ projected iterate on $\mathcal K$ \hfill (Proposition~\ref{prop:joint_regret})
\ENDFOR
\RETURN $\theta_\tau$ for uniform random $\tau \in \{1, \ldots, T\}$
\end{algorithmic}
\end{algorithm}

The half-batch construction in Lines~3--5 produces the four aggregate gradients $g_A, g_A^f, g_B, g_B^f$ needed by~\eqref{eq:H_hat}; the labeled forward pass is shared between the labeled batch gradient $\glab_t$ and the labeled half-batches, so each step requires three batch backwards ($\glab_t, \gnf_t, \gtNf_t$) plus two extra labeled half-batch backwards. All scalar primitives are computable from the aggregate component gradients in a single reduction pass after the SGD step; the projected OCO step is in $\R^2$ and is negligible relative to the gradient computations.

\begin{remark}[EMA smoothing of $\widehat H_t$ in experiments]
\label{rem:ema_heuristic}
In the experiments of \Secref{sec:experiments} we additionally smooth the split-batch estimate with an exponential moving average, $\widehat H_t^{\mathrm{ema}} = (1 - \alpha_H)\,\widehat H_{t-1}^{\mathrm{ema}} + \alpha_H\,\widehat H_t$ (rate $\alpha_H = 0.05$, zero-initialized), and feed $\widehat H_t^{\mathrm{ema}}$ rather than $\widehat H_t$ into $\ell_t$, as a variance-reduction heuristic. The EMA forfeits the conditional unbiasedness required by Lemma~\ref{lem:faithful}: $\E_t[\widehat H_t^{\mathrm{ema}}] = \alpha_H H_t + (1 - \alpha_H)\widehat H_{t-1}^{\mathrm{ema}}$, so the expected surrogate acquires the $\mathcal F_{t-1}$-measurable, $a$-affine perturbation $2(1{-}a)\,\delta_t$ with $\delta_t := (1 - \alpha_H)\bigl(H_t - \widehat H_{t-1}^{\mathrm{ema}}\bigr)$, which shifts the per-round minimizer, and the regret bound of Proposition~\ref{prop:joint_regret} then holds only up to an additional drift term of order $\tfrac1T\sum_t \E|\delta_t|$. All theoretical statements (in particular Corollary~\ref{cor:end_to_end}) refer to Algorithm~\ref{alg:abcssl} as written, i.e., with the raw conditionally unbiased $\widehat H_t$.
\end{remark}

\section{Synthetic preference experiment: supplementary details}
\label{app:synth_details}

\subsection{Data generation}

The ground-truth reward function is $r^\star(x) = {w^\star}^\top x$ with $w^\star \sim \mathcal{N}(0, I_m)$ drawn once per trial and held fixed, where $m = 20$.
Features $x_1, x_2 \sim \mathcal{N}(0, I_m)$ are drawn independently for each comparison pair.
The ground-truth preference label is
\begin{equation}
    y = \mathbf{1}\!\bigl\{r^\star(x_1) - r^\star(x_2) + \epsilon > 0\bigr\}, \qquad \epsilon \sim \mathcal{N}(0, \sigma_\epsilon^2),
    \label{eq:synth_pref}
\end{equation}
with $\sigma_\epsilon = 0.5$ modeling annotator noise.
This noise level makes roughly $30\%$ of pairs ambiguous (i.e., the noisy label can differ from the noiseless one), providing a realistic level of annotator disagreement.

The biased teacher uses reward $r^f(x) = (w^\star + \mu v)^\top x$ where $v \sim \mathcal{N}(0, I_m)$ is drawn once per trial and normalized to unit norm, and $\mu \ge 0$ controls bias intensity.
Teacher pseudo labels are generated via the same Bradley--Terry mechanism but with $r^f$ replacing $r^\star$ and no additional noise (the teacher is deterministic given its weights).
As $\mu$ increases, the teacher's reward direction deviates further from $w^\star$, increasing the fraction of pseudo labels that disagree with the ground truth.

\subsection{Training details}

We train a linear reward model $r_\phi(x) = \phi^\top x$ by minimizing the Bradley--Terry negative log-likelihood~\eqref{eq:rm_loss} on pairwise feature differences $x_1 - x_2$.
All methods use Adam with learning rate $10^{-3}$, $(\beta_1, \beta_2) = (0.9, 0.999)$, batch size $32$, and run for $T = 2{,}000$ iterations.

The headline runs reported in \Secref{sec:exp_synthetic} use the joint corrected-norm controller of \Secref{sec:plugin}. In this small-scale Bradley--Terry setting, per-example gradients are essentially free, so we additionally report a finite-sample plug-in estimator as a sanity-check companion: at each step we compute the per-example second-moment statistics directly and plug them into the closed-form oracle of Proposition~\ref{prop:oracle}.
The two controllers track each other closely throughout training (Figure~\ref{fig:synth_traj}), confirming that the corrected-norm surrogate $\ell_t$ tracks the per-step oracle without requiring per-example gradient covariance estimates.

For the plug-in sanity check, $(a_t, b_t)$ are computed at each step from per-sample residual statistics using the closed-form expressions derived in \Secref{sec:plugin}.
In this Bradley--Terry setting the per-sample gradient factors as $\nabla\loss(\phi; z_i, y_i) = z_i\bigl(\sigmoid(z_i^\top \phi) - y_i\bigr)$, so $\norm{\nabla\loss(\phi; z_i, y_i)}^2 = \norm{z_i}^2 r_i^2$ where $r_i := \sigmoid(z_i^\top \phi) - y_i$ is the per-sample residual available from the forward pass. Writing $r_i^f := \sigmoid(z_i^\top \phi) - f(x_i)$ for the pseudo-label residual, the finite-sample-corrected estimators are
\begin{equation}
    \begin{aligned}
        \hat\sigma^2 &= \frac{n}{n-1}\!\left(\tfrac{1}{n}\textstyle\sum_i \norm{z_i}^2 r_i^2 - \norm{\glab_t}^2\right), \quad
        \hat\sigma_f^2 = \frac{n}{n-1}\!\left(\tfrac{1}{n}\textstyle\sum_i \norm{z_i}^2 (r_i^f)^2 - \norm{\gnf_t}^2\right), \\
        \hat C &= \frac{n}{n-1}\!\left(\tfrac{1}{n}\textstyle\sum_i \norm{z_i}^2 r_i\, r_i^f - \inner{\glab_t}{\gnf_t}\right),
    \end{aligned}
    \label{eq:stat_estimates}
\end{equation}
which are unbiased under i.i.d.\ sampling and computable as $O(nd)$ scalar operations over the forward-pass residuals.
The estimates $\hat\sigma^2$, $\hat\sigma_f^2$, $\hat{C}$ are smoothed with an exponential moving average (rate $\alpha = 0.02$) and fed into the oracle formula (Proposition~\ref{prop:oracle}).
At the first step the estimator is initialized to $(a_1, b_1) = (1, 0)$, corresponding to labeled-only training, and subsequent updates are predictable with respect to the filtration $\mathcal{F}_{t-1}$ as required by Theorem~\ref{thm:main}.

For PP-SSL, $\lambda_t$ is tuned online by projected gradient descent on $\norm{\hat{g}_t}^2$ with step size $0.01$ and clipping to $[0,1]$, following the procedure described in \citet{Shoham2025PredictionPoweredSL}.

All methods share the same practical minibatch training loop and optimizer; they differ only in the gradient estimator.

\subsection{Metrics}

We report three evaluation metrics, all computed on a held-out test set of $10{,}000$ pairs with ground-truth labels:
\begin{itemize}
    \item \textbf{Preference accuracy}: the fraction of pairs whose predicted preference direction matches the ground truth. This is the primary metric reported in Figure~\ref{fig:synthetic}.
    \item \textbf{Reward MSE}: $\norm{\phi - w^\star}^2$, measuring how closely the learned weights recover the true reward direction.
    \item \textbf{Bradley--Terry log-loss}: the negative log-likelihood~\eqref{eq:rm_loss} evaluated on the test set, measuring calibration of the predicted preference probabilities.
\end{itemize}
The main text reports preference accuracy; reward MSE and log-loss exhibit consistent trends and are available upon request.

\subsection{Subgroup analysis}

To examine performance on easy versus hard preference comparisons, we split the test set by the magnitude of the true reward difference $\Delta_i = |r^\star(x_{1,i}) - r^\star(x_{2,i})|$.
Pairs with $\Delta_i$ above the test-set median form Group~A (easy: clear preference signal), and pairs below form Group~B (hard: ambiguous comparisons where annotator noise is most impactful).
This split is performed on the test set only and does not affect training.

The subgroup analysis reveals that ABC-Align's advantage is concentrated on hard pairs (Figure~\ref{fig:synthetic}c).
On easy pairs, all methods that use pseudo labels perform similarly, since the teacher's ranking is usually correct regardless of bias.
On hard pairs, however, the labeled-only and unbiased estimators (DR, PP-SSL) are variance-limited because $n = 50$ provides little signal for near-tied comparisons.
ABC-Align's ability to lean more heavily on pseudo labels while applying a controlled correction yields the largest gains in this regime.

\subsection{Experimental conditions}

We use $n = 50$ labeled pairs with $N = 5{,}000$ unlabeled pairs ($100\times$ ratio).
For the main synthetic curve in Figure~\ref{fig:synthetic}, we sweep $\mu \in \{0.0, 0.2, \ldots, 1.0\}$ and average over $25$ trials.
For the teaser heatmap (Figure~\ref{fig:teaser}) and the trajectory and MSE ablations (Figures~\ref{fig:synth_traj} and~\ref{fig:synth_mse}), we use a coarser $\mu$ grid and average over $10$ trials.
Across all settings, we vary the random seed for $w^\star$, $v$, and the sampled comparison pairs.
Error bars in Figure~\ref{fig:synthetic} show $\pm 1$ standard error across trials.

\section{Tabular versions of the main-text figures}
\label{app:tables}

For readers who prefer exact numbers, Table~\ref{tab:rm_ood} gives the tabular version of Figure~\ref{fig:rm_ood}.

\begin{table}[t]
  \centering
  \caption{Out-of-domain reward-model benchmarks (label fraction $0.05$, $\rho = 0.10$). Best per column in \textbf{bold}; $\Delta$ is ABC-Align minus the best baseline.}
  \label{tab:rm_ood}
  \small
  \begin{tabular}{lcccc}
    \toprule
    & \multicolumn{2}{c}{RewardBench} & \multicolumn{2}{c}{PPE} \\
    \cmidrule(lr){2-3}\cmidrule(lr){4-5}
    Method & overall & human pref. & human pref. & math best-of-$k$ \\
    \midrule
    Labeled-only & 0.611 & 0.600 & 0.569 & 0.544 \\
    Naive combined & 0.648 & 0.621 & 0.598 & 0.573 \\
    PP-SSL & 0.670 & 0.637 & 0.623 & 0.589 \\
    ABC-Align (ours) & \textbf{0.692} & \textbf{0.676} & \textbf{0.650} & \textbf{0.613} \\
    \midrule
    $\Delta$ vs.\ best baseline & $+0.022$ & $+0.039$ & $+0.027$ & $+0.024$ \\
    \bottomrule
  \end{tabular}
\end{table}

\section{LLM experiment details: RM, DPO, and GRPO}
\label{app:llm_details}

This appendix collects the training configurations for the three LLM-alignment experiments of \Secref{sec:exp_rm}--\Secref{sec:exp_grpo}. Across all three, the methods share architecture, tokenizer, optimizer, and update budget within each experiment; only the gradient estimator differs.

\subsection{Reward model training}
\label{app:rm_details}

Sequences are truncated to $4096$ tokens, training runs for two epochs in bf16, and the labeled and unlabeled batch sizes are $16$ and $128$, respectively. All methods use AdaGrad-Norm~\citep{Ward2018AdaGradSS} with the same base step size.
Although the codebase also supports a full-model ABC-Align reward-training path, the main comparison keeps the Llama-3.2-3B-Instruct backbone frozen and trains only the scalar score head, so that differences are attributable to the gradient estimator rather than to full-model optimization effects.
Evaluation combines in-domain held-out pairwise accuracy on the Skywork validation split with the out-of-domain RewardBench~\citep{Lambert2024RewardBenchER,Malik2025RewardBench2A} and PPE~\citep{Frick2024HowTE} benchmarks, emphasizing human-preference accuracy and the math best-of-$k$ proxy.

\subsection{DPO-based policy alignment}
\label{app:dpo_details}

All methods share the same architecture, tokenizer, and optimizer: AdamW~\citep{Kingma2014AdamAM} with learning rate $10^{-6}$, $(\beta_1, \beta_2) = (0.9, 0.999)$, and no weight decay.
Sequences are truncated to $1{,}024$ tokens.
Training runs for one epoch in bf16 with gradient checkpointing enabled.
The labeled batch size is $8$ and the unlabeled batch size is $32$.
For ABC-Align, $(a_t, b_t)$ is updated by the joint corrected-norm controller of \Secref{sec:plugin}, with the symmetric split-batch construction of~\eqref{eq:H_hat} replaced by the one-sided streaming variant $\widehat H_t^{\text{stream}} := -\inner{g_A}{d_B}$ for full-model memory feasibility; this reduces peak extra memory to a single full-model gradient buffer while preserving conditional unbiasedness ($\E_t[\widehat H_t^{\text{stream}}] = H_t$ by independence of $A, B$), so Proposition~\ref{prop:joint_regret} applies verbatim to the streaming construction.
In the runs, $\widehat H_t^{\text{stream}}$ is additionally EMA-smoothed (rate $\alpha_H = 0.05$) before being fed into $\ell_t$---a variance-reduction heuristic that forfeits exact conditional unbiasedness and adds a drift term to the regret bound (Remark~\ref{rem:ema_heuristic})---and $(a_{t+1}, b_{t+1})$ is produced by projected AdaGrad on $\ell_t$ over $\mathcal K = [0, 1] \times [0, b_{\max}]$.
For PP-SSL, $\lambda_t$ is adapted online via projected SGD on $\|\hat g_t\|^2$.

\paragraph{Adaptation to full-model DPO.}
Unlike the score-head reward model, DPO fine-tunes millions of policy parameters; the controller is designed to add at most one full-model gradient buffer on top of the standard DPO step.
The three batch gradients $\glab_t, \gnf_t, \gtNf_t$ are produced by the same labeled and unlabeled DPO forwards as a standard run, reusing the shared activations via the scalar-margin factorization of Lemma~\ref{lem:scalar_margin}.
All scalar primitives feeding $\ell_t$---$\|\dt\|^2$, $\|c_t\|^2$, $\inner{\dt}{c_t}$, $\inner{\gnf_t}{\dt}$, $\inner{\gnf_t}{c_t}$, and the streaming $\widehat H_t^{\text{stream}}$---are aggregate dot products on these component gradients, computed via a single scalar all-reduce per primitive under FSDP/ZeRO; no per-example gradient vectors are materialized.
The dominant cost ratio versus labeled-only SGD is $(N + 2n)/(N + n) \approx 1$ when $N \gg n$.

\subsection{On-policy GRPO}
\label{app:grpo_details}

We fine-tune Llama-3.2-3B with GRPO on grade-school math prompts with verifiable answers, sampling a rollout group of $G = 8$ completions per prompt.
Labeled prompts receive a verifiable correctness reward (the trusted, human-anchored source); unlabeled prompts are scored only by a teacher reward model.
Source-specific group-relative advantages are formed via~\eqref{eq:grpo_adv_label}, and the KL anchor to $\pi_{\mathrm{ref}}$ is treated as a deterministic offset as in~\eqref{eq:abc_grpo_update}.
At each iteration the per-group statistics $(\widehat{\norm{B_{t,\mathrm{G}}}^2}, \widehat\sigma_{t,\mathrm{G}}^2, \widehat\sigma_{t,\mathrm{G},f}^2, \widehat C_{t,\mathrm{G}})$ are estimated from the labeled prompt-group batch using the split-batch plug-in of~\eqref{eq:grpo_splitbatch} and substituted into the closed form~\eqref{eq:oracle_ab} to obtain $(a_t, b_t)$.
We sweep the labeled fraction over $\{0.05, 0.10, 0.30\}$ at a fixed teacher-noise level and average over seeds, reporting held-out GSM8K pass@$1$ on a disjoint evaluation split.
The four compared paradigms (Labeled-only, Naive combined, PP-SSL, ABC-Align) share the same rollout budget, optimizer, and KL coefficient; only the advantage-component gradient estimator differs.

\section{Two-Timescale Debiasing of the ABC-Align Estimator}
\label{app:two_timescale}

In this appendix we augment Algorithm~\ref{alg:abcssl} with an auxiliary fast-timescale update that asymptotically eliminates the residual squared bias appearing in Theorem~\ref{thm:main}, giving a convergence guarantee without a persistent bias floor. The construction is orthogonal to the choice of controller for $(a_t, b_t)$: it acts on the slow update through a debiased gradient $\widetilde g_t = \glam - (1-a_t)\xi_t$, leaving the controller unchanged.

\subsection{Motivation}
\label{app:tt_motivation}

Lemma~\ref{lem:bias} gives the conditional mean of the ABC-Align estimator as
\[
    \E_t[\glam] \;=\; \nabla\Ls(\theta_t) + (1-a_t)\,B(\theta_t), \qquad B(\theta) := \nabla\Lf(\theta) - \nabla\Ls(\theta),
\]
and Theorem~\ref{thm:main} correspondingly bounds the time-averaged squared gradient by
\[
    \frac{1}{T}\sum_{t=1}^T \E\norm{\nabla\Ls(\theta_t)}^2 \;\le\; \frac{2\Delta_0}{\eta T} \;+\; \frac{1}{T}\sum_{t=1}^T \E\bigl[(1{-}a_t)^2 \Bt^2\bigr] + \eta\beta\,\overline{\mathrm{Var}_t(\glam)}.
\]
The corrected-norm joint controller of \Secref{sec:plugin} attains the joint regret guarantee $\bar V_T \le \bar V_T^\dagger + O(T^{-1/2})$ against the best fixed pair in $\mathcal K$ (Proposition~\ref{prop:joint_regret}). But the per-round oracle of Proposition~\ref{prop:oracle} typically selects $a_t^\star < 1$ in order to trade controlled bias for variance reduction, and well-tuned pairs correspondingly retain an $\overline{(1-a_t)^2\Bt^2}$ contribution. Consequently, no matter how well the controller tracks the oracle, Theorem~\ref{thm:main} only guarantees convergence to a neighborhood of stationarity whose radius is controlled by this residual bias floor; the controller cannot remove it without abandoning the oracle (and hence its variance optimality).
The two-timescale construction below decouples these concerns: it eliminates the bias floor asymptotically while preserving whatever variance-optimality the controller delivers, since the fast-timescale update modifies the slow gradient without altering $(a_t, b_t)$.

A principled way to remove this floor is to augment the slow parameter update with a fast auxiliary iterate that tracks the unknown bias $B(\theta_t)$ online.
This is the classical two-timescale stochastic approximation paradigm of \citet{Borkar1997StochasticAW} and \citet{Borkar2008StochasticAA}: two coupled updates run at different rates so that, on the slow timescale, the fast iterate appears to have already converged.
The asymptotic theory of such schemes traces back to \citet{PolyakJuditsky1992AccelerationOS}, \citet{KondaTsitsiklis2003OnActorCritic}, and \citet{MokkademPelletier2006ConvergenceRO}.
A substantial recent effort has sharpened the finite-time theory: \citet{Doan2021NonlinearTT} obtains an $O(k^{-2/3})$ mean-square rate for nonlinear two-timescale stochastic approximation, \citet{HongWaiWangYang2023ATS} gives matching bilevel rates with explicit dependence on inner-problem conditioning, \citet{HanLiZhang2024FinitetimeDC} show decoupled convergence under a nested local-linearity condition, \citet{Doan2024FastNonlinearTT} attains the improved $O(1/k)$ rate via Ruppert--Polyak-style averaging of the operators, and \citet{Kwon2025TwoTimescaleLSA} analyze constant-stepsize variants via Wasserstein-geometric ergodicity.
Our debiasing scheme takes the same shape as a Robbins--Monro fixed-point iteration for $B(\theta)$ coupled with a slow parameter update on $\Ls$, and so inherits the TTSA structure.
Finally, the idea of debiasing a biased gradient by a running estimate that is itself updated online connects to the STORM-style momentum variance reduction of \citet{CutkoskyOrabona2019MomentumBV} and the broader debiased-SSL literature (\citealp{Chen2022DebiasedST}), though the present analysis does not rely on any of those techniques directly.

\subsection{Debiased estimator and algorithm}
\label{app:tt_algo}

Recall that the ABC-Align correction signal $\dt := \glab_t - \gnf_t$ satisfies $\E_t[-\dt] = B(\theta_t)$, so $-\dt$ is an unbiased (noisy) sample of the unknown bias.
We introduce an auxiliary variable $\xi_t \in \R^d$ that tracks $B(\theta_t)$ via the Robbins--Monro fixed-point update
\begin{equation}
    \xi_{t+1} \;=\; (1 - \gamma_t)\,\xi_t \;-\; \gamma_t\,\dt,
    \label{eq:tt_xi_update}
\end{equation}
with fast step size $\gamma_t > 0$.
Given $\xi_t$, we form the \emph{debiased} ABC-Align gradient estimator
\begin{equation}
    \widetilde{g}_t \;:=\; \glam \;-\; (1 - a_t)\,\xi_t,
    \label{eq:tt_debiased}
\end{equation}
and run the slow update
\begin{equation}
    \theta_{t+1} \;=\; \theta_t \;-\; \eta_t\,\widetilde{g}_t, \qquad \eta_t = o(\gamma_t).
    \label{eq:tt_slow_update}
\end{equation}
Because $\xi_t \in \mathcal{F}_{t-1}$ by construction, $\E_t[\xi_t] = \xi_t$, so the conditional mean of $\widetilde{g}_t$ is
\begin{equation}
    \E_t[\widetilde{g}_t] \;=\; \nabla\Ls(\theta_t) + (1-a_t)\bigl(B(\theta_t) - \xi_t\bigr) \;=\; \nabla\Ls(\theta_t) - (1-a_t)\,e_t,
    \label{eq:tt_condmean}
\end{equation}
where
\begin{equation}
    e_t \;:=\; \xi_t - B(\theta_t)
    \label{eq:tt_error_def}
\end{equation}
is the fast-timescale tracking error.
Driving $e_t \to 0$ in mean square therefore eliminates the bias of $\widetilde{g}_t$.

We package these updates in Algorithm~\ref{alg:two_timescale_abcssl}, which extends Algorithm~\ref{alg:abcssl} by a single $\R^d$-valued state $\xi_t$ and two scalar operations.

\begin{algorithm}[t]
\caption{Two-timescale ABC-Align with fast-timescale bias tracking}
\label{alg:two_timescale_abcssl}
\begin{algorithmic}[1]
\REQUIRE Teacher $f$, slow step sizes $\{\eta_t\}$, fast step sizes $\{\gamma_t\}$, iterations $T$
\STATE Initialize $\theta_1 \in \R^d$, $\xi_1 \leftarrow 0$, $(a_1, b_1) \leftarrow (1, 0)$ \hfill\COMMENT{\textcolor{magenta}{labeled-only at first step}}
\FOR{$t = 1, \ldots, T$}
    \STATE Draw labeled batch $\mathcal{B}_t^{\mathrm{lab}}$ (split into halves $A, B$) and unlabeled batch $\mathcal{B}_t^{\mathrm{unl}}$
    \STATE Compute $\glab_t$, $\gnf_t$, $\gtNf_t$, half-batch gradients $g_A, g_A^f, g_B, g_B^f$; set $\dt \leftarrow \glab_t - \gnf_t$, $c_t \leftarrow \gtNf_t - \gnf_t$, $d_A \leftarrow g_A - g_A^f$, $d_B \leftarrow g_B - g_B^f$
    \STATE $\glam \leftarrow \gnf_t + a_t\,\dt + b_t\,c_t$ \hfill\COMMENT{\textcolor{magenta}{ABC-Align gradient~\eqref{eq:abc_grad}}}
    \STATE $\widetilde{g}_t \leftarrow \glam - (1 - a_t)\,\xi_t$ \hfill\COMMENT{\textcolor{magenta}{debiased estimator~\eqref{eq:tt_debiased}}}
    \STATE $\theta_{t+1} \leftarrow \theta_t - \eta_t\,\widetilde{g}_t$ \hfill\COMMENT{\textcolor{magenta}{slow update~\eqref{eq:tt_slow_update}}}
    \STATE $\xi_{t+1} \leftarrow (1 - \gamma_t)\,\xi_t - \gamma_t\,\dt$ \hfill\COMMENT{\textcolor{magenta}{fast Robbins--Monro update~\eqref{eq:tt_xi_update}}}
    \STATE Form scalar primitives $\norm{\dt}^2, \norm{c_t}^2, \inner{\dt}{c_t}, \inner{\gnf_t}{\dt}, \inner{\gnf_t}{c_t}$, split-batch $\widehat H_t \leftarrow -\tfrac{1}{2}(\inner{g_A}{d_B} + \inner{g_B}{d_A})$
    \STATE $(a_{t+1}, b_{t+1}) \leftarrow$ one corrected-norm controller step (projected AdaGrad, Algorithm~\ref{alg:abcssl}) projected onto $\mathcal K$
\ENDFOR
\RETURN $\theta_\tau$ for uniform random $\tau \in \{1, \ldots, T\}$
\end{algorithmic}
\end{algorithm}

\begin{remark}[$\xi_t$ as an EMA-like bias tracker]
\label{rem:tt_ema}
The update~\eqref{eq:tt_xi_update} is precisely an exponential moving average of $-\dt$ with rate $\gamma_t$, so $\xi_t$ is an EMA-like quantity tracking the vector bias $B(\theta_t)$. Unlike the scalar EMA smoothing of $\widehat H_t$ used in the experiments (Remark~\ref{rem:ema_heuristic}), however, $\xi_t$ is not used by the controller: it is an auxiliary $\R^d$-valued debiasing variable that enters only the slow update~\eqref{eq:tt_slow_update} via $\widetilde g_t = \glam - (1-a_t)\xi_t$. The fast update runs in parallel with the controller and introduces no additional batch-gradient computation.
\end{remark}

\subsection{Assumptions}
\label{app:tt_assumptions}

In addition to Assumptions~\ref{asm:smooth} (smoothness of $\Ls$) and~\ref{asm:bounded} (bounded suboptimality $\Delta_0$), we require regularity on $B$, an iterate-boundedness condition on the debiased estimator, and a standard two-timescale step-size schedule.

\begin{assumption}[Smoothness of $\Lf$ and Lipschitz bias]
\label{asm:tt_Bsmooth}
The map $\Lf$ is $\beta_f$-smooth (i.e., $\nabla \Lf$ is $\beta_f$ Lipschitz continuous). Consequently $B = \nabla\Lf - \nabla\Ls$ is Lipschitz continuous with constant $L_B \leq \beta + \beta_f$---that is,
\[
    \norm{B(\theta) - B(\theta')} \;\le\; L_B\,\norm{\theta - \theta'} \quad \text{for all } \theta, \theta' \in \R^d.
\]
\end{assumption}

\begin{remark}[Noise structure of the bias sample]
\label{rem:tt_noise}
Define $\zeta_t := -\dt - B(\theta_t)$. By Lemma~\ref{lem:bias}, $\E_t[\zeta_t] = 0$, so $\zeta_t$ is a centered, conditionally unbiased sample of the bias. Moreover, $\zeta_t$ is precisely the centered labeled-batch estimate of $B(\theta_t)$, and the per-sample variance computation behind Proposition~\ref{prop:mse} gives
\[
    \E_t\norm{\zeta_t}^2 \;=\; \frac{\sigma_D^2(\theta_t)}{n},
    \qquad \sigma_D^2(\theta) := \sigma^2(\theta) + \sigma_f^2(\theta) - 2\,C(\theta) \;\ge\; 0,
\]
with $\sigma^2, \sigma_f^2, C$ from~\eqref{eq:sigma_sq}--\eqref{eq:C_def} evaluated at $\theta$. These statistics are defined pointwise in $\theta$ and need not be uniformly bounded, so the along-the-trajectory bound $\E_t\norm{\zeta_t}^2 \le \sigma_\zeta^2$ that the analysis uses is stated as part of Assumption~\ref{asm:tt_G} below.
\end{remark}

\begin{assumption}[Second-moment bounds]
\label{asm:tt_G}
There exist $G, \sigma_\zeta < \infty$ such that, for all $t \ge 1$, $\E\norm{\widetilde{g}_t}^2 \le G^2$ and $\E_t\norm{\zeta_t}^2 = \sigma_D^2(\theta_t)/n \le \sigma_\zeta^2$.
\end{assumption}

Assumption~\ref{asm:tt_G} is a standard iterate-boundedness-type hypothesis in the stochastic approximation literature~\citep{Borkar2008StochasticAA}. A sufficient primitive condition for both bounds is $(a)$~bounded per-sample gradients $\norm{\nabla\loss_\theta(x,y)} \le G_0$ uniformly in $\theta,x,y$, which bounds $\norm{\glam}$ and gives $\sigma_D^2(\theta) \le 16 G_0^2$ uniformly, combined with $(b)$~a uniform bound on $\norm{\xi_t}$, which follows from $(a)$~plus $\norm{B(\theta_t)} \le 2G_0$ plus induction on~\eqref{eq:tt_xi_update}. We state Assumption~\ref{asm:tt_G} directly in order to keep the analysis independent of these additional structural conditions.

\begin{assumption}[Step-size schedule]
\label{asm:tt_steps}
The step sizes satisfy $\gamma_t \in (0, 1]$, $\eta_t > 0$, $\gamma_t \to 0$, $\eta_t \to 0$, together with
\begin{equation}
    \sum_{t \ge 1} \gamma_t = \infty, \qquad \sum_{t \ge 1} \gamma_t^2 < \infty, \qquad \sum_{t \ge 1} \eta_t = \infty, \qquad \frac{\eta_t}{\gamma_t} \to 0, \qquad \sum_{t \ge 1} \frac{\eta_t^2}{\gamma_t} < \infty.
    \label{eq:tt_step_cond}
\end{equation}
\end{assumption}

A concrete schedule that satisfies~\eqref{eq:tt_step_cond} is $\gamma_t = c_1\, t^{-2/3}$ and $\eta_t = c_2\, t^{-1}$ with $0 < c_1 \le 1$ and $0 < c_2 \le 1/\beta$ (so that $\gamma_t \le 1$ and $\eta_t \le 1/\beta$ for all $t$); under this choice $\sum_t \eta_t = c_2 \sum_t t^{-1} = \infty$, $\eta_t/\gamma_t = (c_2/c_1)\,t^{-1/3} \to 0$, and $\eta_t^2/\gamma_t = (c_2^2/c_1)\,t^{-4/3}$ is summable.
This is the canonical decoupling used by~\citet{Borkar2008StochasticAA}, under which the standard two-timescale analysis yields an $O(k^{-2/3})$ mean-square tracking rate by direct substitution.
When a constant slow step size is preferred (to match Theorem~\ref{thm:main}), we obtain a polynomial finite-horizon rate by tuning both step sizes to the horizon $T$; see Theorem~\ref{thm:two_timescale_finite} below.

\subsection{Convergence guarantee}
\label{app:tt_theorem}

Define
\begin{equation}
    V_t \;:=\; \E\norm{e_t}^2
    \label{eq:tt_Vt_def}
\end{equation}
as the mean-square tracking error at time $t$.

\begin{theorem}[Convergence of two-timescale ABC-Align]
\label{thm:two_timescale}
The proof depends only on predictability of $(a_t, b_t)$ and is therefore unchanged from the original analysis under any predictable rule, including the corrected-norm joint controller of \Secref{sec:plugin}.
Suppose Assumptions~\ref{asm:smooth},~\ref{asm:bounded},~\ref{asm:tt_Bsmooth},~\ref{asm:tt_G} and step-size condition~\eqref{eq:tt_step_cond} hold, and let $\{(a_t, b_t)\}_{t=1}^T$ be any predictable sequence with $a_t \in [0,1]$, $b_t \ge 0$.
The iterates of Algorithm~\ref{alg:two_timescale_abcssl} with $\eta_t \le 1/\beta$ satisfy:
\begin{enumerate}[leftmargin=2em,label=(\roman*)]
\item \emph{(Fast-timescale tracking.)}
    \begin{equation}
        V_{t+1} \;\le\; (1 - \gamma_t/2)\,V_t \;+\; 4\,L_B^2 G^2\,\frac{\eta_t^2}{\gamma_t} \;+\; 2\,\sigma_\zeta^2\,\gamma_t^2,
        \label{eq:tt_tracking}
    \end{equation}
    and hence $V_t \to 0$; for the concrete schedule $\gamma_t = c_1 t^{-2/3}$, $\eta_t = c_2 t^{-1}$, moreover $V_t = O(t^{-2/3})$.
\item \emph{(Slow-timescale convergence.)}
    \begin{equation}
        \frac{1}{\sum_{s \le T}\eta_s}\sum_{t=1}^T \eta_t\,\E\norm{\nabla\Ls(\theta_t)}^2
        \;\le\; \frac{2\Delta_0}{\sum_{t \le T}\eta_t}
        + \frac{\sum_{t=1}^T \eta_t\,(1-a_t)^2 V_t}{\sum_{t \le T}\eta_t}
        + \frac{\beta\,\sum_{t=1}^T \eta_t^2\,\E\norm{\widetilde{g}_t}^2}{\sum_{t \le T}\eta_t}.
        \label{eq:tt_main_bound}
    \end{equation}
\item \emph{(Rate.)}
    For $\gamma_t = c_1 t^{-2/3}$, $\eta_t = c_2 t^{-1}$,
    \[
        \min_{1 \le t \le T}\E\norm{\nabla\Ls(\theta_t)}^2 \;\le\; \frac{1}{\sum_{t\le T}\eta_t}\sum_{t=1}^T\eta_t\,\E\norm{\nabla\Ls(\theta_t)}^2 \;=\; O\!\left(\frac{1}{\log T}\right).
    \]
\end{enumerate}
\end{theorem}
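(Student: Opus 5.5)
The plan is to analyze the tracking error $e_t = \xi_t - B(\theta_t)$ through a one-step recursion, then feed the result into the standard smooth descent argument used for Theorem~\ref{thm:main}, with $\E_t[\widetilde g_t]$ given by \eqref{eq:tt_condmean}.

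\textbf{(i) Fast-timescale tracking.} First I write $-\dt = B(\theta_t) + \zeta_t$ as in Remark~\ref{rem:tt_noise}. Substituting into \eqref{eq:tt_xi_update} gives
\[
e_{t+1} = (1-\gamma_t)e_t + \gamma_t\zeta_t + \Delta_t, \qquad \Delta_t := B(\theta_t) - B(\theta_{t+1}).
\]
By Assumption~\ref{asm:tt_Bsmooth} and the slow update, $\norm{\Delta_t} \le L_B\eta_t\norm{\widetilde g_t}$.

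The delicate point is that $\Delta_t$ depends on the round-$t$ batch, so it is correlated with $\zeta_t$. I therefore will not expand the cross term $\inner{\zeta_t}{\Delta_t}$. Instead I group $u_t := (1-\gamma_t)e_t + \gamma_t\zeta_t$ and apply Young's inequality:
\[
\norm{u_t+\Delta_t}^2 \le (1+\gamma_t/2)\norm{u_t}^2 + (1+2/\gamma_t)\norm{\Delta_t}^2 .
\]
Since $e_t$ is $\mathcal F_{t-1}$-measurable and $\E_t\zeta_t = 0$, we have $\E\norm{u_t}^2 = (1-\gamma_t)^2 V_t + \gamma_t^2\E\norm{\zeta_t}^2 \le (1-\gamma_t)^2 V_t + \gamma_t^2\sigma_\zeta^2$.

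To close the recursion I use three facts:
\begin{itemize}
\item $(1+\gamma_t/2)(1-\gamma_t)^2 \le 1-\gamma_t/2$;
\item $1+2/\gamma_t \le 3/\gamma_t \le 4/\gamma_t$, valid since $\gamma_t \le 1$;
\item $\E\norm{\widetilde g_t}^2 \le G^2$ from Assumption~\ref{asm:tt_G}.
\end{itemize}
Together these yield \eqref{eq:tt_tracking}, with some slack in the constants.

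Then $V_t \to 0$ follows from a deterministic Robbins--Siegmund/Chung-type lemma. The forcing terms $\eta_t^2/\gamma_t$ and $\gamma_t^2$ are summable while $\sum_t \gamma_t = \infty$. For the concrete schedule both forcing terms are $O(t^{-4/3})$ and the contraction is $1 - (c_1/2)t^{-2/3}$. An induction hypothesis $V_t \le K t^{-2/3}$ then closes for large enough $K$, since $t^{-2/3} - (t+1)^{-2/3} = O(t^{-5/3})$ is dominated by the contraction gain $\asymp t^{-4/3}$.

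\textbf{(ii) Slow-timescale convergence.} Write $\mu_t = \nabla\Ls(\theta_t)$. $\beta$-smoothness gives
\[
\E_t\Ls(\theta_{t+1}) \le \Ls(\theta_t) - \eta_t\inner{\mu_t}{\mu_t - (1-a_t)e_t} + \tfrac{\beta\eta_t^2}{2}\E_t\norm{\widetilde g_t}^2 .
\]
Using $\inner{\mu}{\mu - v} \ge \tfrac12\norm{\mu}^2 - \tfrac12\norm{v}^2$ gives
\[
\eta_t\norm{\mu_t}^2 \le 2\bigl(\Ls(\theta_t) - \E_t\Ls(\theta_{t+1})\bigr) + \eta_t(1-a_t)^2\norm{e_t}^2 + \beta\eta_t^2\E_t\norm{\widetilde g_t}^2 .
\]
Taking total expectations, summing, telescoping against $\Delta_0$, and dividing by $\sum_s \eta_s$ gives \eqref{eq:tt_main_bound}.

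Because $a_t$ is random, the term $\E[(1-a_t)^2\norm{e_t}^2]$ is read as such. Where a clean bound is needed, I use $(1-a_t)^2 \le 1$ (as $a_t\in[0,1]$) to replace it by $V_t$. This is the one place where I need to be careful about how the statement's $(1-a_t)^2V_t$ is interpreted.

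\textbf{(iii) Rate.} With $\eta_t = c_2/t$ we have $\sum_{t\le T}\eta_t \asymp c_2\log T$. The numerator terms are controlled as follows:
\begin{itemize}
\item $\sum_t \eta_t V_t = \sum_t O(t^{-5/3}) < \infty$, using part (i);
\item $\beta\sum_t \eta_t^2 G^2 < \infty$;
\item $2\Delta_0$ is constant.
\end{itemize}
Hence the weighted average is $O(1/\log T)$. The minimum over $t$ is bounded by the weighted average trivially.

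I expect the main obstacle to be the correlated perturbation $\Delta_t$ in step (i). The Young-splitting with parameter $\gamma_t/2$ is exactly what produces the $\eta_t^2/\gamma_t$ forcing term, and it is why the schedule needs $\sum_t \eta_t^2/\gamma_t < \infty$.
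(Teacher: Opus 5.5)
Your proposal is correct and follows the paper's proof closely: the same error recursion for $e_t$ closed by Young's inequality with parameter $\gamma_t/2$, a Chung-type comparison for $V_t\to 0$, the same smoothness-based descent for (ii), and the same summation for (iii). The differences are only tactical: grouping $(1-\gamma_t)e_t+\gamma_t\zeta_t$ before applying Young avoids the correlated cross term $\inner{\Delta B_t}{\zeta_t}$ that the paper bounds separately and gives slightly smaller constants; your inequality $\inner{\mu}{\mu-v}\ge\tfrac12\norm{\mu}^2-\tfrac12\norm{v}^2$ in (ii) does not actually need $\eta_t\le 1/\beta$; and you close the $O(t^{-2/3})$ rate by induction instead of splitting the horizon at $t/2$.

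Your caution about the random $a_t$ is well placed: the descent argument produces $\E[(1-a_t)^2\norm{e_t}^2]$, which the paper silently writes as $(1-a_t)^2V_t$. Reading the term that way, or bounding it by $V_t$ via $(1-a_t)^2\le 1$ as the paper itself does in its rate step, is the right fix.
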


\noindent In contrast to Theorem~\ref{thm:main}, whose bound tolerates a persistent bias floor $\overline{(1-a_t)^2\Bt^2}$, Theorem~\ref{thm:two_timescale} replaces this term by $(1-a_t)^2 V_t$ with $V_t \to 0$, so the bias contribution vanishes asymptotically without constraining $a_t$.

\begin{proof}
The proof mirrors the structure of Appendix~\ref{app:thm_main}: we first control the fast-timescale tracking error $e_t$ (Step~1), then run a biased-SGD descent argument on the slow timescale (Step~2), and finally combine the two into the claimed rate (Step~3).

\medskip
\noindent\textbf{Step 1: Fast-timescale tracking error.}
By Remark~\ref{rem:tt_noise} and Assumption~\ref{asm:tt_G}, $-\dt = B(\theta_t) + \zeta_t$ with $\E_t[\zeta_t] = 0$ and $\E_t\norm{\zeta_t}^2 \le \sigma_\zeta^2$.
Substituting into~\eqref{eq:tt_xi_update},
\[
    \xi_{t+1} = (1-\gamma_t)\,\xi_t + \gamma_t\bigl(B(\theta_t) + \zeta_t\bigr).
\]
Subtracting $B(\theta_{t+1})$ from both sides,
\begin{align*}
    e_{t+1} \;&=\; \xi_{t+1} - B(\theta_{t+1}) \\
    &=\; (1-\gamma_t)\xi_t + \gamma_t B(\theta_t) + \gamma_t \zeta_t - B(\theta_{t+1}) \\
    &=\; (1-\gamma_t)\bigl(\xi_t - B(\theta_t)\bigr) + (1-\gamma_t)B(\theta_t) + \gamma_t B(\theta_t) - B(\theta_{t+1}) + \gamma_t\zeta_t \\
    &=\; (1-\gamma_t)\,e_t \;-\; \Delta B_t \;+\; \gamma_t\,\zeta_t,
\end{align*}
where $\Delta B_t := B(\theta_{t+1}) - B(\theta_t)$. This yields the key recursion
\begin{equation}
    e_{t+1} \;=\; (1-\gamma_t)\,e_t \;-\; \Delta B_t \;+\; \gamma_t\,\zeta_t.
    \label{eq:tt_recursion}
\end{equation}
By Assumption~\ref{asm:tt_Bsmooth} and~\eqref{eq:tt_slow_update}, $\norm{\Delta B_t} \le L_B\,\eta_t\,\norm{\widetilde g_t}$, and by Assumption~\ref{asm:tt_G},
\begin{equation}
    \E\norm{\Delta B_t}^2 \;\le\; L_B^2\,G^2\,\eta_t^2.
    \label{eq:tt_DeltaB}
\end{equation}
Next we take expectations of $\norm{e_{t+1}}^2$.
Expanding the square in~\eqref{eq:tt_recursion} (recall $\gamma_t \in (0,1]$ from Assumption~\ref{asm:tt_steps}),
\begin{equation}
    \norm{e_{t+1}}^2
    = \norm{(1-\gamma_t)e_t - \Delta B_t}^2
    + 2\gamma_t(1-\gamma_t)\inner{e_t}{\zeta_t}
    - 2\gamma_t\inner{\Delta B_t}{\zeta_t}
    + \gamma_t^2\norm{\zeta_t}^2.
    \label{eq:tt_expand}
\end{equation}
We handle the two cross terms separately. For the first, $e_t \in \mathcal{F}_{t-1}$ and $\E_t[\zeta_t]=0$ by Lemma~\ref{lem:bias}, so by the tower rule
\begin{equation}
    \E\!\bigl[\inner{e_t}{\zeta_t}\bigr]
    \;=\; \E\!\bigl[\inner{e_t}{\E_t[\zeta_t]}\bigr]
    \;=\; 0.
    \label{eq:tt_cross1}
\end{equation}
For the second, $\Delta B_t = B(\theta_{t+1})-B(\theta_t)$ depends on $\theta_{t+1} = \theta_t - \eta_t\widetilde g_t$ and hence on both batches at step~$t$, so $\langle \Delta B_t, \zeta_t\rangle$ is not zero-mean. Young's inequality with parameter $\gamma_t^{-1}$ bounds it by
\begin{equation}
    2\gamma_t\,\bigl|\inner{\Delta B_t}{\zeta_t}\bigr|
    \;\le\; \norm{\Delta B_t}^2 + \gamma_t^2\norm{\zeta_t}^2.
    \label{eq:tt_cross2}
\end{equation}
The leading quadratic term in~\eqref{eq:tt_expand} is controlled by Young's inequality with parameter $\rho = \gamma_t/2$:
\begin{align}
    \norm{(1-\gamma_t)e_t - \Delta B_t}^2
    &\le (1+\gamma_t/2)(1-\gamma_t)^2\,\norm{e_t}^2 + (1 + 2/\gamma_t)\,\norm{\Delta B_t}^2 \notag \\
    &\le (1 - \gamma_t/2)\,\norm{e_t}^2 + \frac{3}{\gamma_t}\,\norm{\Delta B_t}^2,
    \label{eq:tt_A_bound}
\end{align}
where the last inequality uses the direct expansion $(1+\gamma_t/2)(1-\gamma_t)^2 = 1 - \tfrac{3}{2}\gamma_t + \tfrac{1}{2}\gamma_t^3 \le 1 - \gamma_t/2$ for $\gamma_t \in (0,1]$ (since then $\gamma_t^3 \le \gamma_t$), together with $1 + 2/\gamma_t \le 3/\gamma_t$.
Substituting~\eqref{eq:tt_cross1}--\eqref{eq:tt_A_bound} into~\eqref{eq:tt_expand} and taking total expectations,
\begin{equation}
    \E\norm{e_{t+1}}^2 \;\le\; (1 - \gamma_t/2)\,\E\norm{e_t}^2 \;+\; \frac{4}{\gamma_t}\,\E\norm{\Delta B_t}^2 \;+\; 2\gamma_t^2\,\sigma_\zeta^2,
    \label{eq:tt_Esq}
\end{equation}
where the coefficient $4/\gamma_t$ on $\E\norm{\Delta B_t}^2$ combines the $3/\gamma_t$ from~\eqref{eq:tt_A_bound} with the $1 \le 1/\gamma_t$ contribution from~\eqref{eq:tt_cross2} (using $\gamma_t \le 1$), and $2\gamma_t^2\sigma_\zeta^2$ combines the $\gamma_t^2\norm{\zeta_t}^2$ term in~\eqref{eq:tt_expand} with the one in~\eqref{eq:tt_cross2}, followed by the second-moment bound from Assumption~\ref{asm:tt_G}.
Substituting~\eqref{eq:tt_DeltaB} into~\eqref{eq:tt_Esq} gives precisely~\eqref{eq:tt_tracking}:
\[
    V_{t+1} \;\le\; (1 - \gamma_t/2)\,V_t \;+\; 4\,L_B^2 G^2\,\frac{\eta_t^2}{\gamma_t} \;+\; 2\sigma_\zeta^2\,\gamma_t^2.
\]

Write $A_t := 4L_B^2 G^2\,\eta_t^2/\gamma_t + 2\sigma_\zeta^2\gamma_t^2$, so that~\eqref{eq:tt_tracking} reads $V_{t+1} \le (1 - \gamma_t/2)V_t + A_t$, and note
\[
    \frac{A_t}{\gamma_t} \;=\; 4L_B^2 G^2\,\frac{\eta_t^2}{\gamma_t^2} \;+\; 2\sigma_\zeta^2\,\gamma_t \;\longrightarrow\; 0
\]
by $\eta_t/\gamma_t \to 0$ and $\gamma_t \to 0$.

\emph{Convergence $V_t \to 0$ under~\eqref{eq:tt_step_cond}.}
Fix $\varepsilon > 0$ and pick $t_\varepsilon$ with $A_t \le \varepsilon\gamma_t/2$ for all $t \ge t_\varepsilon$. Then for $t \ge t_\varepsilon$,
\[
    V_{t+1} - \varepsilon \;\le\; (1 - \gamma_t/2)\,V_t + \tfrac{\varepsilon\gamma_t}{2} - \varepsilon \;=\; (1 - \gamma_t/2)\,(V_t - \varepsilon),
\]
and since $\sum_t \gamma_t = \infty$ implies $\prod_{s \ge t_\varepsilon}(1 - \gamma_s/2) = 0$, we get $\limsup_t V_t \le \varepsilon$; as $\varepsilon > 0$ was arbitrary, $V_t \to 0$. (This is the standard comparison argument of Chung; see, e.g.,~\citealp{Borkar2008StochasticAA}.) The initial condition $V_1 = \norm{B(\theta_1)}^2$ (since $\xi_1 = 0$) is forgotten through the same contraction.

\emph{Rate for the concrete schedule.}
Let $\gamma_t = c_1 t^{-2/3}$, $\eta_t = c_2 t^{-1}$. First, $A_s = (4L_B^2G^2 c_2^2/c_1)\,s^{-4/3} + 2\sigma_\zeta^2 c_1^2\,s^{-4/3}$ is summable, so unrolling~\eqref{eq:tt_tracking} crudely gives $\sup_t V_t \le V_1 + \sum_{s \ge 1} A_s =: \bar V_{\sup} < \infty$. Now split the horizon at $\lceil t/2 \rceil$: unrolling~\eqref{eq:tt_tracking} from $\lceil t/2 \rceil$ to $t$,
\[
    V_t \;\le\; \Bigl[\prod_{s=\lceil t/2\rceil}^{t-1}\bigl(1 - \tfrac{\gamma_s}{2}\bigr)\Bigr]\,V_{\lceil t/2\rceil}
    \;+\; \max_{\lceil t/2\rceil \le s < t}\frac{A_s}{\gamma_s}\;\sum_{s=\lceil t/2\rceil}^{t-1}\gamma_s\!\!\prod_{r=s+1}^{t-1}\bigl(1 - \tfrac{\gamma_r}{2}\bigr),
\]
where we bounded $A_s \le \gamma_s \max_{s'}(A_{s'}/\gamma_{s'})$ over the window. The weighted sum telescopes:
\[
    \sum_{s=\lceil t/2\rceil}^{t-1}\gamma_s\!\!\prod_{r=s+1}^{t-1}\bigl(1 - \tfrac{\gamma_r}{2}\bigr)
    \;=\; 2\sum_{s=\lceil t/2\rceil}^{t-1}\Bigl[\prod_{r=s+1}^{t-1}\bigl(1 - \tfrac{\gamma_r}{2}\bigr) - \prod_{r=s}^{t-1}\bigl(1 - \tfrac{\gamma_r}{2}\bigr)\Bigr] \;\le\; 2.
\]
Since $\sum_{s=\lceil t/2\rceil}^{t-1}\gamma_s/2 = \Theta(c_1 t^{1/3})$, the first term is at most $e^{-\Theta(t^{1/3})}\,\bar V_{\sup} = o(t^{-2/3})$, while $A_s/\gamma_s = (4L_B^2G^2 c_2^2/c_1^2 + 2\sigma_\zeta^2 c_1)\,s^{-2/3}$ gives $\max_{\lceil t/2\rceil \le s < t} A_s/\gamma_s = O(t^{-2/3})$. Hence $V_t = O(t^{-2/3})$.

\medskip
\noindent\textbf{Step 2: Slow-timescale descent.}
By Assumption~\ref{asm:smooth} and~\eqref{eq:tt_slow_update},
\[
    \Ls(\theta_{t+1}) \;\le\; \Ls(\theta_t) \;-\; \eta_t\inner{\nabla\Ls(\theta_t)}{\widetilde g_t} \;+\; \frac{\beta\eta_t^2}{2}\norm{\widetilde g_t}^2.
\]
Taking $\E_t[\cdot]$ and writing $\mu_t := \nabla\Ls(\theta_t)$ and $\bar g_t := \E_t[\widetilde g_t]$, we have $\bar g_t = \mu_t - (1-a_t)\,e_t$ by~\eqref{eq:tt_condmean}, and $\E_t[\norm{\widetilde g_t}^2] = \norm{\bar g_t}^2 + \mathrm{Var}_t(\widetilde g_t)$. Thus
\begin{equation}
    \E_t[\Ls(\theta_{t+1})] \;\le\; \Ls(\theta_t) \;-\; \eta_t \inner{\mu_t}{\bar g_t} \;+\; \frac{\beta\eta_t^2}{2}\bigl(\norm{\bar g_t}^2 + \mathrm{Var}_t(\widetilde g_t)\bigr).
    \label{eq:tt_cond_descent}
\end{equation}
The polarization identity gives
\begin{equation}
    \inner{\mu_t}{\bar g_t} \;=\; \tfrac{1}{2}\bigl(\norm{\mu_t}^2 + \norm{\bar g_t}^2 - \norm{\mu_t - \bar g_t}^2\bigr) \;=\; \tfrac{1}{2}\bigl(\norm{\mu_t}^2 + \norm{\bar g_t}^2 - (1-a_t)^2\norm{e_t}^2\bigr),
    \label{eq:tt_polar}
\end{equation}
where we used $\mu_t - \bar g_t = (1-a_t)\,e_t$. Substituting~\eqref{eq:tt_polar} into~\eqref{eq:tt_cond_descent},
\begin{equation}
    \E_t[\Ls(\theta_{t+1})] \;\le\; \Ls(\theta_t) - \frac{\eta_t}{2}\norm{\mu_t}^2 - \frac{\eta_t}{2}(1-\eta_t\beta)\norm{\bar g_t}^2 + \frac{\eta_t}{2}(1-a_t)^2\norm{e_t}^2 + \frac{\beta\eta_t^2}{2}\mathrm{Var}_t(\widetilde g_t).
    \label{eq:tt_after_polar}
\end{equation}
Since $\eta_t \le 1/\beta$, we have $1-\eta_t\beta \ge 0$, so the $-\tfrac{\eta_t}{2}(1-\eta_t\beta)\norm{\bar g_t}^2$ term is non-positive and can be dropped. Using $\beta\eta_t^2/2 \cdot \mathrm{Var}_t(\widetilde g_t) \le \beta\eta_t^2/2 \cdot \E_t\norm{\widetilde g_t}^2$ and rearranging,
\begin{equation}
    \norm{\nabla\Ls(\theta_t)}^2 \;\le\; \frac{2}{\eta_t}\bigl(\Ls(\theta_t) - \E_t[\Ls(\theta_{t+1})]\bigr) + (1-a_t)^2\norm{e_t}^2 + \beta\eta_t\,\E_t\norm{\widetilde g_t}^2.
    \label{eq:tt_per_step}
\end{equation}
Taking total expectation, multiplying by $\eta_t$, and summing from $t=1$ to $T$,
\begin{align*}
    \sum_{t=1}^T \eta_t\,\E\norm{\nabla\Ls(\theta_t)}^2
    &\le 2\sum_{t=1}^T \bigl(\E[\Ls(\theta_t)] - \E[\Ls(\theta_{t+1})]\bigr) + \sum_{t=1}^T \eta_t(1-a_t)^2 V_t + \beta\sum_{t=1}^T \eta_t^2\,\E\norm{\widetilde g_t}^2 \\
    &\le 2\Delta_0 + \sum_{t=1}^T \eta_t(1-a_t)^2\,V_t + \beta\sum_{t=1}^T \eta_t^2\,\E\norm{\widetilde g_t}^2.
\end{align*}
Dividing by $\sum_{t\le T}\eta_t$ yields~\eqref{eq:tt_main_bound}.

\medskip
\noindent\textbf{Step 3: Combining the rates.}
Consider the schedule $\gamma_t = c_1 t^{-2/3}$, $\eta_t = c_2 t^{-1}$. Then $V_t = O(t^{-2/3})$ by Step~1, and
\begin{align*}
    \sum_{t=1}^T \eta_t &\asymp \log T, \\
    \sum_{t=1}^T \eta_t(1-a_t)^2 V_t &\le \sum_{t=1}^T \eta_t V_t \;=\; O\!\left(\sum_{t=1}^T t^{-1}\,t^{-2/3}\right) \;=\; O\!\left(\sum_{t=1}^T t^{-5/3}\right) \;=\; O(1), \\
    \sum_{t=1}^T \eta_t^2\,\E\norm{\widetilde g_t}^2 &\le G^2 \sum_{t=1}^T \eta_t^2 \;=\; O\!\left(\sum_{t=1}^T t^{-2}\right) \;=\; O(1).
\end{align*}
Combining with~\eqref{eq:tt_main_bound},
\[
    \min_{1\le t \le T}\E\norm{\nabla\Ls(\theta_t)}^2 \;\le\; \frac{1}{\sum_{t \le T}\eta_t}\sum_{t=1}^T \eta_t\,\E\norm{\nabla\Ls(\theta_t)}^2 \;=\; O\!\left(\frac{1}{\log T}\right).
\]
The $1/\log T$ rate reflects that our functional---the time-averaged squared gradient of a nonconvex $\Ls$---is harder to control than the mean-square tracking error, whose $O(t^{-2/3})$ rate we do recover in part~(i). Obtaining a polynomial-in-$T$ rate on $\min_t \E\norm{\nabla\Ls(\theta_t)}^2$ within the decaying-step regime would require either (a)~a schedule with $\eta_t$ decaying slower than $t^{-1}$ (e.g.\ $\eta_t = t^{-a}$, $\gamma_t = t^{-b}$ with $1/2 < b < a < 1$ and $2a - b > 1$, so that the slow step is asymptotically dominated by the fast one and $\sum \eta_t^2/\gamma_t < \infty$ is preserved), or (b)~additional structure such as a Polyak--{\L}ojasiewicz condition. The qualitative conclusion---that the bias term of Theorem~\ref{thm:main} is replaced by a vanishing tracking-error contribution---is already established by~\eqref{eq:tt_main_bound} combined with $V_t \to 0$; for a polynomial finite-time stationarity bound, we instead tune both step sizes to the horizon, as in Theorem~\ref{thm:two_timescale_finite} below.

\end{proof}

\paragraph{From asymptotic consistency to finite-horizon optimization.}
Theorem~\ref{thm:two_timescale} establishes asymptotic consistency: under classical decaying step sizes satisfying~\eqref{eq:tt_step_cond}, the $\eta_t$-weighted average of $\E\norm{\nabla\Ls(\theta_t)}^2$ vanishes as $T \to \infty$; for the concrete schedule $\gamma_t = c_1 t^{-2/3}$, $\eta_t = c_2 t^{-1}$, the rate is $O(1/\log T)$. The decaying-step regime is not, however, the right tool for a polynomial finite-time stationarity bound: with $\eta_t \equiv \eta$ fixed and $\gamma_t = c_1 t^{-2/3}$, the ratio $\eta_t/\gamma_t$ grows in $t$, the timescale-separation condition in~\eqref{eq:tt_step_cond} fails, and the $\eta^2 T^{2/3}$ piece of the tracking-error contribution does not vanish for any fixed $\eta > 0$. A cleaner approach is to fix a horizon $T$ and tune both \emph{constant} step sizes to $T$, optimizing the bias--variance--tracking trade-off explicitly.

\begin{theorem}[Finite-horizon tuned rate for two-timescale ABC-Align]
\label{thm:two_timescale_finite}
As in Theorem~\ref{thm:two_timescale}, the bound depends only on predictability of $(a_t, b_t)$ and applies under any predictable rule, including the corrected-norm joint controller of \Secref{sec:plugin}. Suppose Assumptions~\ref{asm:smooth},~\ref{asm:bounded},~\ref{asm:tt_Bsmooth}, and~\ref{asm:tt_G} hold. Fix a horizon $T \ge 1$, and let $\{(a_t, b_t)\}_{t=1}^T$ be any predictable sequence with $a_t \in [0, 1]$, $b_t \ge 0$. Run Algorithm~\ref{alg:two_timescale_abcssl} with constant step sizes $\eta_t \equiv \eta$ and $\gamma_t \equiv \gamma$, where $\eta \in (0, 1/\beta]$, $\gamma \in (0, 1]$, and $\eta \le \gamma$. Then
\begin{equation}
    \frac{1}{T}\sum_{t=1}^T \E\norm{\nabla\Ls(\theta_t)}^2
    \;\le\;
    \frac{2\Delta_0}{\eta T}
    \,+\, \frac{2 V_1}{\gamma T}
    \,+\, \frac{8\, L_B^2 G^2\,\eta^2}{\gamma^2}
    \,+\, 4\sigma_\zeta^2\,\gamma
    \,+\, \beta\eta G^2,
    \label{eq:tt_finite_bound}
\end{equation}
where $V_1 = \E\norm{e_1}^2 = \norm{B(\theta_1)}^2$. In particular, choosing $\eta = c_\eta T^{-1/2}$ and $\gamma = c_\gamma T^{-1/3}$ with $c_\eta \le 1/\beta$ and $c_\eta T^{-1/2} \le c_\gamma T^{-1/3} \le 1$ yields
\begin{equation}
    \frac{1}{T}\sum_{t=1}^T \E\norm{\nabla\Ls(\theta_t)}^2 \;=\; O\!\left(T^{-1/3}\right).
    \label{eq:tt_finite_rate}
\end{equation}
Equivalently, if $\tau$ is drawn uniformly from $\{1, \ldots, T\}$, then $\E\norm{\nabla\Ls(\theta_\tau)}^2 = O(T^{-1/3})$.
\end{theorem}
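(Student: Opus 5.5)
The plan is to reuse the two building blocks from the proof of Theorem~\ref{thm:two_timescale} with constant steps. For the slow timescale, I take the per-step inequality \eqref{eq:tt_per_step} with $\eta_t\equiv\eta\le 1/\beta$. I bound $(1-a_t)^2\le 1$ and $\E\norm{\widetilde g_t}^2\le G^2$ (Assumption~\ref{asm:tt_G}), sum over $t=1,\dots,T$, telescope the descent terms against $\Delta_0$ (Assumption~\ref{asm:bounded}), and divide by $T$. This gives
\[
\frac1T\sum_{t=1}^T\E\norm{\nabla\Ls(\theta_t)}^2\le \frac{2\Delta_0}{\eta T}+\frac1T\sum_{t=1}^T V_t+\beta\eta G^2 .
\]
All that then remains is to control the time-averaged tracking error $\frac1T\sum_t V_t$.

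For the fast timescale, I note that the derivation of \eqref{eq:tt_tracking} in Step~1 used only $\gamma_t\in(0,1]$, Assumption~\ref{asm:tt_Bsmooth}, Assumption~\ref{asm:tt_G}, and the martingale property of $\zeta_t$. It never used the summability conditions \eqref{eq:tt_step_cond}. So with constant steps it reads $V_{t+1}\le(1-\gamma/2)V_t+A$, where $A:=4L_B^2G^2\eta^2/\gamma+2\sigma_\zeta^2\gamma^2$. Rather than unrolling, I sum the recursion over $t=1,\dots,T$:
\[
\frac{\gamma}{2}\sum_{t=1}^T V_t\le V_1-V_{T+1}+TA\le V_1+TA .
\]
Dividing by $T\gamma/2$ gives
\[
\frac1T\sum_t V_t\le \frac{2V_1}{\gamma T}+\frac{8L_B^2G^2\eta^2}{\gamma^2}+4\sigma_\zeta^2\gamma,
\]
which are exactly the middle three terms of \eqref{eq:tt_finite_bound}. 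Since $\xi_1=0$, we have $V_1=\norm{B(\theta_1)}^2$. The hypothesis $\eta\le\gamma$ is not strictly needed by this algebra, but I keep it as the timescale-separation condition, so that $\eta^2/\gamma^2\le1$ and the tracking term is meaningful.

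For the rate, I substitute $\eta=c_\eta T^{-1/2}$ and $\gamma=c_\gamma T^{-1/3}$ term by term:
\[
\frac{2\Delta_0}{\eta T}=O(T^{-1/2}),\quad \frac{2V_1}{\gamma T}=O(T^{-2/3}),\quad \frac{\eta^2}{\gamma^2}=O(T^{-1/3}),\quad \gamma=O(T^{-1/3}),\quad \beta\eta G^2=O(T^{-1/2}).
\]
The maximum is $O(T^{-1/3})$. The side conditions $c_\eta\le1/\beta$ and $\eta\le\gamma\le1$ make the theorem's hypotheses hold. The statement about $\tau$ uniform on $\{1,\dots,T\}$ follows immediately, because $\E\norm{\nabla\Ls(\theta_\tau)}^2$ equals the time average.

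The main obstacle is confirming that Step~1's recursion transfers verbatim. The cross term $\langle\Delta B_t,\zeta_t\rangle$ is not zero-mean, and it is handled by Young's inequality using only $\gamma\le1$. The bound $\E\norm{\Delta B_t}^2\le L_B^2G^2\eta^2$ uses the unconditional second-moment bound on $\widetilde g_t$. I would re-check both points with $\gamma_t\equiv\gamma$ before relying on the summed recursion.
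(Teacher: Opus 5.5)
Your proposal is correct and follows the paper's proof: it uses the same slow-timescale bound from \eqref{eq:tt_per_step} with $(1-a_t)^2\le 1$ and $\E\norm{\widetilde g_t}^2\le G^2$, combined with the constant-step version of the tracking recursion \eqref{eq:tt_tracking}. The only cosmetic difference is that you sum the recursion and telescope ($\tfrac{\gamma}{2}\sum_t V_t\le V_1+TA$), whereas the paper unrolls it and bounds the geometric series by $2/\gamma$; both give exactly the same three middle terms of \eqref{eq:tt_finite_bound}.
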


\begin{proof}
With constant $\gamma_t \equiv \gamma$, the fast-timescale recursion~\eqref{eq:tt_tracking} from Step~1 of the proof of Theorem~\ref{thm:two_timescale} becomes
\[
    V_{t+1} \;\le\; (1 - \gamma/2)\,V_t \;+\; \frac{4\, L_B^2 G^2\,\eta^2}{\gamma} \;+\; 2\sigma_\zeta^2\,\gamma^2.
\]
Unrolling and bounding the geometric tail by $\sum_{s \ge 0}(1 - \gamma/2)^s = 2/\gamma$,
\[
    V_t \;\le\; (1 - \gamma/2)^{t-1}\,V_1 \;+\; \frac{8\, L_B^2 G^2\,\eta^2}{\gamma^2} \;+\; 4\sigma_\zeta^2\,\gamma.
\]
Averaging over $t = 1, \ldots, T$ and using $\sum_{t=1}^T (1-\gamma/2)^{t-1} \le 2/\gamma$,
\begin{equation}
    \frac{1}{T}\sum_{t=1}^T V_t \;\le\; \frac{2 V_1}{\gamma T} \;+\; \frac{8\, L_B^2 G^2\,\eta^2}{\gamma^2} \;+\; 4\sigma_\zeta^2\,\gamma.
    \label{eq:tt_const_Vavg}
\end{equation}
Substituting into the slow-timescale bound~\eqref{eq:tt_main_bound} with constant $\eta_t \equiv \eta$ (so $\sum_t \eta_t = \eta T$), and using $(1-a_t)^2 \le 1$ together with Assumption~\ref{asm:tt_G},
\[
    \frac{1}{T}\sum_{t=1}^T \E\norm{\nabla\Ls(\theta_t)}^2
    \;\le\; \frac{2\Delta_0}{\eta T} + \frac{1}{T}\sum_{t=1}^T V_t + \beta\eta G^2.
\]
Combining with~\eqref{eq:tt_const_Vavg} yields~\eqref{eq:tt_finite_bound}. For the rate~\eqref{eq:tt_finite_rate}, the five terms with $\eta = c_\eta T^{-1/2}$, $\gamma = c_\gamma T^{-1/3}$ scale as $T^{-1/2}, T^{-2/3}, T^{-1/3}, T^{-1/3}, T^{-1/2}$ respectively, and the slowest decay $T^{-1/3}$ dominates.
\end{proof}

\paragraph{Three regimes.}
Theorems~\ref{thm:two_timescale} and~\ref{thm:two_timescale_finite} answer different questions and are not directly comparable.
\begin{itemize}[leftmargin=2em]
\item \emph{Asymptotic consistency} (Theorem~\ref{thm:two_timescale}): a classical decaying-step schedule satisfying~\eqref{eq:tt_step_cond} drives the $\eta_t$-weighted average of $\E\norm{\nabla\Ls(\theta_t)}^2$ to zero; for the concrete schedule with $\eta_t = c_2 t^{-1}$ the rate is $O(1/\log T)$, with averaging weights decaying like $1/t$.
\item \emph{Finite-horizon optimization} (Theorem~\ref{thm:two_timescale_finite}): fixing a budget $T$ and tuning the two constant step sizes to $T$ explicitly trades off bias, variance, and tracking error, yielding the polynomial rate $O(T^{-1/3})$ on the uniformly averaged ($1/T$-weighted) gradient norm.
\item \emph{Truly non-asymptotic / anytime bounds} would require step-size choices independent of any future horizon and, in the nonconvex setting, typically demand additional structure (e.g., a Polyak--{\L}ojasiewicz condition). We do not establish such a guarantee here.
\end{itemize}
The $O(1/\log T)$ vs.\ $O(T^{-1/3})$ gap reflects these different question regimes---different step-size families, different averaging weights, different notions of progress---and is not a deficiency of either result.

\subsection{Practical considerations}
\label{app:tt_practical}

\paragraph{Memory and compute overhead.}
Algorithm~\ref{alg:two_timescale_abcssl} requires storing the single vector $\xi_t \in \R^d$, identical in size to the model gradient, and performs two scalar-vector operations per step ($\xi_t$-update and subtraction in~\eqref{eq:tt_debiased}). No additional batch gradients are computed beyond those already required by Algorithm~\ref{alg:abcssl}, so the per-iteration cost is dominated by the same batch-gradient evaluations as vanilla ABC-Align under the corrected-norm controller.

\paragraph{Initialization.}
We initialize $\xi_1 = 0$, which corresponds to the prior $B(\theta_1) \approx 0$ (equivalently, an unbiased teacher at initialization). The first-step debiased estimator then reduces to $\widetilde g_1 = \glam$, matching Algorithm~\ref{alg:abcssl}; this is consistent with the first-step choice $(a_1, b_1) = (1, 0)$ already used in ABC-Align.

\paragraph{Interaction with the corrected-norm controller.}
The fast-timescale update~\eqref{eq:tt_xi_update} and the joint corrected-norm controller of \Secref{sec:plugin} operate independently: $\xi_t$ debiases the slow update, while the controller minimizes the per-step MSE in $(a_t, b_t)$. They share only the $\dt$ stream---$\xi_t$ consumes it as the bias proxy $\E_t[-\dt] = B(\theta_t)$, while the controller consumes it for the scalar primitives $\norm{\dt}^2, \inner{\dt}{c_t}, \inner{\gnf_t}{\dt}$ and for the half-batch construction of $\widehat H_t$.

\paragraph{Constant-stepsize regime and connections to recent work.}
A constant slow step size $\eta_t \equiv \eta$ and constant fast step $\gamma_t \equiv \gamma$ with $\eta \ll \gamma$ matches the setting analyzed in~\citet{Kwon2025TwoTimescaleLSA}, whose Wasserstein-ergodicity machinery implies that $(\theta_t, \xi_t)$ admits a unique joint stationary distribution and gives explicit bounds on the bias and variance of each iterate in terms of the two step sizes. Porting those techniques to our nonlinear nonconvex setting---where $\Ls$ is $\beta$-smooth but not strongly monotone in $\theta$---and pairing them with the Ruppert--Polyak averaging of~\citet{Doan2024FastNonlinearTT} to reach an $O(1/T)$ rate is a natural direction for future work.

\paragraph{Caveats.}
The main hypotheses that deserve scrutiny are (i) Assumption~\ref{asm:tt_Bsmooth}, which requires both $\Ls$ and $\Lf$ to be $\beta$-smooth; (ii) Assumption~\ref{asm:tt_G}, iterate-boundedness and noise-second-moment hypotheses ($\E\norm{\widetilde g_t}^2 \le G^2$ and $\E_t\norm{\zeta_t}^2 \le \sigma_\zeta^2$ along the trajectory) that are standard in the stochastic approximation literature but not proved from first principles here; and (iii) the step-size schedule~\eqref{eq:tt_step_cond}, whose $\eta_t/\gamma_t \to 0$ requirement dictates that the slow timescale is asymptotically dominated by the fast one. None of these conditions are required for Theorem~\ref{thm:main} itself; they are the price of removing the residual bias floor.

\end{document}